\declaretheorem[name=Theorem,numberwithin=section]{thm-rest}
\declaretheorem[name=Lemma,numberwithin=section, sibling=thm-rest]{lem-rest}
\declaretheorem[name=Corollary,numberwithin=section, sibling=thm-rest]{cor-rest}
\declaretheorem[name=Definition,numberwithin=section, sibling=thm-rest]{def-rest}
\declaretheorem[name=Proposition,numberwithin=section, sibling=thm-rest]{prop-rest}
\declaretheorem[name=Assumption,numberwithin=section, sibling=thm-rest]{ass-rest}
\newcommand{\myparagraph}{\textbf}
\definecolor{mymaroon}{rgb}{0.6, 0, 0}
\definecolor{canard}{RGB}{0, 116, 128}
\definecolor{groseille}{RGB}{181, 31, 31}
\definecolor{taupe}{RGB}{65, 61, 58}
\definecolor{perle}{RGB}{202, 199, 199}
\definecolor{rouge}{RGB}{255,0,0}
\definecolor{leman}{RGB}{0,167,59}
\icmltitlerunning{Learning In-context $\pmb{n}$-grams with Transformers: Sub-$\pmb{n}$-grams Are Near-stationary Points}
\begin{document}

\twocolumn[
\icmltitle{\texorpdfstring{Learning In-context $\pmb{n}$-grams with Transformers: \\ Sub-$\pmb{n}$-grams Are Near-stationary Points}{Learning In-context \textit{n}-grams with Transformers: Sub-\textit{n}-grams Are Near-stationary Points}}



\icmlsetsymbol{equal}{*}

\begin{icmlauthorlist}
\icmlauthor{Aditya Varre}{equal,TML}
\icmlauthor{Gizem Yüce}{equal,TML}
\icmlauthor{Nicolas Flammarion}{TML}
\end{icmlauthorlist}

\icmlaffiliation{TML}{Theory of Machine Learning Lab, EPFL, Switzerland}

\icmlcorrespondingauthor{Aditya Varre}{aditya.varre@epfl.ch}
\icmlcorrespondingauthor{Gizem Yüce}{gizem.yuce@epfl.ch}

\icmlkeywords{Machine Learning, ICML, in-context learning, Markov chains, n-gram language models}

\vskip 0.3in
]


\printAffiliationsAndNotice{\icmlEqualContribution} 

\begin{abstract}
Motivated by empirical observations of prolonged plateaus and stage-wise progression during training, we investigate the loss landscape of transformer models trained on in-context next-token prediction tasks. In particular, we focus on learning in-context $n$-gram language models under cross-entropy loss, and establish a sufficient condition for parameter configurations to be stationary points. We then construct a set of parameter configurations for a simplified transformer model that represent $k$-gram estimators (for $k \leq n$), and show that the gradient of the population loss at these solutions vanishes in the limit of infinite sequence length and parameter norm. This reveals a key property of the loss landscape: {sub-$n$-grams are near-stationary points of the population cross-entropy loss}, offering theoretical insight into widely observed phenomena such as stage-wise learning dynamics and emergent phase transitions. These insights are further supported by numerical experiments that illustrate the learning dynamics of $n$-grams, characterized by discrete transitions between near-stationary solutions.

\end{abstract}

\def \figsizeone{0.99\linewidth}

\newif\ifarxiv
\arxivfalse

\section{Introduction}

Transformers \citep{vaswani2017u} have become central to modern machine learning, due to their capabilities such as in-context learning (ICL) \citep{brown2020language}---the ability of models to perform new tasks by leveraging a few examples provided within the context, without the need for parameter updates or retraining. 

Recent empirical studies have revealed that the dynamics that result in in-context learning abilities often deviate from a simple monotonic decrease in loss, exhibiting complex behaviors with plateaus such as grokking
\citep{power2022grokking} and stage-wise transitions \citep{olsson2022context} where the training process frequently lingers in regions of slow progress before going through a sudden phase transition after which ICL abilities are acquired. Mechanistic interpretability studies suggest that after these plateaus, specific circuits, such as induction heads~\citep{olsson2022context}, or syntactic structures~\citep{chen2024sudden}, are gradually learned. This raises a natural question: 
\begin{center}
    \emph{Why does training linger at plateaus before developing such abilities?}
\end{center}

The setting of~\citet{edelman2024evolution} provides an avenue to take a closer look at this question with the specialized task of learning in-context $n$-grams~\citep{shannon1948mathematical, chomsky, ngrambrown}. Here, the training unfolds in a hierarchical, phase-wise manner. It begins with the transformer making uniform predictions, progresses through unigram and bigram predictions, and potentially generalizes to higher-order $n$-grams.  Figure \ref{fig:stage-wise} illustrates how these training phases overlap with the losses of the sub-$n$-gram estimators. 

Building on this observation, we provide a theoretical foundation for why training lingers at long plateaus---the loss landscape of transformers trained on in-context sequential data exhibit stationary points aligned with sub-hierarchical or sub-syntactic solutions. These points act as intermediate solutions where gradients vanish, causing training to stagnate before transitioning to the next hierarchical level. 

Concretely, this paper explores the loss landscape of transformer models trained on in-context \(n\)-gram language models to predict the next token. 
We show a sufficient stationarity condition for the cross-entropy loss that the sub-\(n\)-gram constructions satisfy, shedding light on the incremental and phase-wise learning phenomena observed during training for in-context \(n\)-grams.  
Our main contributions can be summarized as follows: 
\begin{itemize}[topsep = -1pt, itemsep = -2pt, leftmargin=11pt]
\item In Section~\ref{sec:crossentro}, we provide a sufficient condition for the solutions to be the stationary points of the cross-entropy loss in the next-token prediction task when the model's score function depends solely on a sub-sequence of the input history. This characterization provides a powerful tool for analyzing the structure of the loss landscape for various tasks, including but not limited to \(n\)-grams. 
\item In Section~\ref{sec:theory}, we demonstrate that a set of solutions representing sub-\(n\)-gram estimators are near-stationary points, by verifying that they satisfy the conditions sufficient for the gradient of the population loss to converge to zero in the infinite weight and sequence length limit.

\item In Section~\ref{sec:experiments}, we present empirical evidence that illustrates the structural evolution of these models during training and how transitions between training phases are in alignment with the predictions of our theory. 
\end{itemize}

\begin{figure} 
    \centering
\includegraphics[width=\figsizeone]{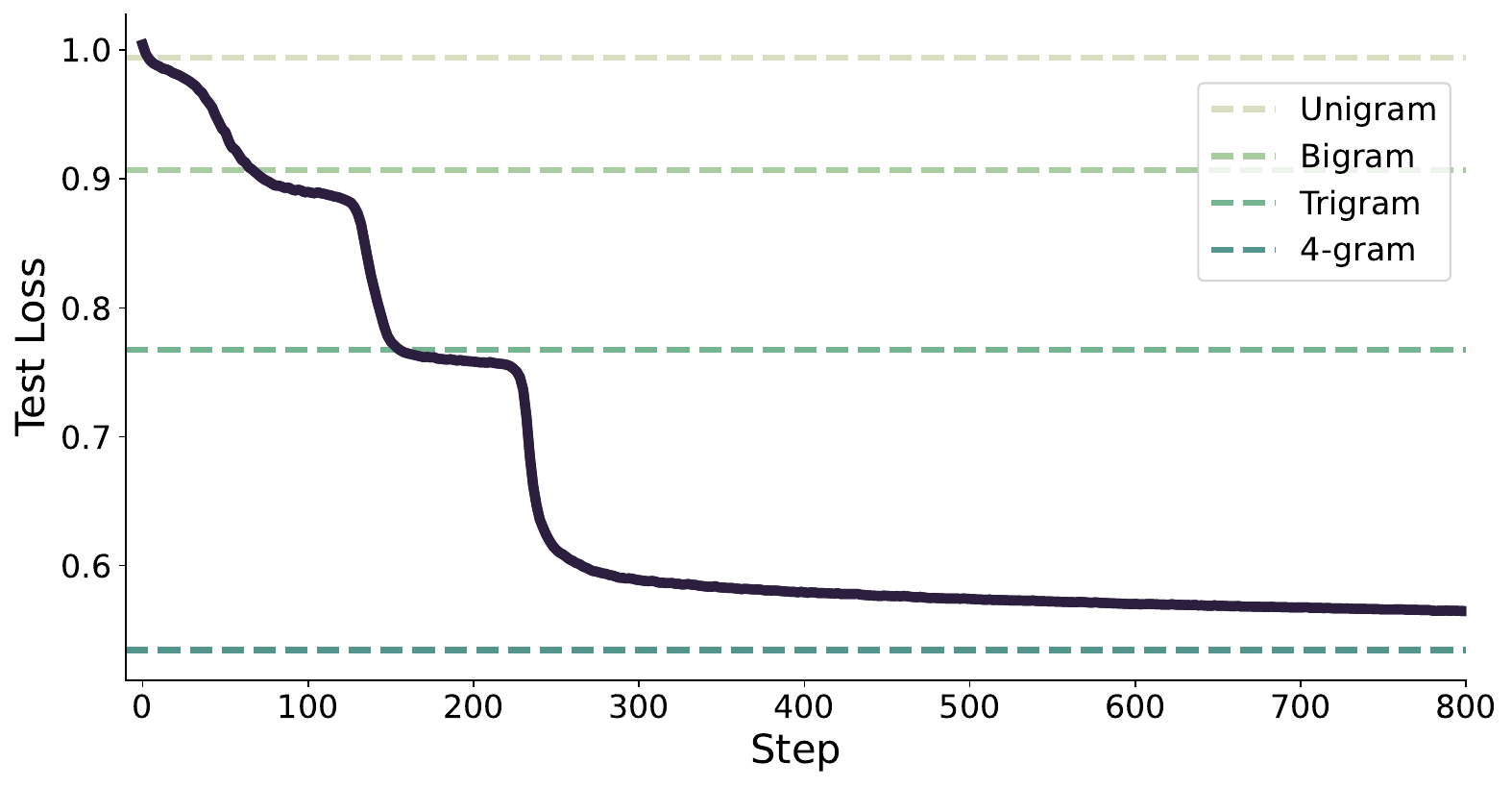}
    \vspace{-5pt}
    \caption{The stage-wise behavior of the test loss during training for the $4$-gram language model. The dashed lines represent the cross entropy loss of the $k$-gram estimators for $k={1, 2,3, 4}$. The plateaus of the test loss overlap with the losses of $k$-gram estimators.}
    \vspace{-15pt}
    \label{fig:stage-wise}
\end{figure}

\subsection{Related Work}

\myparagraph{In-context Learning.}
 To understand in-context learning (ICL)~\citep{brown2020language}, previous works have explored various approaches. One approach is mechanistic interpretability, which has revealed the emergence of circuits called induction heads during ICL~\citep{olsson2022context}. Another approach involves studying ICL on specific hypothesis classes to understand how transformers solve these tasks in context. A common feature of the training dynamics in these studies is the presence of plateaus~\citep{chen2024sudden,kim2024taskdiversityshortensicl}, after which the models acquire certain capabilities. Examples include studying ICL on regression tasks~\citep{garg2022can,von2023transformers,ahn2024transformers}, boolean functions~\citep{bhattamishra2023understanding}, regular languages~\citep{akyurek2024context}, and \(n\)-grams. 

\myparagraph{(In-context) Learning \(\boldsymbol{n}\)-gram language models.} $n$-gram language models, or higher-order Markov chains, are effective mathematical models for generating sequential data and capturing certain aspects of natural language~\citep{shannon1948mathematical,jelinek1998statistical,jm3}. As a result, many studies have focused on analyzing transformers through the lens of Markov sequences. \citet{svete2024transformers} examines the representational limits of transformers on $n$-grams, while \citet{rajaraman2024transformers} focuses on the in-context counterpart. \citet{makkuva2024attention} explores the landscape of transformers on data from binary first-order Markov chains (bigrams) but does not consider the in-context setting. \citet{bietti2024birth} investigates the formation of induction heads needed to learn bigrams with specific trigger tokens. Additionally, \citet{nichani2024transformerslearncausalstructure} studies the formation of induction heads through gradient descent in learning causal structures. A closely related work to ours is \citet{edelman2024evolution}. They report the stage-wise dynamics of transformers while learning in-context $n$-grams, and identify that these stages correspond to sub-\(n\)-grams. However, their theoretical analysis is limited to one step of gradient descent on binary bigrams, while we characterize the sub-\(n\)-grams as near-stationary points for unrestricted \(n\) and vocabulary size, explaining the long plateaus once the training reaches a state expressing these solutions. \citet{chenunveiling} study the same in-context $n$-gram prediction task, but with an architecture that involves a feed-forward network layer that does the token selection. However, they have a threee-stage training procedure and an initialization scheme that ensures different heads attend to different tokens from the start, eliminating the stage-wise learning dynamics we study.

\myparagraph{Sequential learning.} 
\citet{FUKUMIZU2000317} analyzes the plateaus in the loss curve and their relationship to critical points for supervised learning with neural networks. 
The characterization of dynamics, including jumps between these stationary points, has also been studied for simpler models such as matrix and tensor factorization~\citep{razin2021implicit,jiang2022algorithmic}, matrix sensing \citep{arora2019implicit,li2021towards,jin2023understanding}, diagonal networks~\citep{gissin2020the,berthier2022incremental,pesme2023saddle}, linear networks~\citep{saxe2019mathematical, gidel19,jacot2021saddle, varre2023on}, ReLU networks~\citep{boursier2022gradient,abbe2023sgd} and transformers with diagonal weight matrices~\citep{boix2023transformers}.

\myparagraph{Large Language Models(LLMs) as \(\boldsymbol{n}\)-grams} \citet{nguyen2024understandingtransformersngramstatistics} investigates whether LLM predictions can be approximated by simpler, interpretable statistical rules based on \(n\)-gram frequencies, and shows that LLMs exhibit curriculum learning during training—starting with simpler \(n\)-gram patterns and progressively capturing more complex ones. In a related vein, \citet{zekri2025largelanguagemodelsmarkov} leverages the equivalence between auto-regressive models and \(n\)-gram models with long contexts to derive generalization bounds.

%
\section{Problem Setting}
In this section, we define the in-context next token prediction loss, the landscape of which we aim to understand for transformers. Next, we introduce the $n$-gram language models and formally describe the disentangled attention-only transformer architecture. 

\myparagraph{Notation.} Let $\vocset = \{1, 2, \dots \voc \}$ be a finite alphabet for $\voc \in \N$. The Kleene star $\vocset^{*}$ denotes the set of all finite-length sequences whose elements are in $\vocset$. For $l \leq k$, $\subseq{l}{k} = \{ \ele{l}, \ele{l+1} \dots \ele{k-1}, \ele{k}\} \in \vocset^{k-l+1}$ denotes a subsequence, and when not specified, $l=1$.  For an element $\ele{t}$ in a sequence, its $l$-history refers to $\subseq{t{-}l}{t{-}1}$ and its $(\minus l)$-token refers to $\ele{t{-}l}$.
$\Delta^{k-1}$ denotes the probability simplex over $\R^k$. A language model $p$ is a distribution over $\vocset^{*}$.  We use $\basis{i}{d}$ to denote the $i^{\text{th}}$ elementary basis vector of $\R^{d}$, simplified to $e_i$ when $d = \voc$. For a matrix \(\embed \in \R^{T  \times d}\), \(\embed[t] \in \R^{d}\) denotes the vector representation of it's \(t^{\text{th}}\) row.

\subsection{In-context Next Token Predictions}

Consider a language model $p$ and a parametric model $p(\theta, . ):  \vocset^{*} \to \Delta^{\voc-1}$, which predicts the probability of the next token. Given a sequence $\seq{T} \in \vocset^{T}$, the performance of the model in predicting the next token is evaluated using the cross-entropy loss (CE):
\begin{align}
    \ell\left(\theta, \seq{T}\right) = \sum_{s \in \voc} p(\ele{T+1} {=} s | \seq{T} ) \log \left( p(\theta, \seq{T} )[s] \right). \notag
\end{align} 
In the in-context setting, we assume that the ground truth language model is a mixture of multiple language models. Specifically, for every sequence, a language model $ p_{\tau}$ is sampled from a prior distribution $\mathcal{P}$. Then, the sequence $\seq{T}$ is sampled from $p_{\tau}$.

The in-context population loss is defined as 
\begin{align} \label{eq:population}
    \cL(\theta) \coloneqq 
        \expectover{ p_{\tau} \sim \mathcal{P} } \,  \expectover{\seq{T} \sim  p_{\tau}}   \ell\left(\theta, \seq{T}\right).
\end{align}
We focus on the task of learning in-context $n$-grams where $p_{\tau}$'s are modeled as $n$-gram language models.%

\subsection{In-context \texorpdfstring{$\pmb{n}$}{\text n}-grams task} \label{subsec:n-gram-LM}

We start with the definition of \(n\)-gram language model and discuss some estimators for this setting. 

\myparagraph{The $\boldsymbol{n}$-gram Language Model.} A language model $p_{\tau}$ is an $n$-gram language model if it satisfies the two key assumptions:

\ifarxiv
    \begin{enumerate}[label=(\alph*), topsep=5pt, itemsep=-2pt, parsep=0pt, leftmargin = 20pt]
        \item Markov property: The conditional probability of a token depends only on the $(n{-}1)$-history rather than the entire history, i.e.,
    \begin{align}
    p_{\tau} \left( \ele{l}  \vert \seq{l-1}  \right) = \notag p_\tau \left( \ele{l} \vert \subseq{l-n+1}{l-1} \right), \text{ for } l\geq n.
    \end{align}
    \item Time Homogeneity: The transition probabilities are independent of the position in the sequence. Formally, for all $t, l \in [T]$ and sequences $s^n \in \vocset^n$,
    \begin{align}
    p_\tau \left( \ele{l} {=} s_{\scriptscriptstyle{n}} | \subseq{l-n+1}{l-1} {=} s^{\scriptscriptstyle {n{-}1}} \right) = p_\tau \left( \ele{t} {=} s_{\scriptscriptstyle{n}} | \subseq{t-n+1}{t-1} {=} s^{\scriptscriptstyle {n{-}1}} \right). \notag
\end{align}
\end{enumerate}

\else 
    \begin{enumerate}[label=(\alph*), topsep=0pt, itemsep=-2pt, parsep=0pt, leftmargin = 10pt]
        \item Markov property: The conditional probability of a token depends only on the $(n{\minus}1)$-history rather than the entire history, i.e.,
    \begin{align}
    p_{\tau} \left( \ele{l}  \vert \seq{l-1}  \right) = \notag p_\tau \left( \ele{l} \vert \subseq{l-n+1}{l-1} \right), \text{ for } l\geq n.
    \end{align}
    \item Time Homogeneity: The transition probabilities are independent of the position in the sequence. Formally, for all $t, l \in [T]$ and sequences $s^n \in \vocset^n$,
    \begin{align}
    p_\tau \left( \ele{l} {=} s_{\scriptscriptstyle{n}} | \subseq{l-n+1}{l-1} {=} s^{\scriptscriptstyle {n{-}1}} \right) = p_\tau \left( \ele{t} {=} s_{\scriptscriptstyle{n}} | \subseq{t-n+1}{t-1} {=} s^{\scriptscriptstyle {n{-}1}} \right). \notag
\end{align}
\end{enumerate}

\fi

The assumptions above distinguish $n$-grams from general language models by limiting the dependency range and assuming uniform transition dynamics over time. Although these assumptions restrict their expressive power, they retain certain key characteristics of natural language, such as causal dependence on the prior context. Under these assumptions, the probability of the sequence $\seq{T}$ can be expressed using the chain rule as
\begin{align}
    p_{\tau} (\seq{T}) = \prod_{l \in [T]} p_{\tau} (\ele{l} | \seq{l-1}) = \prod_{l\in [T]} p_{\tau} \left( \ele{l} | \subseq{l-n+1}{l-1} \right). \notag
\end{align}

\myparagraph{Estimators.}  For the next token prediction task with the in-context $n$-gram language model, the $k$-gram estimator is of particular interest.

\begin{definition}[\textbf{$\boldsymbol{k}$-gram estimator}]\label{def:kgram-est}
Given a sequence $\seq{T}$ and  $i \in \vocset$, the $k$-gram estimator is defined as
\begin{align} 
    \hspace{-5pt} \phatk{k} (\ele{T+1} \myeq i \big\vert \seq{T}) = \frac{\sum\limits_{l=k}^{T} \mathbbm{1} \{  \subseq{l{-}k{+}1}{l{-}1} \myeq \subseq{T{-}k{+2}}{T} \} \mathbbm{1}\{\ele{l} \myeq i\} }{\sum\limits_{l=k}^{T} \mathbbm{1} \{  \subseq{l{-}k{+1}}{l{-}1} \myeq \subseq{T{-}k{+2}}{T} \}}. \notag
\end{align}
\end{definition}
Intuitively, this estimator checks whether the $(k{\minus}1)$-histories match and counts the tokens that follow. For $k=1$ (unigram), it computes the empirical frequency of each token in the sequence. For $k=2$ (bigram), the estimator computes the empirical frequency of tokens that follow those matching the ${T}^{\text{th}}$ token in the sequence. The $n$-gram estimator is the ``in-context" maximum likelihood estimator (MLE) for the $n$-gram language model. Moreover, \citet{han2021optimal} shows that the smoothed version of this estimator achieves the minimax optimal rate for the next-token probability estimation. We include all the \(k\)-gram estimators where \(k<n\), in the definition of sub-\(n\)-grams. 

We note that our choice of the $n$-gram language model and $k$-gram estimator is primarily for ease of presentation. The results in this paper naturally extend to more general time-homogeneous causal dependencies in the sequence beyond \(n\)-grams as in \citet{nichani2024transformerslearncausalstructure}. For example, our results also apply to causal graphs where a token $x_t$ depends only on specific parent tokens, such as $x_{t-2}$ and $x_{t-4}$. Similarly, while the $k$-gram estimator matches a contiguous $(k{-}1)$-history by definition, a similar estimator can match non-contiguous histories and count what follows. For instance, let the $(\{1,3\})$-history of a token $x_t$ refer to  $(x_{t-1}, x_{t-3})$; an estimator could leverage this pattern to predict subsequent tokens in the same way the \(3\)-gram estimator does for the contiguous \(2\)-history. We use the term sub-$n$-grams also to include such cases, which we discuss further in Appendix~\ref{subsec:beyond_suffixes}. Next, we introduce the attention-only transformer architecture used for this in-context learning task.

\subsection{Disentangled Transformer Architecture}\label{sec:simplified}

Given an input sequence \( \seq{T}{\in}\vocset^{*} \), the transformer first maps each token to a \( d \)-dimensional embedding using token-wise semantic and positional embeddings, defined as \( E: \vocset \to \mathbb{R}^{\din} \) and \( P: [T] \to \mathbb{R}^{\din} \). 
The core of the transformer is the self-attention mechanism, which allows the model to weigh different parts of the input sequence based on learned similarity scores. Given a sequence embedding $\embed \in \R^{T \times d}$, self-attention computes a weighted sum of token embeddings as
\begin{align}
\sap{\embed}{(Q,K,V)}{ } = \sf\left( \left( \embed \query{}^{\top} \key{} \embed^{\top} \right)\right) \embed \val{}, \label{eq:sa-def} \notag
\end{align}    
where $\left( \query{}, \key{}, \val{} \right)$, are the query, key and value matrices of the attention head, and the masked soft-max $ \sf\left( \cdot \right)$ is defined as
\begin{align}
\sf\left( x \right)[i][j] =
\begin{cases}
     \frac{exp(x_{ij})}{\sum_{k \leq i} exp(x_{ik})}, \quad j \leq i   \\
    0, \qquad \qquad \qquad  otherwise
\end{cases} . \notag
\end{align}
Traditionally, the outputs of self-attention layers are added to the residual stream. For ease of interpretation and analysis, we instead consider a disentangled architecture in which the outputs of individual heads and layers are concatenated rather than summed. This framework was proposed by \citet{friedman2023learning} and formalized by \citet{nichani2024transformerslearncausalstructure}. The disentangled architecture retains the same representational power as the standard formulation (see Theorem 3 of \citet{nichani2024transformerslearncausalstructure})

and is formally defined below.

\begin{definition}[\textbf{Disentangled Attention-only Transformer}]\label{def:disentangled-transformer}

    Let $L$ be the depth, $\{ h_{\ell } \}_{\ell \in [L]}$ be the number of heads per layer, $d$ be the embedding dimension, $d_{\ell}$ be the dimension of layer \(\ell\), $d_{h}$ be the hidden dimension of the model parameters, and $d_{out}$ be the output dimension. For the $h^{\text{th}}$ head in the $\ell^{\text{th}}$ layer, let $\queryh{\ell}{h}, \keyh{\ell}{h} \in \R^{ d_{h} \times d_{\ell}}, \valh{\ell}{h} \in \R^{d_{\ell} \times d_{\ell}}$, be the query, key, and value matrices of the respective head and layer, let \( \sap{\cdot}{\ell}{(h)} =\sap{\cdot}{\{\queryh{\ell}{h}, \keyh{\ell}{h},\valh{\ell}{h} \}}{}  \) 
    and let $\cU \in \R^{d_{out} \times d_L}$ be the unembedding matrix. Given an input sequence $\seq{T}$, the disentangled transformer outputs the logits $\text{TF}(\theta)$ for $\theta = \left\{ \{\queryh{\ell}{h}, \keyh{\ell}{h}, \valh{\ell}{h} \}_{\ell \in [L], h\in [h_\ell]} \cup \cU \right\}$, given by,
    \begin{align}
        &\emd{0} = [ E(\seq{T}), \, P(\seq{T})] \in \mathbb{R}^{T \times \din}, \notag \\
        &\emd{\ell} = [\emd{\ell-1},\, \sap{\emd{\ell-1}}{\ell}{(1)}, \dots,\, \sap{\emd{\ell-1}}{\ell}{(h)}] \notag \\
        & \text{TF}(\theta) = \emd{L} \cU^\top.\notag
    \end{align}
\end{definition}

To simplify the presentation of our theoretical results, we consider a two-layer simplified disentangled transformer with specific design choices. These modifications, detailed below, include orthogonal token embeddings and a fixed value matrix in the second layer.

\paragraph{Embeddings.} The token embedding $ S : [\voc] \to \R^{\din} $ is orthogonal, which means that the set $(s_i)_{i\in  [\voc]}$ forms an orthogonal family in $\R^d$. Here, $s_i$ denotes the embedding of the token $i$. For such an embedding to exist, $d \geq \voc$.  For any sequence $\seq{T}$, the input is encoded as, 
    \begin{align*}
        \emd{0} = \begin{bmatrix}
            s_{\ele{1}} &  s_{\ele{2}} & \ldots &  s_{\ele{T}} 
        \end{bmatrix}^{\top} \, \in \R^{T \times d}. 
    \end{align*}
No explicit positional embeddings are used\footnote{Positional information is implicitly used through the attention matrix in the first layer.}.

\paragraph{First Attention Layer.} The first attention layer contains $m$ attention heads, and each attention matrix is parameterized by a single learned matrix $\attnh{1}{h} \in \R^{T \times T}$ and is independent of the input embeddings.
The output of each head is therefore given by 
\begin{align}
    \emdh{1}{h} = \sf\left( \attnh{1}{h} \right) \, \emd{0} \, \left(\valh{1}{h}\right)^{\top} \in \R^{T \times d}.
\end{align}

This construction is equivalent to a standard attention head where token embeddings are concatenated with one-hot positional embeddings and the attention only relies on the latter.  Thus, the output of the first layer is given by  
    \begin{align}
        \emd{1} = \begin{bmatrix}
            \emdh{1}{0} & \emdh{1}{1} & \ldots & \emdh{1}{m}
        \end{bmatrix} \in \R^{T \times (m+1)\din},
    \end{align}
where $\emdh{1}{0} = \emd{0}$ is the skip connection.

 \paragraph{Second Attention Layer.}The second attention layer contains a single attention head, where the value matrix is fixed to \(\valh{2}{1} = \left[I_d;0_d; \dots ;0_d \right]_{{(m+1)\din \times \din} }\) that reads the first block. Consequently , \(\emd{1} \valh{2}{1} = \emdh{1}{1} = \emd{0} \) and the output of the second layer is given by
    \begin{align}
    \emd{2} = 
        \sf\left( \emd{1} \query{2}^{\top} \key{2}  \emd{1}^{\top}  \right) \emd{0} \in \R^{T \times \din}.
    \end{align}
We note that there is no concatenation to the residual stream in the second layer, which corresponds to not using a residual connection in the standard transformer.

Finally, the unembedding matrix is given by
\begin{align}
   U = \sum_{j=1}^{\voc} e_j s_j^{\top},
\end{align}
where $(e_j)$'s are canonical basis of $\R^{\voc}$. Note that if the token embedding $S$ is the one-hot encoding, then $U$ is the identity matrix. 
Finally, given a sequence  $\seq{T}$, the probability of the next token estimated by the model, denoted by $\sp_{\theta}(\seq{T})$ is
\begin{align*}
    \sp_{\theta}(\seq{T}) = U \, \emd{2}[T].
\end{align*}
$\theta = \big\{ \attnh{1}{h}, \valh{1}{h} \big\}_{h=1}^{n-1} \, \cup \, \big\{ \key{2}, \query{2} \big\}$  denotes the set of parameters of our simplified disentangled model.

\section{A Sufficient Stationary Condition for Population CE on Sequences}
\label{sec:crossentro}
 
In this section, we analyze the gradient of the population next-token cross-entropy loss for sequential data. This analysis provides fundamental insights into the training dynamics. We begin by presenting a lemma that provides a closed-form expression for the gradient.

\begin{restatable}{lem-rest}{LemmaDervCE}
   \label{lem:gradient-CE}
Consider any parametric model $\sp_{\theta}(.): \vocset^{T} \to \simplex{\voc{-}1}$ that maps a sequence of states to a probability vector on the states. The derivative of the population cross-entropy  loss $\cL(\theta)$ with respect to a parameter $\theta_{i} \in \R$ is 
\begin{align*}
       \hspace{-.1cm} \partial_{\theta_i}  \cL(\theta) &\myeq 
       \expectover{ p_{\tau} \sim \mathcal{P} } \,  \expectover{\seq{T} \sim  p_{\tau}}
        \Bigl\langle {\sp}_{\theta}(\seq{T} ){-} \spr \left( \left. . \right\rvert \seq{T} \right), { \partial_{\theta_i}  \log{ \sp_{\theta}(\seq{T}) } }\Bigr\rangle,
\end{align*}
where $\partial_{\theta_i}$ is the partial derivative with respect to $\theta_i$ and $\log(\cdot)$ denotes component-wise logarithm. 
\end{restatable}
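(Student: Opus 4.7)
The plan is a one-line calculation built around two facts: differentiation commutes with the two expectations under mild regularity, and $\hat{p}_\theta(x_{1:T})$ is a probability vector for every $\theta$.

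First I would pass $\partial_{\theta_i}$ through both expectations and through the finite inner sum over $\vocset$. This is justified by standard dominated-convergence, noting that the transformer output $\hat{p}_\theta$ is smooth in $\theta$ (a composition of softmaxes and linear maps in the parameters) and that for fixed $T$ the sample space $\vocset^T$ is finite. Interchanging the derivative with the logarithm inside the sum yields
\begin{align*}
\partial_{\theta_i} \cL(\theta) \;=\; \mathbb{E}_{p_\tau \sim \mathcal{P}}\, \mathbb{E}_{x_{1:T} \sim p_\tau}\, \bigl\langle p_\tau(\cdot \mid x_{1:T}),\, \partial_{\theta_i} \log \hat{p}_\theta(x_{1:T}) \bigr\rangle,
\end{align*}
up to the sign convention used for $\ell$ (the statement of the lemma corresponds to the standard negative-log-likelihood convention).

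Second, I would exploit normalization. Since $\hat{p}_\theta(x_{1:T}) \in \Delta^{\voc-1}$, the identity $\sum_{s} \hat{p}_\theta(x_{1:T})[s] = 1$ holds for every $\theta$, and differentiating it gives
\begin{align*}
0 \;=\; \partial_{\theta_i} \sum_{s} \hat{p}_\theta(x_{1:T})[s] \;=\; \sum_{s} \hat{p}_\theta(x_{1:T})[s]\, \partial_{\theta_i} \log \hat{p}_\theta(x_{1:T})[s] \;=\; \bigl\langle \hat{p}_\theta(x_{1:T}),\, \partial_{\theta_i} \log \hat{p}_\theta(x_{1:T}) \bigr\rangle.
\end{align*}
Adding this zero inside the double expectation converts the integrand $\langle p_\tau(\cdot \mid x_{1:T}), \partial_{\theta_i} \log \hat{p}_\theta \rangle$ into the asserted $\langle \hat{p}_\theta(x_{1:T}) - p_\tau(\cdot \mid x_{1:T}),\, \partial_{\theta_i} \log \hat{p}_\theta \rangle$, completing the proof.

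There is no real obstacle here; the argument is essentially bookkeeping. The only point that deserves care is the sign convention for $\ell$, and the structural identity $\langle \hat{p}_\theta, \partial_{\theta_i} \log \hat{p}_\theta \rangle = 0$ is nothing more than differentiating the normalization constraint of the softmax output. Conceptually, this is also what makes the lemma useful downstream: stationarity can be certified purely by checking that the residual $\hat{p}_\theta - p_\tau(\cdot \mid x_{1:T})$ is, in expectation, orthogonal to the score $\partial_{\theta_i} \log \hat{p}_\theta$ along every parameter direction, so we never need to compute the loss itself.
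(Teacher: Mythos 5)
Your proposal is correct and follows essentially the same route as the paper's proof: differentiate the cross-entropy inside the (finite) expectations, then add the zero term obtained by differentiating the normalization constraint $\sum_{s}\hat{p}_{\theta}(x_{1:T})[s]=1$ to produce the residual $\hat{p}_{\theta}-p_{\tau}(\cdot\mid x_{1:T})$ paired with the score. Your remark about the sign convention is apt, since the paper's in-text definition of $\ell$ omits the minus sign that its appendix proof (and the lemma's conclusion) implicitly uses.
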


The lemma~\ref{lem:gradient-CE}, presented for a fixed length $T$ for notational simplicity, generalizes to sequences of arbitrary length. This result extends prior analyses such as~\citet[Lemma 1]{bietti2024birth} and ~\citet{makkuva2024attention}.

The gradient expression in Lemma~\ref{lem:gradient-CE} is the expectation of an inner product between two key terms: (a) the \emph{prediction residual}, ${\sp}_{\theta}(\seq{T} ){-} \spr \left( \left. . \right\rvert \seq{T} \right) $, representing the error between the model's estimate and the true next-token distribution; and (b) the \emph{score function}, which is the gradient of the model's logits. In the following proposition, we demonstrate that when the score function's structure depends only on a specific sub-sequence of the input, the population loss gradient simplifies significantly.

\begin{restatable}{prop-rest}{PropLossLand}
\label{prop:lossland}
For any $ {\theta_*} \in \R^{p} $ such that $ \partial_{\theta = \theta_*} \log{ \bigl.\sp_{\theta}(\seq{T})} = g\left(\spr,\subseq{t}{T}\right) $, i.e., the score is solely a function of the context  $p_{\tau}$ and the last $T{-}t{+}1$ elements of the sequence $\seq{T}$, the gradient of the population loss $\cL$ can be written as 
    \begin{align*}
        \hspace{-.25cm}
         \nabla  \cL(\theta_*)  &\myeq 
       \expectover{ p_{\tau} \sim \mathcal{P} } \,  \expectover{\seq{T} \sim  p_{\tau}}
        \Bigl\langle {\sp}_{\theta_*}(\seq{T} ){-} \spr \left( \left. . \right\rvert {\subseq{t}{T}} \right), g\left(\spr,\subseq{t}{T}\right) \Bigr\rangle.
    \end{align*}

Furthermore, if for such $ {\theta_*} \in \R^{p} $, the model estimates the conditional probability of the next token $\spr \left( \left. . \right\rvert {\subseq{t}{T}} \right)$, i.e., $ {\sp}_{\theta_*}(\seq{T} ){=}\spr \left( \left. . \right\rvert {\subseq{t}{T}} \right) $ almost surely for \( \spr \sim \mathcal{P} \), then $\theta_*$ is a stationary point. 
\end{restatable}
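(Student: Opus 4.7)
The plan is to start from the closed-form gradient given in Lemma~\ref{lem:gradient-CE} and manipulate the inner product using the tower property of conditional expectation. Specifically, under the assumption that $\partial_{\theta=\theta_*}\log\sp_\theta(\seq{T}) = g(\spr,\subseq{t}{T})$, the score at $\theta_*$ is constant along the fibers of the map $\seq{T}\mapsto\subseq{t}{T}$ once $\spr$ is fixed. This is the only structural property I will need, and it suggests splitting the inner product in Lemma~\ref{lem:gradient-CE} into two pieces and handling them separately.

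First, I would write
\begin{align*}
\langle \sp_{\theta_*}(\seq{T}){-}\spr(\cdot|\seq{T}), g(\spr,\subseq{t}{T})\rangle
= \langle \sp_{\theta_*}(\seq{T}), g\rangle - \langle \spr(\cdot|\seq{T}), g\rangle,
\end{align*}
and leave the first term untouched. For the second term, since $g$ depends on $\seq{T}$ only through $\subseq{t}{T}$ (once $\spr$ is fixed), I can condition on $(\spr,\subseq{t}{T})$ and pull $g$ out:
\begin{align*}
\expectover{\seq{T}\sim\spr}\langle \spr(\cdot|\seq{T}), g\rangle = \expectover{\subseq{t}{T}}\Bigl\langle \expectover{\subseq{1}{t-1}\mid\subseq{t}{T}}\spr(\cdot|\seq{T}),\, g\Bigr\rangle.
\end{align*}
The law of total probability then gives $\expectover{\subseq{1}{t-1}\mid\subseq{t}{T}}\spr(x_{T+1}{=}s|\seq{T}) = \spr(x_{T+1}{=}s|\subseq{t}{T})$ for every $s\in\vocset$, so the inner conditional expectation collapses to $\spr(\cdot|\subseq{t}{T})$. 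Since the latter does not depend on $\subseq{1}{t-1}$, I can reintroduce the full expectation over $\seq{T}$ without changing the value, and recombine the two terms to obtain the claimed expression for $\nabla\cL(\theta_*)$.

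The second assertion is then immediate: if $\sp_{\theta_*}(\seq{T}) = \spr(\cdot|\subseq{t}{T})$ almost surely for $\spr\sim\mathcal{P}$, the residual inside the inner product is zero pointwise, so each partial derivative vanishes and $\theta_*$ is stationary.

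The computation itself is short; the only subtlety is being precise about what is being conditioned on in the tower step. In particular, one must keep $p_\tau$ fixed when applying the law of total probability (so that $\spr$ in the arguments of $g$ is well-defined), and check that the assumption ``$g$ depends on $\seq{T}$ only through $\subseq{t}{T}$'' really lets $g$ be pulled outside the inner expectation. Once this bookkeeping is handled, both parts of the proposition follow in a few lines from Lemma~\ref{lem:gradient-CE}.
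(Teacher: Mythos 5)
Your proposal is correct and follows essentially the same route as the paper: starting from Lemma~\ref{lem:gradient-CE}, splitting the sequence into the prefix $\seq{t-1}$ and suffix $\subseq{t}{T}$, and applying the tower property (which the paper writes out as an explicit factorization-of-expectation identity over the supports of the two random variables) to replace $\spr(\cdot\,|\,\seq{T})$ by $\spr(\cdot\,|\,\subseq{t}{T})$ inside the inner product with $g$, while leaving the model term untouched. The stationarity conclusion is obtained identically, so there is nothing substantive to add.
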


Proposition~\ref{prop:lossland} provides a sufficient condition for stationarity. It reveals that if the score function $g$ depends only on the context and a suffix $\subseq{t}{T}$, then the gradient is no longer driven by the error against the true probability conditioned on the full sequence, $\spr(.|\seq{T})$, but rather by the error against the probability conditioned on the shorter suffix, $\spr(.|\subseq{t}{T})$. Therefore, the population loss gradient vanishes when the model correctly learns to estimate the latter.

We leverage this proposition to formally show that the population loss gradient vanishes when the model correctly learns the true $k$-gram conditional probabilities. However, the proposition is not specific to the setting of $k$-grams and holds generally for cross-entropy loss in next-token prediction tasks. A generalization of this result, which applies to models that depend on any arbitrary subset of tokens (not just a contiguous suffix), is discussed in Appendix~\ref{rem:stationarity_subseq}.

\section{Theoretical Insights into Stage-wise Dynamics Through the  Loss Landscape}
\label{sec:theory}

In this section, we first provide the representations of the sub-\(n\)-grams with the simplified disentangled transformer architecture discussed in Subsection~\ref{sec:simplified}.  
We then prove that these sub-$n$-gram constructions are near-stationary points of the loss.

\subsection{Representing Sub-n-grams with Simplified Transformer}
\label{subsec:representing-k-gram}

\begin{figure*}[t]
    \centering
    \begin{subfigure}{0.40\textwidth}
        \centering
        \includegraphics[width=\linewidth]{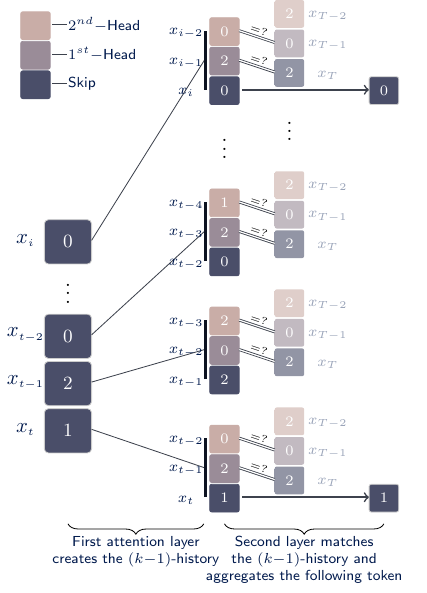}
        \caption{}
        \label{fig:full-n-gram-construction}
    \end{subfigure}
    \begin{subfigure}{0.40\textwidth}
        \centering
        \includegraphics[width=\linewidth]{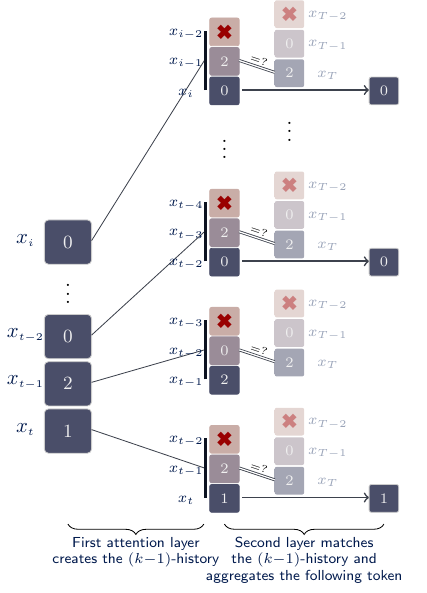}
        \caption{}
        \label{fig:k-gram-construction}
    \end{subfigure}
    \vspace{-10pt}
    \caption{\textbf{Transformer representing a $n$-gram and $k$-gram estimator.} (a) The task we consider is learning in-context tri-grams ($n{=}3$). Here, we illustrate how the transformer given by $\tht_*^{n}$ constructed in section~\ref{subsec:representing-k-gram} operates on the sequence 
    $(\ldots,2,\ele{i}{=}0,\ldots,2,0,2,\ele{t}{=}1, \ldots 0, \ele{T}{=}2)$  and computes a tri-gram($k{=}3$) estimator. Skip, $1^{st}$-Head, $2^{nd}$-Head denotes the outputs of the skip connection, $1^{st}$ and the $2^{nd}$ head. The first layer creates the $(2)-$history and the second layer compares it with the $(2)-$history of token $T{+}1$, i.e, $\ele{T} = 2$. The second layer compares these histories and attends to tokens at $t,i$  and averages the tokens at that position. (b) \ding{54} represents that $2^{nd}$ head is deactivated. Hence, the first layer creates the $(1)-$history and the second layer matches with the $(1)-$history of token $T{+}1$, i.e, $\ele{T}{=}2$ and attends and averages the tokens at $t,t{-}2,i$.} \label{fig:n-gram-construction}
\end{figure*}

We now construct a disentangled transformer to represent the $n$-gram estimator in \defref{def:kgram-est}. We then extend this construction to represent any $k$-gram estimator for $k \in [n]$ by deactivating specific heads. Parameters are often expressed as sums of outer products of orthogonal vectors, emphasizing their role as associative memories in the spirit of~\citet{bietti2024birth}.

\myparagraph{Transformer representing $\boldsymbol{n}$-gram.} 
Consider the simplified transformer model presented in Section~\ref{sec:simplified} with $(n{-}1)$ attention heads.\footnote{The heads can be more than $n{-}1$.} For the $h^{\text{th}}$ head in the first layer, we set the parameters as follows:
\begin{align*}
    \attnh{1}{h} = c  \sum\limits_{l{=}h}^{T{-}1} \basis{\scriptscriptstyle l}{T} &\left(\basis{\scriptscriptstyle l{-}h}{T}\right)^{\top} +  c\sum\limits_{l{=}0}^{h{-}1} \basis{\scriptscriptstyle l}{T} \left(\basis{\scriptscriptstyle 0}{T}\right)^{\top}, \\
    \valh{1}{h} &= \sum_{j=1}^{\voc} s_j s_j^{\top}. 
\end{align*}
The second layer query and key matrices are assigned such that 
\begin{align}\label{eq:attn-2-n-gram}
    (\query{2})^{\top} \key{2} = c \sum_{j=1}^{\voc} \sum_{h=1}^{n-1} s_{j}^{h-1} (s_j^{h})^{\top},
\end{align}
where $c>0$ is a constant scaling factor and $ s_n^{h} \in \R^{n\din}$ is a block vector defined as  
\((s_n^{h})^{\top} =  
        \bigl[ \,\underbrace{ 0_{\din}^{\top} \,\, 0_{\din}^{\top} \,\,  \ldots   }_{{\text{$h$ times}}   } \,\,\, s_n^{\top} \, \underbrace{ 0_{\din}^{\top} \,\, \ldots}_{{\text{$n{-}h{-}1$ times}} }\bigr] . \)

Intuitively, the $h^{\text{th}}$ head in the first layer is constructed to attend to the token at relative position $-h$.
The value matrix acts as an identity map, copying the embedding of this $(\minus h)$-token into the $h^{\text{th}}$ block of the first layer’s output, $\emdh{1}{h}$. As illustrated in Fig.~\ref{fig:n-gram-construction}, for any position $t$, the first layer’s output aggregates the embeddings of the previous $n{-}1$ tokens (in the limit $c\to\infty$):
\begin{align}\label{eq
} \emd{1}[t] = \begin{bmatrix} s_{\ele{t}}^{\top} & s_{\ele{t{-}1}}^{\top} & \ldots & s_{\ele{t{-}n{+}1}}^{\top} \end{bmatrix}^{\top} \in \R^{n\din}. \end{align}
The second attention layer then compares these histories using the specific structure of the query and key matrices defined in Equation~\eqref{eq:attn-2-n-gram}.  
The pre-softmax attention score between the first-layer embeddings of the $i^{\text{th}}$ and $j^{\text{th}}$ tokens $\emd{1}[i]$ and $\emd{1}[j]$ is computed as:
        \begin{align*}
        \scal{\key{2} \emd{1}[j]}{ (\query{2})\emd{1}[i] } = c \sum_{l = 1}^{n-1} \mathbbm{1}\{ \ele{j-l}{} \myeq \ele{i + 1-l} \}. 
        \end{align*}

In the limit as $c \to \infty$, where the softmax converges to a hardmax, this construction ensures that the attention for predicting the token at $T+1$ (using query from position $T$) exclusively selects past positions whose $(n-1)$-history matches the history at position $T$. The model's output is then the average of the tokens following these matches, thereby computing the $n$-gram MLE estimator (see Appendix~\ref{subsec:construction-k-gram} for a formal proof).

\myparagraph{Transformer representing sub-$\boldsymbol{n}$-gram.}
The $n$-gram construction can be directly adapted to represent a $k$-gram for any $k < n$. 
The key insight is that the $h^{\text{th}}$ head is responsible for retrieving the $(\minus h)$-token. Hence, to compute a $k$-gram counting estimator, it suffices to deactivate the heads responsible for history elements beyond $k-1$, as shown in Figure~\ref{fig:n-gram-construction}(b). 
 The heads in the first layer are thus divided into two groups:
\begin{enumerate}[label=\alph*),topsep=0pt,leftmargin=10pt,itemsep=0pt]
    \item{\textbf{Activated Heads.}} These heads compute the $(\minus h)$-token for $h \in [k-1]$.
    \item{\textbf{Deactivated Heads.}} These heads output a zero vector, effectively ignoring history beyond $k-1$ tokens. 
\end{enumerate}

The second layer structure is similar to \(n\)-grams case but with components related to the deactivated heads set to zero. This is implemented by modifying the parameters as follows:
\begin{subequations}
\label{eq:k-gram-params}
\begin{align}
  &  \attnh{1}{h}{=}\begin{cases} c  \sum\limits_{l{=}h}^{T{-}1} \basis{\scriptscriptstyle l}{T} \left(\basis{\scriptscriptstyle l{-}h}{T}\right)^{\top}{+} c\sum\limits_{l{=}0}^{h{-}1} \basis{\scriptscriptstyle l}{T} \left(\basis{\scriptscriptstyle 0}{T}\right)^{\top},&\text{for}~h{\in}[k{-}1],  \\
  \qquad \quad \text{arbitrary} \quad &\text{otherwise}.   \end{cases} \label{eq:attn1_k_gram} \\
  &  \valh{1}{h} =  \begin{cases}
     ~ \sum\limits_{j=1}^{\voc} s_js_j^{\top} \text{for } h \in [k{-}1], \\
       \quad  0 \quad \text{otherwise},
    \end{cases} \label{eq:val1_k_gram} \\
    \label{eq:attn2_k_gram} 
   & (\query{2})^{\top} \key{2} = c \sum_{j=1}^{\voc} \sum_{h=1}^{k-1} s_{j}^{h-1} (s_j^{h})^{\top}. 
\end{align}
\end{subequations}
This construction ensures that the second layer only compares the $(k-1)$-histories. The model attends uniformly to all past positions that share the relevant $k$-history, thereby computing the $k$-gram MLE estimator. 

We denote the point given by Equations~\eqref{eq:attn1_k_gram},~\eqref{eq:val1_k_gram},~\eqref{eq:attn2_k_gram}  by $\tht_*^k = \left( \{ \attnh{1}{h}, \valh{1}{h} \}_{h=1}^{n-1} \cup \key{2} \cup \query{2} \right)$ defined precisely in App.~\eqref{eq:k-gram}. In Lemma~\ref{ap:lem-k-gram-representation}, we provide formal results showing that $\tht_*^k$ implements the \(k\)-gram estimator in \defref{def:kgram-est} in the limit $c \to \infty$.

\myparagraph{Other possible constructions.}
The specific construction described above is not unique. For instance, multiple heads could be assigned to compute the same $(-l)$-token for $l < k$, and the result would hold as long as no head outputs the history for $l \ge k$. See Appendix~\ref{sec:extensions} for details.

\subsection{Sub-\texorpdfstring{$\pmb{n}$}{\text n}-grams Are Near-Stationary Points}

Having constructed the points $\tht_*^k$ that implement $k$-gram estimators, we now show that they are first-order stationary points of the population cross-entropy loss in the large context and large norm asymptotic. 
This result follows from the general characterization of stationary points for the cross-entropy loss, provided in Section~\ref{sec:crossentro}.

\begin{restatable}{thm-rest}{Thmnearstationarypoints}\label{thm:stationary-points} 
For the simplified disentangled transformer $p_{\theta}$, the following on holds on the norm of the gradient at $\tht^k_*$
\begin{align*}
    &\| \partial_{\theta = \tht_*^k}  L(\theta) \| \\ 
    & \; = \sqrt{c} \expectover{ p_{\tau} \sim \mathcal{P} } \,  \expectover{\seq{T}} \, \, \cO( \left\| \spr \left(  . \big\vert {\subseq{T-k+2}{T}} \right) - \phatk{k} \right\|^2) + \cO( T \sqrt{c} e^{-c} ).
\end{align*}
\end{restatable}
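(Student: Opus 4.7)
The strategy is to treat the gradient at $\tht_*^k$ as a quantitative realization of Proposition~\ref{prop:lossland}: in the large-$c$ regime the model's prediction and its score function are nearly equal to their hardmax-limit counterparts, so the tower-property argument of that proposition applies up to corrections that are exponentially small in $c$. Concretely, I would first show that $\sp_{\tht_*^k}(\seq{T}) = \phatk{k}(\cdot \mid \seq{T}) + \cO(Te^{-c})$ by analyzing the two-layer attention: for each active first-layer head $h\in[k-1]$ the softmax concentrates on the token at relative position $-h$ with error $e^{-c}$, so $\emd{1}[j]$ equals $(s_{\ele{j}}, s_{\ele{j-1}},\ldots,s_{\ele{j-k+1}})$ padded with zero blocks; expanding $\query{2}^\top\key{2}$ then shows that the second-layer attention logit $x_j$ equals $c$ times the number of $(k-1)$-history matches between positions $j$ and $T$, so softmax concentration in the second layer yields an attention that is uniform on the matching set $\mathcal{M} = \{j : \subseq{j-k+1}{j-1}=\subseq{T-k+2}{T}\}$, thereby reproducing the $k$-gram estimator.

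Next I would decompose the score $\partial_\theta \log \sp_{\theta}(\seq{T})$ at $\tht_*^k$ as $g(\spr, \subseq{T-k+2}{T}) + \varepsilon_{\mathrm{score}}$. The key algebraic observation is that, for every $j\in\mathcal{M}$, the history-block portions of $\emd{1}[j]$ coincide with those of $\emd{1}[T]$, so the partial derivatives $\partial x_j/\partial \theta$ share a common value across $\mathcal{M}$. Through the softmax-derivative identity $\partial \alpha_j/\partial \theta = \alpha_j\bigl(\partial x_j/\partial\theta - \sum_l \alpha_l\, \partial x_l/\partial \theta\bigr)$, this forces the dominant part of the score to be measurable with respect to $(\spr, \subseq{T-k+2}{T})$, giving the claimed $g$ with $\|g\| = \cO(\sqrt{c})$ (inherited from the $\sqrt{c}$ scale of $\query{2}$ and $\key{2}$) and a remainder of size $\cO(\sqrt{c}\,e^{-c})$ from positions outside $\mathcal{M}$.

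With both pieces in hand, I would plug them into Lemma~\ref{lem:gradient-CE} and invoke the tower property exactly as in Proposition~\ref{prop:lossland}: since $g$ depends only on the $(k-1)$-suffix and the latent $\spr$, conditioning on this pair replaces $\spr(\cdot\mid \seq{T})$ by $\spr(\cdot\mid \subseq{T-k+2}{T})$ inside the inner product. Collecting the cross-terms involving $\varepsilon_{\mathrm{out}}$ and $\varepsilon_{\mathrm{score}}$ into an aggregate $\cO(T\sqrt{c}\,e^{-c})$ bound, what remains is the leading contribution $\mathbb{E}[\langle \phatk{k} - \spr(\cdot\mid \subseq{T-k+2}{T}),\, g\rangle]$.

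The main obstacle is extracting the squared dependence $\sqrt{c}\,\mathbb{E}\|\phatk{k} - \spr(\cdot\mid \subseq{T-k+2}{T})\|^2$ rather than just a linear one: a plain Cauchy--Schwarz bound would deliver only $\sqrt{c}\,\mathbb{E}\|\cdot\|$. I expect this improvement to come from a finer identification of $g$, in which the softmax-derivative structure at $\tht_*^k$ aligns the score with the prediction residual $\phatk{k} - \spr(\cdot\mid \subseq{T-k+2}{T})$ up to an $\cO(\sqrt{c})$ scalar, so that the inner product produces a quadratic form rather than merely an upper bound. Carrying this alignment argument out uniformly in $(\spr, \seq{T})$, while keeping track of the $e^{-c}$ softmax corrections in both the output and the score, is the main technical hurdle of the proof.
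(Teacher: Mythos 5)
Your overall route---show that $\tht_*^k$ implements $\phatk{k}$ up to $\cO(Te^{-c})$ softmax corrections, decompose the score at $\tht_*^k$ into a part measurable with respect to $(\spr,\subseq{T-k+2}{T})$ plus a remainder, and then feed this into Lemma~\ref{lem:gradient-CE} and the tower-property argument of Proposition~\ref{prop:lossland}---is the same as the paper's (your first step is exactly Lemmas~\ref{lem:simplified_transformer_first_layer_k_gram} and~\ref{lem:simplified_transformer_second_layer_k_gram}). The genuine gap is in your characterization of the score, and it is precisely the point you defer to your last paragraph. At $\tht_*^k$ the derivative of the second-layer attention involves the centered keys $\emd{1}[i]-\avgattn{1}{t}$, whose first block is $s^{0}_{x_i}-S^{0}\hat{p}_k$ with $\hat{p}_k=\phatk{k}$ the empirical next-token distribution over the matching set $\cM_t^k$. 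This quantity depends on the whole sequence, not only on the $(k{-}1)$-history, and it is \emph{not} exponentially small; so your claim that the score equals $g(\spr,\subseq{T-k+2}{T})$ plus an $\cO(\sqrt{c}\,e^{-c})$ remainder cannot be right---if it were, the entire gradient would be $\cO(T\sqrt{c}e^{-c})$ and the theorem's leading term would be absent. In the paper, the history-measurable $\psi$ is obtained only after substituting $\spr(\cdot\mid\subseq{T-k+2}{T})$ for $\hat{p}_k$ inside the $\key{2}$-gradient, and the resulting error is played off against the residual decomposition $\phatk{k}-\spr(\cdot\mid\seq{T})=\bigl[\spr(\cdot\mid\subseq{T-k+2}{T})-\spr(\cdot\mid\seq{T})\bigr]+\bigl[\phatk{k}-\spr(\cdot\mid\subseq{T-k+2}{T})\bigr]$: the first bracket paired with $\psi$ vanishes in expectation by Proposition~\ref{prop:lossland}, and every surviving contribution carries two factors controlled by $\|\phatk{k}-\spr(\cdot\mid\subseq{T-k+2}{T})\|$ (times $\sqrt{c}$), which is where the quadratic term comes from. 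Your ``alignment of the score with the residual'' names this mechanism but never carries it out, so the central estimate of the theorem is missing from the proposal.

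A second, related omission: the gradient norm runs over all parameter blocks, and the paper establishes an essential asymmetry that your symmetric treatment of first- and second-layer parameters hides. Because $\key{2}$ at $\tht_*^k$ reads only the history blocks, the centered keys satisfy $\key{2}(\emd{1}[i]-\avgattn{1}{t})=\cO(t\sqrt{c}e^{-c})$, so the Jacobians with respect to $\query{2}$, the value matrices, and the first-layer attention matrices are exponentially small (and exactly zero for deactivated heads); only the $\key{2}$-gradient, which instead contains $\query{2}\emd{1}[t](\emd{1}[i]-\avgattn{1}{t})^{\top}$ and hence the non-negligible block $s^{0}_{x_i}-S^{0}\hat{p}_k$, survives and produces the statistical term. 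Establishing this block-by-block structure is a substantive part of the paper's proof that your sketch does not reproduce.
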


This theorem quantifies the gradient norm at the $\tht_*^k$ construction. The first term is driven by the statistical error of the \(k\)-gram MLE estimator and decays as $e^{-\Theta{(T)}}$~\cite{penev1991efficient}.Hence, in the limit of $c = \Theta(T) \to \infty$, the gradient vanishes. As the stationarity holds asymptotically as \( T, c \to \infty \), we term these points \emph{near}-stationary. 

This result reveals a remarkable feature of the landscape of learning in-context $n$-gram language models with transformers: points in parameter space that implement simpler solutions, $k$-gram estimators for \(k<n\), are near-stationary points. This provides a formal basis for understanding stage-wise learning as a process of moving between these near-stationary points.

Note that our result holds for any sequence length $T$. It therefore also applies to the commonly used loss function, which averages the loss over sequences of varying lengths, i.e., \(\cL_{*}( \theta ) = \sum_{t = t'}^{T} \cL_{t}(\theta)\)

where, with a slight abuse of notation, $\cL_{t}$ refers to $\cL$ in Equation~\eqref{eq:population} for sequences of length $t$. In the averaged loss, the gradients from the longer sequences suffer from the vanishing gradient problem as a result of the above theorem. 

\subsection{Proof Sketch}

Our proof hinges on applying the stationarity result from Proposition~\ref{prop:lossland}. This requires establishing two key conditions for the parameter configuration: (a) demonstrating that model's prediction ${\sp}_{\theta_*^k}(\seq{T} )$ converges to the true conditional probability \( p_{\tau}(\cdot \mid \subseq{T-k+2}{T} ) \) and (b) proving that the score function depends solely on the $(k{-}1)$-history.

It is easy to see that the first condition holds asymptotically. As \( c \to \infty \), the model ${\sp}_{\theta_*^k}(\seq{T} )$ implements the $k$-gram estimator (see Lemma~\ref{ap:lem-k-gram-representation}), and as \( T \to \infty \), the $k$-gram estimator converges to the conditional probability (see Lemma~\ref{lem:counting_consistency} in the Appendix for the formal argument). Finally, to obtain non-asymptotic bounds on the gradient, we use these asymptotic computations while carefully controlling for the perturbation caused by non-asymptotic conditions.

The main technical challenge is proving the second condition: that the score function has a restricted dependency structure. Our argument proceeds by analyzing the structure of the model's output derivatives. The model's prediction is a weighted sum of one-hot vectors:
\begin{align}
        p_{\tht}(\seq{T}) =  \sum_{i=1}^{T} \sattn{2}{i}{T} e_{x_i},
\end{align}
where $\sattn{2}{i}{T}$ are the attention scores in the second layer for key $i$ and query $T$ (see Appendix~\ref{sec:support} for details). The derivative with respect to any parameter ${\theta_{(1)}, \theta_{(2)}}$ in the first or second layer involves the derivatives of these attention scores: 
\begin{align*}
           {\jacb{\theta_{(1)}}{p_{\tht}(\seq{T})},  \jacb{\theta_{(2)}}{p_{\tht}(\seq{T})}  }&=  { \sum_{i=1}^{T} \jacb{\emd{1}[i]}{\sattn{2}{i}{T}} \jacb{\theta_{(1)}}{\emd{1}[i]} e_{x_i} , \sum_{i=1}^{T} \jacb{\theta_{(2)}}{\sattn{2}{i}{T}} e_{x_i}}. 
\end{align*}
A crucial observation is that the softmax function has a self-bounding property: the magnitude of the derivative of a softmax output is bounded by (and scales with) the output value itself. Consequently, when an attention score is close to zero, its gradient will also be close to zero. This property will play a key role in our subsequent analysis.

At the \( k \)-gram estimator parameterized by \( \tht_*^{k} \), the attention scores are highly sparse. Let \( \cMTk{T}{k} \subseteq [T] \), be the set of past positions whose \( k \)-history matches that of the \((T+1)^{\text{th}}\) token. Formally defined as \( \cMTk{T}{k} = \left\{ i \in [T] : \mathbbm{1} \left( \subseq{i-k+1}{i-1} = \subseq{T-k+2}{T}  \right) \right\} \). By construction, the attention scores for \( \tht_*^{k} \) are given by: 
\begin{align*}
    \sattn{2}{i}{T}  \approx \begin{cases}
        \frac{1}{|\cMTk{T}{k}|} \ ,    \quad &\text{for } i \in \cMTk{T}{k}, \\
        0 \quad &\text{o.w.} \,. 
    \end{cases}    .
\end{align*}
Due to the self-bounding property, the summation in the gradient expression collapses, receiving significant contributions only from tokens within the matching set \( \cMTk{T}{k} \):
\begin{align*}
                \jacb{\theta_{(2)}}{p_{\tht}(\seq{T})}, \jacb{\theta_{(1)}}{p_{\tht}(\seq{T})} \bigg \vert_{\scriptscriptstyle \tht = \tht_*^k} &{\approx} \\
                & \hspace{-1.5cm} \sum_{\colorbox{perle!50}{ $\scriptstyle i \in \cMTk{T}{k}$}}  \jacb{\theta_{(2)}}{\sattn{2}{i}{T}} e_{x_i},  \sum_{\colorbox{perle!50}{ $\scriptstyle i \in \cMTk{T}{k}$}}  
 \jacb{\emd{1}[i]}{\sattn{2}{i}{T}} \ldots
\end{align*}
A key result we establish next is that the derivatives of \( \sattn{2}{i}{T}, \emd{1}[i] \) depend exclusively on the \((k{-}1)\)-history of token \( i \). Several reinforcing factors contribute to this result. Due to the specific structure of the \emph{key} and \emph{value} matrices in the second layer at $\tht_*^k$, 
the gradients with respect to the parameters of the deactivated heads vanish. 
The deactivation, in turn, ensures that the embeddings after the first layer only contain the embeddings of the $(k{-}1)$-history.
Finally, for \( i \in \cMTk{T}{k} \), its \((k{-}1)\)-history is identical to that of token \( T{+}1 \). Together, these factors imply that the derivatives of the second layer are solely a function of $(k{-}1)$-history of token \( T{+}1 \). The complete proof is in Appendix~\ref{subsec:proof-stationary}. 

\subsection{Extensions and Perspectives}

\myparagraph{Beyond contiguous history.}
Our main analysis focuses on contiguous $k$-gram histories (suffixes). However, as discussed previously in section~\ref{subsec:n-gram-LM}, the dependency on the last \(k\) tokens are not more important than the other \(n{-}k\) for the $n$-gram data generating process under a uniform prior on the transition matrices. Our framework can be extended to show that estimators matching \emph{arbitrary} subsets of the $(n{-}1)$-history (e.g., matching only the $(\minus i)$-token for some $i>1$ as a counterpart of the bigram estimator) also correspond to stationary points under certain assumptions. See Appendix~\ref{subsec:beyond_suffixes} for details.

\myparagraph{Towards general transformer architecture.} 
Our parameter construction, techniques and methodology extend naturally to a general transformer architecture. First,  positional encoding, which was not explicitly used in the simplified model, can be incorporated using one-hot positional encoding. The concatenation of the attention head output and residual connections can be replaced with the addition. Additionally, the value matrix in the second layer can be incorporated in the architecture. However, in this case, the transformer's output would no longer be in $\simplex{\voc-1}$, requiring normalization via softmax at the end.
This restriction on the value matrix can also be alleviated by using an MLP layer to approximate the logarithm (see Appendix~\ref{subsec:general_transformer}).

\myparagraph{On the Emergence of Syntactic Structure.} 
Consider the behavior of a gradient-based method at a sub-n-gram, say $\tht_*^k$. As it is a stationary point, training remains at this point for a prolonged period, leading to a plateau in the training curve. However, as training progresses, the model eventually escapes due to landscape curvature or stochastic noise, allowing it to learn a new syntactic structure--- attending to $(-k)$-token---before reaching the next stationary point. This phenomenon is general and has been empirically reported---emergence of a syntactic structure~\citep{chen2024sudden,wei2022emergent}, phase transitions~\citep{olsson2022context,edelman2024evolution}.
By carefully analyzing the loss landscape of a relevant yet simple in-context task, we demonstrate that the stationary points correspond to underlying syntactic structures. Consequently, our work provides insights into why these syntactic structures emerge following extended plateaus.

\ifarxiv
    \def \figsizetwo{10cm}
    \def \heightsizetwo{2.9cm}
    \def \textboxsizefigtwodouble{5.8cm}
    \def \textboxsizefigtwotriple{8.7cm}
    \def \textboxsizefigtwotight{2.8cm}
    \def \textboxsizefigtwofive{14.5cm}

\else
    \def \figsizetwo{10cm}
    \def \heightsizetwo{3.1cm}
    \def \textboxsizefigtwodouble{6.2cm}
    \def \textboxsizefigtwotriple{9.3cm}
    \def \textboxsizefigtwotight{3cm}
    \def \textboxsizefigtwofive{14.5cm}
\fi

\begin{figure*}[t]
\centering
\begin{minipage}{0.29\textwidth}
\begin{subfigure}{\linewidth}
   \includegraphics[width=\textwidth]{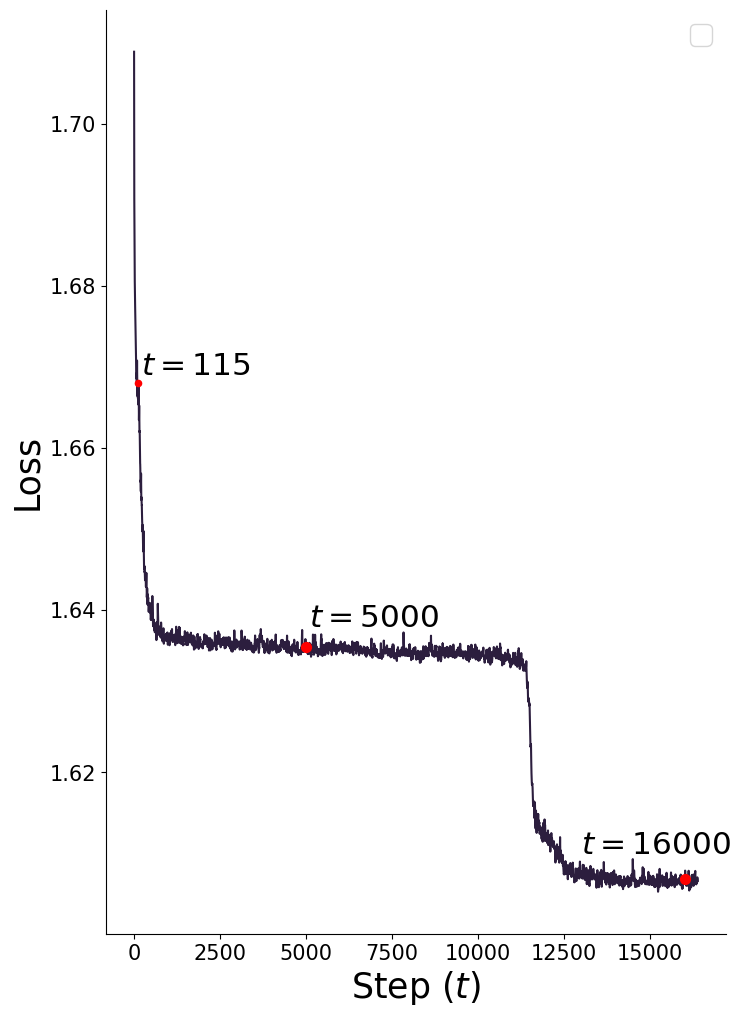}
    \caption{}
 \label{fig:loss}
\end{subfigure}
\end{minipage}
\hspace{0.02\textwidth}
\begin{minipage}{0.6\textwidth}
\rotatebox[origin=c]{90}{
        \parbox{\textboxsizefigtwodouble}{
        \centering 
        \footnotesize{Attention Scores \,\,\, $\sf\left( {\attnh{1}{h}}\right) $ \\
        \vspace{0.1cm}
        \parbox{\textboxsizefigtwotight}{\centering\hspace{4pt}\footnotesize{$h=2$}}
        \hfill
        \parbox{\textboxsizefigtwotight}{\centering \footnotesize{$h=1$}}}}
        \hspace{-8cm}
    }
    \hfill
\begin{subfigure}{\linewidth}    
        \raggedright
        \includegraphics[height=\heightsizetwo]{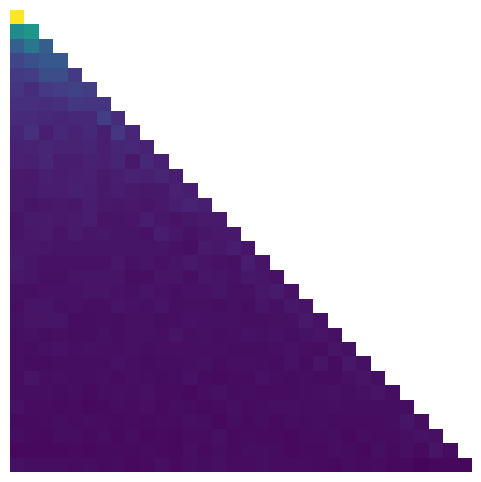}
        \includegraphics[height=\heightsizetwo]{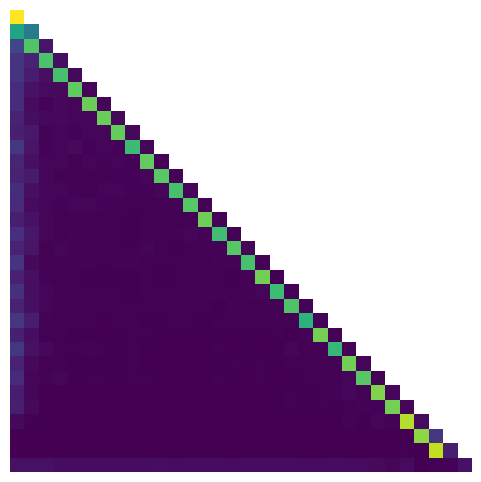}
        \includegraphics[height=\heightsizetwo]{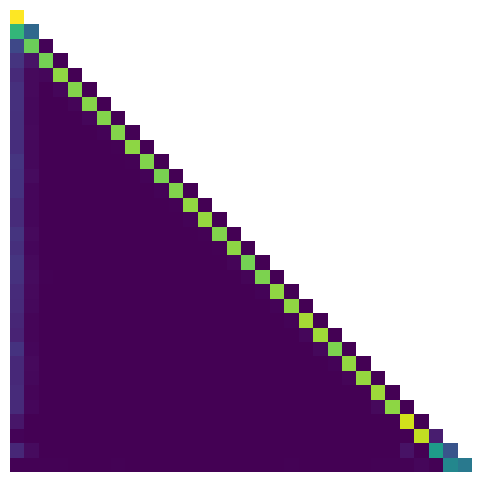}
        \\
        \includegraphics[height=\heightsizetwo]{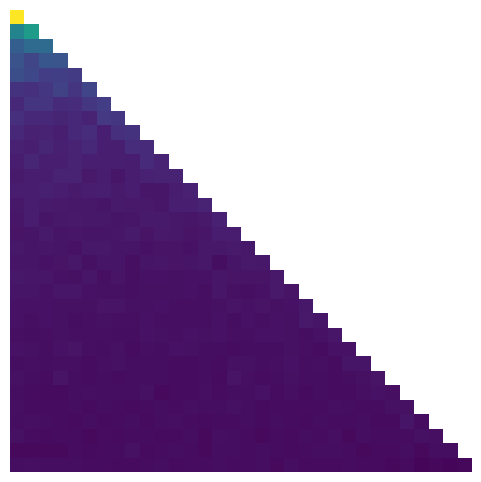}
        \includegraphics[height=\heightsizetwo]{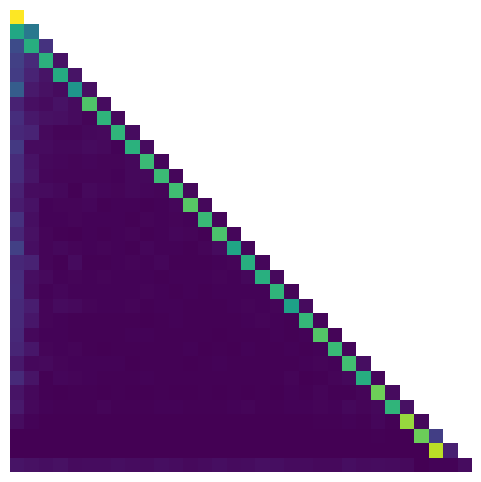}
        \includegraphics[height=\heightsizetwo]{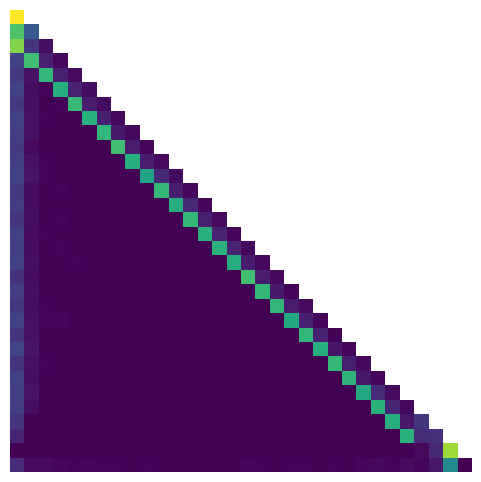}
    \\
        \parbox{\textboxsizefigtwotriple}{
        \centering
        \footnotesize{
        \parbox{\textboxsizefigtwotight}{\centering\hspace{4pt}\footnotesize{$t=115$}}
        \hfill
        \parbox{\textboxsizefigtwotight}{\centering \footnotesize{$t=5000$}}
        \parbox{\textboxsizefigtwotight}{\centering \footnotesize{$t=16000$}} \\
        \vspace{0.1cm}
        \centering
        Training Step (t)}}
        \hspace{-4cm} 
        \caption{}
        \label{fig:attention-maps}
    \end{subfigure}
\end{minipage}
    \vspace{-.3cm}
    \caption{The evolution of the attention heads in the first layer during training. (a) Progression of the test loss during training. The highlighted points are the iterations on the plateaus for which we demonstrate the attention matrices. (b) The evolution of attention scores of the heads
    of the simplified transformer architecture during training representing the tokens it is attending. First, both of the attention heads attend to all the previous tokens uniformly. At the second plateau, they both attend to the previous token. Finally, as the model escapes this plateau, the second attention head learns to attend to $(-2)$-token at the end of training. }
        \vspace{-.6cm}
    \label{fig:experimentattention}
\end{figure*}

\myparagraph{Limitations.} 
Our theoretical results hold in the asymptotic limit of infinite sequence length ($T \to \infty$) and parameter scale ($c \to \infty$). The dependence on $c$ is mild, as the gradient decays exponentially for finite $c$. 
However, there is an inherent difficulty in moving beyond the assumption of infinite sequence length. Existing works on theoretical analysis of learning in-context $n$-gram like sequences with transformers, such as \citet{rajaraman2024transformers} and \citet{nichani2024transformerslearncausalstructure}, also rely on this assumption. It is even unclear which estimator transformers learn at finite sequence lengths, even at the end of training. This makes it particularly challenging to determine what transformers exactly compute during intermediate stages of training.

\section{Experimental Evaluation} \label{sec:experiments}

In this section, we perform experiments on the disentangled transformer introduced in the previous section to examine the stage-wise learning behavior and analyze the different solutions the transformer learns during different stages of training. The code is available at \url{https://github.com/tml-epfl/sub-n-grams-are-stationary}.

\myparagraph{Experimental Setup.} 
We train our model on data generated from a trigram ($n=3$) language model over a vocabulary of size $\cS=5$. Each in-context sequence has a length of $T=32$, and the transition probabilities are drawn from a uniform Dirichlet prior, $\text{Dir}(\alpha \mathbf{1})$, with $\alpha = 0.5$. The model is the two-layer simplified Transformer analyzed in our theory, with two heads in the first layer and an embedding dimension of $\din=5$. We use one-hot token embeddings and train for $2^{14}$ iterations using the Adam optimizer with a constant learning rate of $0.01$ and a batch size of $128$. No weight decay is used. The test loss is evaluated on a separate set of $2^{16}$ sequences.

\myparagraph{Discussion.} 
To accurately predict the next token, the model needs to attend to the previous 
$n-1=2$ tokens in the sequence. Figure~\ref{fig:loss} shows that learning occurs in distinct phases, where the model remains in a plateau for an extended period before quickly jumping to the next one. In Figure~\ref{fig:attention-maps}, we illustrate the evolution of the attention maps from both heads in the first layer at various plateaus during training, providing a fine-grained view of the structure of the model at these stages.
Initially, the attention maps are uniform, and the model does not consider token history in its predictions. Upon reaching the first plateau, both attention heads focus on the previous token (the
$(\minus 1)$-token), and the model behaves like a bigram estimator at this stationary point. In the later stages of training, the second attention head learns to shift its focus to the 
$(\minus 2)$-token, transforming the model into a trigram estimator. The same phenomenon holds for the general attention-only transformers, see Fig~\ref{fig:experimentattention_transformer}.

\section{Conclusion}

In this work, we investigate the problem of learning in-context \(n\)-grams with transformers, specifically focusing on a simplified yet insightful setting to gain a deeper understanding of the dynamics of complex large language models. We first constructed specific parameter configurations within a simplified Transformer architecture that provably implement sub-\(n\)-gram estimators. Our main theoretical result then establishes that these configurations correspond to near-stationary points of the population cross-entropy loss. This finding creates a mechanistic link between the geometry of the loss landscape and the empirical observation of training plateaus corresponding to sub-\(n\)-grams: the model's learning trajectory naturally pauses at these regions of vanishingly small gradient, which represent mastery of simpler, sub-syntactic skills, before eventually escaping to learn more complex dependencies.

Our analysis provides a foundational step, and several exciting avenues for future research remain. The theoretical results are derived under idealized conditions, namely for the population loss, in the limit of infinite sequence length, and without regularization. Extending these findings to the finite-sample regime and incorporating practical considerations like finite context windows and the effect of weight decay is an important direction. Furthermore, our work focuses on the static properties of the loss landscape. A complementary and crucial line of inquiry involves analyzing the training dynamics themselves. Characterizing how stochastic gradient-based methods navigate this structured landscape to escape plateaus and transition between solutions, as explored by~\citet{nichani2024transformerslearncausalstructure}, presents a key challenge for future research.

\section*{Impact Statement}

This paper presents work whose goal is to advance the field of Machine Learning. 
There are many potential societal consequences of our work, none of which we feel must be specifically highlighted here.

\section*{Acknowledgements}

This work was supported by the Swiss National Science Foundation (Grant No. 212111) and an unrestricted gift from Google. A.V. acknowledges funding from a Swiss Data Science Center Fellowship. The authors extend their gratitude to Adway Girish for his valuable feedback on the manuscript and to anonymous reviewers for their insightful comments, which significantly improved the final version.

\bibliography{references}
\bibliographystyle{icml2025}

\newpage
\appendix
\onecolumn 




\newpage
\appendix

\section{Organization of the Supplementary Material} 

\subsection{Links to Materials Referenced in The Main Text.}

First, we present an index of the supporting material referenced in the main text.

\begin{itemize}
    \item The proofs of Lemma~\ref{lem:gradient-CE} and Proposition~\ref{prop:lossland} are provided in Section~\ref{sec:lemmas-crossentro} and the discussion on the stationarity condition for subsequences beyond suffixes is provided at Remark~\ref{rem:stationarity_subseq}.
    \item The proof of the construction of $k$-grams is given in Subsection~\ref{subsec:construction-k-gram} and the proof of Theorem~\ref{thm:stationary-points} is provided in Subsection~\ref{subsec:proof-stationary}.
    \item The discussion on the possible alternate representation for the stationary distribution is provided in Section~\ref{sec:extensions}, and the extension for a general transformer architecture is given in Subsection~\ref{subsec:general_transformer}.  Stationary points conditioned on subsequences that are not suffixes for a fixed $T$ are further discussed in the Subsection~\ref{subsec:beyond_suffixes}
\end{itemize}

\subsection{Outline of the Supplementary Material.}
\begin{itemize}
    \item Section~\ref{sec:support}, provides the construction of the $k$-grams and the proof of the stationarity of the construction (in Subsection~\ref{subsec:construction-k-gram} and Subsection~\ref{subsec:proof-stationary} respectively).
    \item Section~\ref{sec:lemmas-crossentro} provides the proofs of the results on the gradient of the cross-entropy and sufficient stationary conditions, i.e., of Lemma~\ref{lem:gradient-CE} and Proposition~\ref{prop:lossland}.
    \item Section~\ref{sec:derivatives_simplified_transformer} provides a technical lemma related to computing the gradient for two-layer simplified transformer. 
    \item In Section~\ref{sec:proofs}, a technical lemma related to the $k$-gram representations is provided.
\item In Section~\ref{sec:extensions}, we discuss the extensions of the results to a general transformer architecture in Subsection~\ref{subsec:general_transformer} and the stationary points conditioned on subsequences that are not suffixes in Subsection~\ref{subsec:beyond_suffixes}.
\item In Section~\ref{sec:der-self-attention}, we present the derivatives of a single layer self-attention map. 
\end{itemize}

\subsection{Notation and Definitions} 

\paragraph{Notations.} We use $\otimes$ to denote the Kronecker product. We use $\mathrm{vec}$ operator for flattening the matrix to a vector. We use $\basis{i}{d}$ to denote the $i^{\text{th}}$ elementary basis vector of $\R^{d}$. We drop the superscript when $d = \voc$ and $e_i$ denotes the $i^{\text{th}}$ elementary basis vector of $\R^{\voc}$. 

\begin{definition}[Jacobian of a function]
    Let $f : \R^{m \times n} \to \R^{p} $ be a $C_1$-function defined on a variable $X$.  $\jacb{X}{f}$ denotes the Jacobian which is a function from 
    $ \R^{m \times n} \to \R^{p \times mn} $. 
    \end{definition}

\begin{definition} \label{def:k-gram-params}
     Define  $\tht_{*}^{k} = \left\{ \attnh{1}{h}, \valh{1}{h} \right\}_{h \in [n-1]} \cup  \{ \key{2}, \query{2} \} $ as the set of parameters given by the following expressions
\begin{subequations} \label{eq:k-gram}
    \begin{align} 
        (\query{2})^{\top}  &= \sqrt{c} \sum_{j=1}^{\voc} \sum_{h=1}^{k-1} s_{j}^{h-1} (s_j^{h})^{\top}, \label{ap:eq:Q_k_gram}  \\ 
        \key{2} &= \sqrt{c} \sum_{j=1}^{\voc} \sum_{h=1}^{k-1} s_{j}^{h} (s_j^{h})^{\top}, \label{ap:eq:K_k_gram} \\ 
             \attnh{1}{h} &= \begin{cases} c \left( \sum\limits_{l{=}h}^{T{-}1} \basis{\scriptscriptstyle l}{T} \left(\basis{\scriptscriptstyle l{-}h}{T}\right)^{\top} {+}\sum\limits_{l{=}0}^{h{-}1} \basis{\scriptscriptstyle l}{T} \left(\basis{\scriptscriptstyle 0}{T}\right)^{\top}\right),&\text{for}~h{\in}[k{-}1], \\ 
\qquad \quad \text{arbitrary} \quad&\text{otherwise},   \end{cases} \label{ap:a:eq:attn1_k_gram}  \\
        \valh{1}{h} &= \begin{cases}
            \sum\limits_{j=1}^{\voc} s_js_j^{\top} ~ \text{for} ~ h \in [k{-}1], \\
            0 \quad \text{o.w.}
        \end{cases}. \label{ap:a:eq:val1_k_gram}
    \end{align}
\end{subequations}
\end{definition}

\section{Construction and the Stationarity of the \texorpdfstring{$k$}{\text k}-gram solutions}\label{sec:support}

In this section, we present the proofs of the technical lemmas and theorems presented in the main text.  First, we present the parameter configurations representing $k$-grams and then prove the stationarity of these configurations.

\subsection{Representing \texorpdfstring{$k$}{\text k}-grams with Simplified Transformer} \label{subsec:construction-k-gram}

\paragraph{Forward pass of the transformer.} Before presenting the proofs, we give an alternate form of the forward pass of the simplified transformer. First, we express the embeddings after the first layer as follows:
\begin{align}
\emd{1}[i] = \cW_{o}^{(0)}  \emd{0}[i] +  \sum_{h=1}^{n-1} \cW_{o}^{(h)}  \sum_{j = 1}^{i} \sattnh{1}{h}{j}{i} \, \valh{1}{h} \emd{0}[j], \notag
\end{align}
where $\sattnh{1}{h}{j}{i}$ denotes the attention scores  for key $j$ and query $i$ of the attention head $h$ in the $1^{\text{st}}$ layer. The matrices $\cW_{o}^{(h)}$ are used for concatenation. Formally,  the attention scores and the matrices are given as follows: 
\begin{align}
\sattnh{1}{h}{j}{i} &= \frac{\exp\{\attnh{1}{h}{[i,j]}\}}{\sum_{l=1}^{i} \exp\{\attnh{1}{h}{[i,l]}\}}, \notag \\
    \cW_{o}^{(h)}  &= \sum_{j=1}^{\voc} s_{j}^{h} s_j^{\top}. \notag
\end{align}
For the $t^{\text{th}}$ token, the output embedding of the second layer writes
\begin{align}
\emd{2}[t] &= \sum_{i = 1}^{t} \sattn{2}{i}{t} \emd{0}[i], \notag
\end{align}
where $\sattn{2}{i}{t}$ denotes the attention scores in the second layer for key $i$ and query $t$, and are given by
\begin{align} \label{eq:attention-scores-layer-2}
    \sattn{2}{i}{t} &= \frac{\exp{\scal{\key{2} \emd{1}[i]}{\query{2} \emd{1}[t]}}}{ \sum\limits_{j = 1}^{t} \exp{\scal{\key{2} \emd{1}[j]}{\query{2} \emd{1}[t]}}}.
\end{align}
The final output probabilities of the token $t$ after the unembedding are given by 
\begin{align}
p_{\tht}(\seq{t}) =  U \emd{2}[t] = \sum_{i=1}^{t} \sattn{2}{i}{t} U \emd{0}[i] = \sum_{i=1}^{t} \sattn{2}{i}{t} e_{x_i}. \notag
\end{align}
%
%
%
\paragraph{$\tht^*_k$ represents a $k$-gram.} Moving forward we provide the proof for the $k$-gram MLE constructions. To support the proofs, we define a subset of tokens, $\cM_{t} \subseteq [t]$, which consists of tokens whose $k$-history matches the $k$-history of the $(t+1)^{\text{th}}$ token. Formally, it is defined as follows:
\begin{align}
\cM_{t}^{k} = \left\{ i : \mathbbm{1} \left( \subseq{i-k+1}{i-1} = \subseq{t-k+2}{t}  \right) \right\}. \notag
\end{align}

We begin with a formal lemma to show that $\tht_k^*$ represents a $k$-gram in the asymptotic limit where $c \to \infty$. To make the lemma easier to read on its own, we have restated the standard notations.

\begin{lemma}\label{ap:lem-k-gram-representation}
Let $\sattnh{1}{h}{j}{i}$ denote the attention score of the key and query element $(i,j)$ in the $h^{\text{th}}$ head of the first layer and let $\sattn{2}{i}{t}$ denote the attention score between elements $i,t$ in the second layer.  Let  $ \cM_t^{k} = \left\{ i \in [k, t] :   \left( \subseq{i-k+1}{i-1} = \subseq{t-k+2}{t}  \right) \right\}$ be the set of tokens which match the $k$-history of the $(t{+}1)^{\text{th}}$ token.

For the parameters \(\tht_*^k\) defined in Eq.~\eqref{eq:k-gram-params}, in the limit $c \to \infty$, the attention scores of the activated heads in the first layer, i.e., for heads $h$ where $h \leq k-1$, are given by, 
\begin{align} \label{lem:k-gram-layer-1}
         a_{(j,i)}^{(1,h)} = \begin{cases} 1  &\text{ when } i > h \text{ and } j = i-h, \\ 
         1 &\text{ when } i \leq h \text{ and } j = 1, \\ 
         0 &\text{otherwise}
         \end{cases},
\end{align}
and the attention scores in the second layer are given by
\begin{align} \label{lem:k-gram-layer-2}
a^{2}_{({i},{t})} = \begin{cases}
    \frac{1}{|\cM^k_t|} & \text{ for } i \in \cM_t^{k}, \\
    0 & \text{otherwise}
\end{cases}. 
\end{align}

\end{lemma}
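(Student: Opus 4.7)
My plan is to decouple the analysis layer by layer. For the first layer, I would fix a head $h \in [k-1]$ and examine row $i$ of $\attnh{1}{h}$ defined in \eqref{ap:a:eq:attn1_k_gram}. By inspection of the two sums in that expression, row $i$ contains the value $c$ at exactly one column (column $i-h$ when $i>h$, and column $1$ when $i \leq h$) and $0$ elsewhere. Applying the causal-masked softmax and letting $c \to \infty$, the mass concentrates entirely on that single nonzero column, giving exactly \eqref{lem:k-gram-layer-1}. Combined with $\valh{1}{h} = \sum_j s_j s_j^\top$, which acts as the identity on the orthogonal embedding family $\{s_j\}$, and the block-concatenation matrices $\cW_o^{(h)}$ defined in the forward pass, this yields the clean first-layer representation
\begin{align*}
    \emd{1}[i] = \bigl[s_{x_i}^\top,\; s_{x_{i-1}}^\top,\; \ldots,\; s_{x_{i-k+1}}^\top,\; 0^\top,\; \ldots,\; 0^\top\bigr]^\top
\end{align*}
for $i > k-1$ (with the obvious boundary modification when $i \leq k-1$, where earlier positions are replaced by $s_{x_1}$). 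The deactivated heads contribute zero because $\valh{1}{h}=0$ for $h \geq k$, regardless of the arbitrary choice of $\attnh{1}{h}$.

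For the second layer, I would compute the pre-softmax score $\emd{1}[t]^\top (\query{2})^\top \key{2}\, \emd{1}[i]$ using the product $(\query{2})^\top\key{2} = c\sum_j\sum_{h=1}^{k-1} s_j^{h-1}(s_j^h)^\top$ from \eqref{ap:eq:Q_k_gram}--\eqref{ap:eq:K_k_gram}. The projector $(s_j^h)^\top \emd{1}[i]$ reads the $h$-th block of $\emd{1}[i]$, which equals $s_{x_{i-h}}$, so orthogonality of the embeddings collapses the inner product against $s_j$ to $\mathbbm{1}\{j = x_{i-h}\}$. Summing over $j$ and $h$, the score reduces to
\begin{align*}
    \scal{\key{2}\,\emd{1}[i]}{\query{2}\,\emd{1}[t]} = c\sum_{h=1}^{k-1} \mathbbm{1}\{x_{t-h+1} = x_{i-h}\}.
\end{align*}
This score attains its maximum value $c(k-1)$ precisely on the set $\cM_t^k$ of positions $i$ whose $(k-1)$-history matches the suffix $\subseq{t-k+2}{t}$, and is strictly smaller on every other index.

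Taking $c \to \infty$ in the second-layer softmax \eqref{eq:attention-scores-layer-2} collapses it to a hardmax that distributes mass uniformly over the argmax set, yielding $\sattn{2}{i}{t} = 1/|\cM_t^k|$ for $i \in \cM_t^k$ and $0$ otherwise, which is \eqref{lem:k-gram-layer-2}. The main obstacle I anticipate is bookkeeping the boundary behavior at small positions (tokens with $i \leq k-1$ where the full $(k-1)$-history is unavailable); these indices lie outside $\cM_t^k$ by its definition restricting to $i \in [k,t]$, so they contribute neither to the matching set nor to the argmax in the limit, and can be safely absorbed into the complement of $\cM_t^k$. Beyond that, the argument is largely bookkeeping based on the orthogonality of the embeddings and the sparsity of the constructed attention matrices.
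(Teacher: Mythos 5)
Your proposal is correct and follows essentially the same route as the paper: a layer-by-layer computation in which the sparsity of each $\attnh{1}{h}$ row forces the first-layer softmax onto the $(-h)$-token, orthogonality of the embeddings collapses the second-layer score to $c\sum_{h=1}^{k-1}\mathbbm{1}\{x_{t+1-h}=x_{i-h}\}$, and the $c\to\infty$ hardmax spreads mass uniformly over the matching set $\cM_t^k$. The only difference is that the paper proves two non-asymptotic sub-lemmas with explicit $\cO(e^{-c})$ perturbation terms (which it later reuses in the proof of Theorem~\ref{thm:stationary-points}) and then takes the limit, whereas you argue directly at $c\to\infty$, which is sufficient for this lemma; your treatment of the boundary positions $i<k$ is no less careful than the paper's own.
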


In order to prove the above lemma, we provide two lemmas one for each layer of the transformer, which detail the attention scores the respective layers compute. The lemmas below are non-asymptotic in $c$ and provides the bounds on the pertubation in the case of finite $c$.  
Intuitively, the attention scores of the $h^{\text{th}}$ head in the first layer are computed such that the each token attends to its corresponding $(-h)$-th token, 
The attention scores of the second layer ensures that $t^{\text{th}}$ token attends tokens whose $k$-history matches the $k$-history of the $(t{+}1)^{\text{th}}$ token. The $\cO$ notation hides the terms polynomial in $c$.

\begin{restatable}
{lem-rest}{LemSimplifiedTransformerFirstLayer}[First-Layer]
\label{lem:simplified_transformer_first_layer_k_gram}
With the set of parameter $\tht^*_k$ given in Def.~\ref{def:k-gram-params}, 
\begin{enumerate}[topsep = 0pt, itemsep = 5pt, label = (\alph*)]
    \item  The attention score of head $h$ in layer 1 of key $i$ and query $j$ denoted by $\sattnh{1}{h}{j}{i}$ is 
    \begin{align}
         \sattnh{1}{h}{j}{i} = \begin{cases} 1 - \cO(i e^{-c}) \text{ when } i \geq h \text{ and } j = i-h, \\ 
         1 - \cO(i e^{-c}) \text{ when } i < h \text{ and } j = 0, \\
         \cO(e^{-c}) \text{ o.w.}. 
         \end{cases}.
    \end{align}
    \item The first layer outputs the embeddings:
    \begin{align}
        \emd{1}[i] = \begin{cases}
            s_{x_i}^{0} + \sum\limits_{h=1}^{k-1} s_{x_{i-h}}^{h} + \cO(  i  e^{-c} ) \cdot \mathbf{1} \quad \text{ for } i \geq {k-1},  \\ 
            s_{x_i}^{0} + \sum\limits_{h=1}^{i} s_{x_{i-h}}^{h}  + \sum\limits_{h = i+1}^{k-1} s_{x_0}^{h} + \cO( i  e^{-c} ) \cdot \mathbf{1} \quad \text{ for } i < k-1. 
        \end{cases}.  
    \end{align}
\end{enumerate}
\end{restatable}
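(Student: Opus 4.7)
The plan is to prove the two parts in sequence: first computing the attention scores of the first-layer heads in closed form using the structure of the parameters in Def.~\ref{def:k-gram-params}, and then plugging these scores into the first-layer forward pass to obtain the block embedding.

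For part (a), I would observe that for any activated head $h \in [k-1]$, the matrix $\attnh{1}{h}$ is built as a sum of rank-one terms of the form $c \basis{l}{T}(\basis{l-h}{T})^\top$ for $l \geq h$ and $c \basis{l}{T}(\basis{0}{T})^\top$ for $l < h$. Hence the entry $\attnh{1}{h}[i,j]$ equals $c$ at exactly one key position for each query $i$---namely $j = i-h$ if $i \geq h$, and $j = 0$ otherwise---and equals $0$ at every other key $j \leq i$. Applying the causal masked softmax over the (at most) $i+1$ admissible keys then gives a sharp peak at the distinguished key $j^\star$ with value $e^c/(e^c + i)$, and value $1/(e^c + i)$ at every other key. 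A one-step Taylor expansion of $1/(1 + i e^{-c})$ yields the bounds $1 - \cO(i e^{-c})$ at $j^\star$ and $\cO(e^{-c})$ elsewhere, exactly as claimed.

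For part (b), I would start from the forward-pass expression of the first-layer output,
\[
\emd{1}[i] = \cW_o^{(0)} \emd{0}[i] + \sum_{h=1}^{n-1} \cW_o^{(h)} \sum_{j=1}^{i} \sattnh{1}{h}{j}{i}\, \valh{1}{h} \emd{0}[j],
\]
and simplify each contribution separately. The skip-connection term reduces to $s_{x_i}^{0}$ by the definition $\cW_o^{(0)} = \sum_j s_j^{0} s_j^\top$ together with $\emd{0}[i] = s_{x_i}$ and orthonormality of the token embeddings. For deactivated heads $h \geq k$, the value matrix $\valh{1}{h}$ is zero by construction, so these heads contribute exactly $0$. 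For activated heads $h \in [k-1]$, orthonormality gives $\valh{1}{h} \emd{0}[j] = s_{x_j}$, and $\cW_o^{(h)} s_{x_j} = s_{x_j}^{h}$ places the embedding of $x_j$ in the $h$-th block. Substituting the attention scores from part (a), the dominant contribution at query $i$ from head $h$ is $s_{x_{i-h}}^{h}$ when $i \geq h$ and $s_{x_0}^{h}$ when $i < h$, while every non-dominant key contributes at most $\cO(e^{-c})$ per coordinate. Summing the off-peak mass over the (at most) $i$ non-dominant keys and the $k-1$ activated heads collects an $\cO(i e^{-c}) \cdot \mathbf{1}$ additive correction, and separating the regimes $i \geq k-1$ and $i < k-1$ reproduces precisely the two formulas in the lemma.

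The computation is almost entirely mechanical; no single step is deep. The main care lies in correctly handling the edge regime $i < h$, where a query cannot attend $h$ positions back and instead concentrates on $j = 0$, and in bookkeeping the error bounds so that the accumulated perturbation from the off-peak attention mass remains within $\cO(i e^{-c})$ after summing across keys and across the $k-1$ activated heads. Once these boundary cases are carefully tracked, both parts follow directly from the explicit structure of $\tht_*^k$ and the orthonormality of the token embeddings.
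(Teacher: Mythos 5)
Your proposal is correct and follows essentially the same route as the paper's own proof: read off the single entry equal to $c$ in each row of $\attnh{1}{h}$ from its rank-one structure, apply the masked softmax to get a peak of value $1-\cO(ie^{-c})$ at $j=i-h$ (or $j=0$ in the edge regime $i<h$) and $\cO(e^{-c})$ elsewhere, then substitute into the first-layer forward pass using $\valh{1}{h}=0$ for deactivated heads, $\valh{1}{h}s_{x_j}=s_{x_j}$ and $\cW_o^{(h)}s_{x_j}=s_{x_j}^h$ for activated ones, and bound the accumulated off-peak mass by $\cO(ie^{-c})\cdot\mathbf{1}$. The only differences are cosmetic (e.g., your denominator $e^c+i$ versus the paper's $e^c+(i-1)$, which is absorbed in the big-O), so no gap remains.
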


\begin{restatable}
{lem-rest}{LemSimplifiedTransformerSecondLayer}[Second-Layer] \label{lem:simplified_transformer_second_layer_k_gram}
    With the set of parameter $\tht^*_k$ given in Def.~\ref{def:k-gram-params}, the attention scores of the second layer are
\begin{align}
\sattn{2}{i}{t} = \begin{cases}
    \frac{1}{|\cM_t^k|} - \frac{\cO( t e^{-c})}{|\cM_t^k|^2} \text{ for } i \in \cM_t, \\
     \frac{\cO(e^{-c})}{|\cM_t^k|}  \text{ o.w. }
\end{cases}
\end{align}. 
\end{restatable}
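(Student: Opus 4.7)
The plan hinges on two ingredients: the explicit form of the first-layer embeddings provided by Lemma~\ref{lem:simplified_transformer_first_layer_k_gram}, and the orthogonality of the block-embedding family $\{s_j^h\}_{j,h}$. First, I would substitute the first-layer embeddings into the second-layer logit $\scal{\key{2}\emd{1}[i]}{\query{2}\emd{1}[t]}$. Assuming momentarily an exact first layer, so that $\emd{1}[i] = s_{x_i}^0 + \sum_{h=1}^{k-1} s_{x_{i-h}}^h$ (with appropriate padding when $i < k-1$), and using the construction in \eqref{ap:eq:Q_k_gram}--\eqref{ap:eq:K_k_gram}, the composition $(\query{2})^{\top}\key{2} = c\sum_{j=1}^{\voc}\sum_{h=1}^{k-1} s_j^{h-1}(s_j^h)^{\top}$ acts as an \emph{shift-and-project} operator on the block decomposition. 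Invoking orthonormality of the block-embeddings yields the clean identity
\begin{align*}
\scal{\key{2}\emd{1}[i]}{\query{2}\emd{1}[t]} = c \sum_{l=0}^{k-2} \mathbbm{1}\{x_{t-l} = x_{i-l-1}\},
\end{align*}
that is, the logit counts the matches between the $(k{-}1)$-histories preceding positions $t{+}1$ and $i$.

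Next, I would exploit the sharp separation of logits induced by this identity: by definition of $\cM_t^k$, the logit equals $c(k-1)$ for every $i \in \cM_t^k$ and is at most $c(k-2)$ for every $i \notin \cM_t^k$. Plugging these into the softmax denominator gives, for $i \in \cM_t^k$,
\begin{align*}
\sattn{2}{i}{t} = \frac{1}{|\cM_t^k| + \sum_{j \notin \cM_t^k} e^{-c(k-1-m_j)}} = \frac{1}{|\cM_t^k|} - \frac{\cO(te^{-c})}{|\cM_t^k|^2},
\end{align*}
where $m_j \leq k-2$ denotes the number of matches at index $j$, and the second equality follows from the algebraic identity $\frac{1}{a+x}=\frac{1}{a}-\frac{x}{a(a+x)}$ together with the fact that the at-most-$t$ nonmatching terms are each bounded by $e^{-c}$. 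For $i \notin \cM_t^k$, the stated $\cO(e^{-c})/|\cM_t^k|$ bound follows directly by dominating the numerator by $e^{c(k-2)}$ and the denominator by $|\cM_t^k|\, e^{c(k-1)}$.

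Finally, I would propagate the first-layer perturbation through this calculation. The $\cO(ie^{-c})$ error in each $\emd{1}[i]$, once passed through $(\query{2})^{\top}\key{2}$ whose entries scale as $\cO(c)$, contributes at most $\cO(cte^{-c})$ to each logit; since this is exponentially smaller than the $c$-scale gap between matching and non-matching logits, the multiplicative correction to numerator and denominator is $1+o(1)$ and is absorbed into the stated error terms for $c$ large. The main obstacle in the argument is the careful bookkeeping of two secondary effects: (i) the boundary tokens $i < k$, whose padded histories could in principle produce coincidental $c(k-1)$ logits via matches with the placeholder $s_{x_1}^h$ entries---these give at most $k-1$ extra terms that need to be absorbed into the $\cO(te^{-c})/|\cM_t^k|^2$ error provided $|\cM_t^k|$ does not vanish---and (ii) the transfer of the first-layer $\cO(ie^{-c})$ error through the softmax so that it remains sub-leading to the $\cO(te^{-c})$ contribution arising from non-matching positions.
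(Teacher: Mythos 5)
Your proposal is correct and follows essentially the same route as the paper's proof: expand the second-layer logit using the first-layer embeddings and block-orthogonality to obtain the match-counting identity $c\sum_{h=1}^{k-1}\mathbbm{1}\{x_{i-h}=x_{t+1-h}\}$, exploit the gap between the $c(k-1)$ logits on $\cM_t^k$ and the $\le c(k-2)$ logits elsewhere, and bound the softmax scores accordingly. Your extra bookkeeping (the boundary tokens $i<k$ and the $\cO(cte^{-c})$ propagation of the first-layer error) is slightly more explicit than the paper's treatment, which simply neglects the doubly-exponential corrections, but it does not change the argument.
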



For the proof of the above lemmas, see Section~\ref{sec:proofs}.

\paragraph{Proof of the construction for $k$-gram MLE in the limit $c \to \infty$.} We give the proof of construction for any $k$ and when $k=n$ we get the $n$-gram estimator. The final output probabilities writes 
\begin{align}
p_{\tht}(\seq{t}) = \sum_{i=1}^{t} \sattn{2}{i}{t} e_{x_i}. \notag
\end{align}
In the limit $c \to \infty$, the attention scores from the second layer from Lemma~\ref{lem:simplified_transformer_second_layer_k_gram} is given by, 
\begin{align*}
    \sattn{2}{i}{t}  = \begin{cases}
        \frac{1}{|\cM_{t}^k|} \ ,    \quad \text{for } i \in \cM_{t}^k, \\
        0 \quad \text{o.w.}
    \end{cases}    .
\end{align*}
Using this the output probabilities are only supported by the tokens in $\cM_{t}^k$ and is given by,
\begin{align*}
    p_{\tht}(\seq{t}) = \frac{1}{|\cM_{t}^k|} \sum_{i \in \cM_{t}^k} e_{x_i}.
\end{align*}
Now 
\begin{align*}
    p_{\tht}(\seq{t}) [s] = \frac{1}{|\cM_{t}^k|} \sum_{i} \mathbbm{1}\{ i  \in \cM_t^k \} \mathbbm{1} \{ x_i = s \} .
\end{align*}
which exactly matches the $k$-gram MLE estimator in \defref{def:kgram-est}. 
\subsection{Proof of Stationary Points with Simplified Transformer} \label{subsec:proof-stationary}

In this subsection, we will show that the set of parameters given by $\tht_k^*$ are indeed near-stationary by providing a bound on the norm of the gradient. 

\Thmnearstationarypoints*

\begin{proof}
    The  proof is an application of Lemma~\ref{lem:derv_two_layer} for the transformer parameters that compute the $k$-gram MLE estimator $\tht_*^k$ given in Def.~\ref{def:k-gram-params}.
    \paragraph{Parameters of the second layer.} From Lemma~\ref{lem:derv_two_layer}, the derivatives of $p_{\tht}$ with respect to the second layer are
 \begin{align*}
        \jacb{\key{2}}{p_{\tht}} &= \sum_{i=1}^{t} \sattn{2}{i}{t} \, 
        \left( e_{x_i} \right) \otimes \mathrm{vec} \left( \query{2} \emd{1}[t] ( \emd{1}[i] - \avgattn{1}{t} )^{\top} \right)^{\top}, \\ 
        \jacb{\query{2}}{p_{\tht}} &=
          \sum_{i=1}^{t} \sattn{2}{i}{t} \left( e_{x_i} \right) \otimes \mathrm{vec} \left( \key{2}  ( \emd{1}[i] - \avgattn{1}{t} )  (\emd{1}[t])^{\top} \right)^{\top}. 
    \end{align*}
    Before computing these quantities at $\tht_*^k$, we gather the attention scores and weighted embeddings from Lemma~\ref{lem:simplified_transformer_first_layer_k_gram},~\ref{lem:simplified_transformer_second_layer_k_gram}.    \begin{align} 
        \emd{1}[i] &= 
        \begin{cases}
            s_{x_i}^{0} + \sum\limits_{h=1}^{k-1} s_{x_{i-h}}^{h} + \cO(  i  e^{-c} ) \cdot  \mathbf{1} \quad \text{ for } i \geq {k-1},  \\ 
            s_{x_i}^{0} + \sum\limits_{h=1}^{i} s_{x_{i-h}}^{h}  + \sum\limits_{h = i+1}^{k-1} s_{x_0}^{h} + \cO( i  e^{-c} ) \cdot \mathbf{1} \quad \text{ for } i < k-1. 
        \end{cases}, \label{eq:proof-r-1-i} \\
        \sattn{2}{i}{t} &= \begin{cases}
    \frac{1}{|\cM_t^k|} - \frac{\cO( t e^{-c})}{|\cM_t^k|^2} \text{ for } i \in \cM_t^k, \\
\frac{\cO(e^{-c})}{|\cM_t^k|} \text{ o.w. }
    \end{cases} \label{eq:proof-a-2-i-t}
    \end{align}
The average embedding,  
    \begin{align*}
     \avgattn{1}{t} &= \sum_{i = 1}^{t} \sattn{2}{i}{t} ~ \emd{1}[i] = \sum_{i  \in \cM_t^k } \sattn{2}{i}{t} \emd{1}[i]  \, +  \sum_{i  \not \in \cM_t^k } \sattn{2}{i}{t} \emd{1}, 
         \end{align*}
The deviation due to finite weights can be controlled as the following, 
     \begin{align*}
          \left\| \sum_{i  \not \in \cM_t^k } \sattn{2}{i}{t} \emd{1}[i]  \right\|_{\infty} &=  \sum_{i  \not \in \cM_t^k } \sattn{2}{i}{t} ~\sup_{i \in [T]}  ~ \|  \emd{1}[i] \|_{\infty} = \frac{\cO(t e^{-c})}{|\cM_t^k |},
    \end{align*}
    as $\|  \emd{1}[i] \|_{\infty} \leq 1$ for all $i$ and $\sattn{2}{i}{t}$ from Eq.~\eqref{eq:proof-a-2-i-t} for $i \not \in \cM_t^k$. Now, we consider the summation $ \sum_{i  \in \cM_t^k } \sattn{2}{i}{t} \emd{1}[i] $.
    \begin{align*}
    \sum_{i  \in \cM_t^k } \sattn{2}{i}{t} \emd{1}[i] = \sum_{i  \in \cM_t^k } \left[  \frac{1}{\cM_t^k} - \frac{\cO( t e^{-c})}{|\cM_t^k|^2} \right] \emd{1}[i] = \frac{1}{|\cM_t^k|}\sum_{i \in  \cM_t^k} \emd{1}[i] - \frac{\cO( t e^{-c})}{|\cM_t^k|} \mathbf{1}. 
    \end{align*}
    
    Recall that for $i \in \cM_t^k$, $\emd{1}[i]$ from Eq.~\eqref{eq:proof-r-1-i} gives
    \begin{align*}
    \emd{1}[i] = s_{x_i}^{0} + \sum\limits_{h=1}^{k-1} s_{x_{i-h}}^{h} + \cO(  i  e^{-c} ) \cdot  \mathbf{1}. 
    \end{align*}
    Now we use the definition of the set $\cM_{t}^k$ to simplify the above expressions of $\emd{1}[i]$ and $\avgattn{1}{t}$. Note that $x_{i-h} = x_{t+1 - h}$ for $h \in [k-1], i \in \cM_{t}^k$. Using this, we have,
    \begin{align*}
            \emd{1}[i] &= s_{x_i}^{0} + \sum\limits_{h=1}^{k-1} s_{x_{t+1-h}}^{h} + \cO(  i  e^{-c} ) \cdot  \mathbf{1}, \\
            \frac{1}{| \cM_t^k|} \sum_{i \in \cM_t^k} \emd{1}[i] &= \frac{1}{| \cM_t^k|} \sum_{i \in \cM_t^k} s_{x_i}^{0} + \sum\limits_{h=1}^{k-1} s_{x_{t+1-h}}^{h} + \cO(  t e^{-c} ) \cdot  \mathbf{1}. 
    \end{align*}
        Combining them, we get 
\begin{align*}
     \left\| \sum_{i \in \cM_t^k}  \sattn{2}{i}{t} \emd{1}[i] - \frac{1}{|\cM_t^{k}|} \sum_{i \in \cM_t^k} s^0_{x_i} - \sum_{h=1}^{k-1} s^h_{x_{t{+}1{-}k}} \right\|_{\infty} = \cO(te^{-c}), \\
     \avgattn{1}{t} = \frac{1}{|\cM_t^{k}|} \sum_{i \in \cM_t^k} s^0_{x_i} + \sum_{h=1}^{k-1} s^h_{x_{t{+}1{-}k}}  + \cO(te^{-c}) \cdot \mathbf{1}
\end{align*}
 For $ i \in \cM_t^k$,  
    \begin{align*}
        \emd{1}[i] -  \avgattn{1}{t} &= s^{0}_{x_i} - \frac{\sum_{i \in \cM_t^k} s^0_{x_i}}{\cM_t^{k}} + \cO(te^{-c}) \cdot \mathbf{1}, \\
        \key{2} (\emd{1}[i] -  \avgattn{1}{t}) &= \cO( t \sqrt{c} e^{-c} ) \cdot \mathbf{1}, \\
    \query{2} (\emd{1}[t])  &=  \sqrt{c} \sum_{h=1}^{k-1} s^h_{x_{t+1-h}} + \cO( t \sqrt{c} e^{-c} ) \cdot \mathbf{1}. 
    \end{align*}


For all $i$, 
\begin{align*}
\left\|  \query{2} \emd{1}[t] ( \emd{1}[i] - \avgattn{1}{t} )^{\top}   \right\|_{\infty} &\leq \sqrt{c} ,\\
\left\| \key{2}  ( \emd{1}[i] - \avgattn{1}{t} )  (\emd{1}[t])^{\top}  \right\|_{\infty} &\leq \sqrt{c} . 
\end{align*}

Recalling the gradients, 
 \begin{align*}
        \jacb{\key{2}}{p_{\tht}} &= \sum_{i=1}^{t} \sattn{2}{i}{t} \, 
        \left( e_{x_i} \right) \otimes \mathrm{vec} \left( \query{2} \emd{1}[t] ( \emd{1}[i] - \avgattn{1}{t} )^{\top} \right)^{\top}, \\ 
        \jacb{\query{2}}{p_{\tht}} &=
          \sum_{i=1}^{t} \sattn{2}{i}{t} \left( e_{x_i} \right) \otimes \mathrm{vec} \left( \key{2}  ( \emd{1}[i] - \avgattn{1}{t} )  (\emd{1}[t])^{\top} \right)^{\top}. 
    \end{align*}
We split the gradients supported on $\cM_t^k$ and its complement, 
\begin{align*}
\jacb{\query{2}}{p_{\tht}} &= \sum_{i \in \cM_t^k}   \sattn{2}{i}{t} \left( e_{x_i} \right) \otimes \mathrm{vec} \left( \key{2}  ( \emd{1}[i] - \avgattn{1}{t} )  (\emd{1}[t])^{\top} \right)^{\top}  + \sum_{i \not \in \cM_t^k}  \sattn{2}{i}{t} \left( e_{x_i} \right) \otimes \mathrm{vec} \left( \key{2}  ( \emd{1}[i] - \avgattn{1}{t} )  (\emd{1}[t])^{\top} \right)^{\top} , \\ 
        &= \cO( t \sqrt{c} e^{-c} ) \cdot \mathbf{1} + \frac{\cO( t \sqrt{c} e^{-c} ) \cdot \mathbf{1}}{|\cM_t^k|}.
\end{align*}
The final gradient, 
\begin{align} \label{eq:proof-gradient-Q}
\jacb{\query{2}}{p_{\tht}} &= \cO( t \sqrt{c} e^{-c} ) \cdot \mathbf{1} + \frac{\cO( t \sqrt{c} e^{-c} ) \cdot \mathbf{1}}{|\cM_t^k|}.
\end{align}
On the similar lines, 
\begin{align*} 
\jacb{\key{2}}{p_{\tht}} &= \sqrt{c} \frac{1}{|\cM_t^k|} \sum_{i \in \cM_t^k} 
e_{x_i} \otimes \mathrm{vec} \left(  \left[ \sum_{h=1}^{k-1} s^h_{x_{t+1-h}} \right] 
\left[s^{0}_{x_i} - \frac{\sum_{i \in \cM_t^k} s^0_{x_i}}  {|\cM_t^{k}|} \right]^{\top} \right) + \cO( t \sqrt{c} e^{-c} ) \cdot \mathbf{1}.
\end{align*}

Denote \begin{align*}
        \phi(\subseq{t-k+2}{t}) \coloneqq  \sum_{h=1}^{k-1} s_{\ele{t+1 -h}}^{h}, \\
        \bar{s}_{t}^{0} \coloneqq \frac{1}{|\cM_{t}^k|} \sum_{i \in \cM_{t}^k}  s_{\ele{i}}^0. 
    \end{align*}
        Note that the first term in the above expression is only a function of $k$-history $\subseq{t-k+2}{t}$.
Using this,  
\begin{align*}
    \jacb{\key{2}}{p_{\tht}} &= \sqrt{c}\frac{1}{|\cM_t^k|} \sum_{i \in \cM_t^k} 
e_{x_i} \otimes \mathrm{vec} \left(   \phi(\subseq{t-k+2}{t}) 
\left[s^{0}_{x_i} - \bar{s}_{t}^{0} \right]^{\top} \right) + \cO( t \sqrt{c} e^{-c} ) \cdot \mathbf{1}, \\
&= c^{1/2} \frac{1}{|\cM_t^{k}|} \sum_{a \in \voc} \# \{ a \in \cM_t^{k} \}  \left( e_{a} \right) \otimes \mathrm{vec} \left( \phi(\subseq{t-k+2}{t}) \left( s_{a}^{0} - \bar{s}_{t}^{0} \right)\right)^{\top} +  \cO( t \sqrt{c} e^{-c} ) \cdot \mathbf{1}.
\end{align*}

Note that the second term $\bar{s}_{t}^{0}$ is a function of the $k$-gram MLE $\hat{p}_k$, precisely,  $$ \hat{p}_k[a] =  \frac{\# \{ a \in \cM_t^{k} \} }{|\cM_t^{k}|}. $$ Using this $ \bar{s}_{t}^{0} =  S^{0} \hat{p}_k $ where $S^{0}$ is the $\mathbb{R}^{nd \times nd}$ matrix where the first block is the embedding matrix and $0$'s everywhere else.  Using this 

\begin{align}\label{eq:proof-gradient-K}
    \jacb{\key{2}}{p_{\tht}} &= \sqrt{c}  \sum_{a \in \voc} \hat{p}_k[a] 
\, e_{a} \otimes \mathrm{vec} \left(   \phi(\subseq{t-k+2}{t}) 
\left[s^{0}_{a} - S^{0} \hat{p}_k  \right]^{\top} \right) + \cO( t \sqrt{c} e^{-c} ) \cdot \mathbf{1}.
\end{align}

\paragraph{Parameters of the first layer.}
    The derivatives with respect to the first layer parameters are given by,
    \begin{align*}    %
        \jacb{\valh{1}{h}}{\sp_{\tht}} 
        &= \sum_{i = 1}^{t} \sattn{2}{i}{t} \left( \left( e_{x_i} - \eavgattn{1}{t}  \right) (\avgattn{0}{i})^{\top}   \right) \otimes \left( \emd{1}[t]^{\top} \query{2}^{\top} \key{2} \cW_{o}^{(h)} +   ( \emd{1}[i] - \avgattn{1}{t} )^{\top} \key{2}^{\top}   \query{2}  \cW_{o}^{(h)}    \right),  \\
        \jacb{\attnh{1}{h}[i,j]}{\sp_{\tht}}   
        &= \sattnh{1}{h}{j}{i}  \sattn{2}{i}{t}  \left( e_{x_i} - \eavgattn{1}{t}  \right)  \otimes \\
        & \;\; \left( \emd{1}[t]^{\top} \query{2}^{\top} \key{2} \cW_{o}^{(h)} \valh{1}{h} \left( \emd{0}[j] - \avgattnh{0}{h}{i} \right) +   ( \emd{1}[i] - \avgattn{1}{t} )^{\top} \key{2}^{\top}   \query{2}  \cW_{o}^{(h)} \valh{1}{h} \left( \emd{0}[j] - \avgattnh{0}{h}{i} \right)   \right),
        \end{align*}
        where the averages of embeddings weighted with attention scores are given by,
        \begin{align*}
            \eavgattn{1}{t} &= \sum_{i=1}^{t} \sattn{2}{i}{t} e_{x_i} , \\
            \avgattnh{0}{h}{i} &= \sum_{j=1}^{i} \sattnh{1}{h}{j}{i} \emd{0}[j] . 
        \end{align*}
For the heads that are not activated, the derivatives with $\attnh{1}{h}[i,j]$ are $0$, since the $\valh{1}{h}$ is $0$. For the activated heads, the averaged embedding is given by,
\begin{align*}
    \avgattnh{0}{h}{i} &= \sum_{j=1}^{i} \sattnh{1}{h}{j}{i} \emd{0}[j] = \emd{0}[i-h] + \cO( i \exp\{{-}c\} ) \cdot \mathbf{1}.
\end{align*}
Now, consider two cases for the derivatives.
\begin{itemize}
    \item For $j = i-h$, $\emd{0}[j] - \avgattnh{0}{h}{i} =  \cO( \exp\{{-}c\} ) $, hence the derivative is $\cO( i c \exp\{{-}c\} )$.
    \item For $j \neq i-h$, due to the property of the softmax function, the attention scores are $\cO( \exp\{{-}c\} )$, hence the derivative is again $\cO( c \exp\{{-}c\} )$.
\end{itemize}

For the derivative with respect to $\valh{1}{h}$, we again have two cases,
\begin{itemize}
    \item For the non-activated heads $h \geq k$, 
    $\key{2} \cW_{o}^{(h)} = \sum_{h' = 1}^{k-1} \sum_{j=1}^{\voc} s_j^{h'} (s_{j}^{h'})^{\top} \cdot \sum_{j=1}^{\voc} s_{j}^{h} s_j^{\top} = 0  $. Similarly for the other term, $ \query{2}  \cW_{o}^{(h)} = 0$. Hence the derivative is $0$.
    \item For the activated heads $h \leq k-1$,  using the previous computations we have, 
    \begin{align*}
        \key{2} (\emd{1}[i] -  \avgattn{1}{t}) &= \cO( t \sqrt{c} e^{-c} ) \cdot \mathbf{1}, \\
    \query{2} (\emd{1}[t])  &=  \sqrt{c} \sum_{h=1}^{k-1} s^h_{x_{t+1-h}} + \cO( t \sqrt{c} e^{-c} ) \cdot \mathbf{1}, \\
        \key{2} \cW_{o}^{(h)} &= \sqrt{c} \sum_{h' = 1}^{k-1} \sum_{j=1}^{\voc} s_j^{h'} (s_{j}^{h'})^{\top} \cdot \sum_{j=1}^{\voc} s_{j}^{h} s_j^{\top}
    \end{align*}
    The product is solely a function of $k$-history $\subseq{t-k+2}{t}$ and does not depend on $i$. Computing the multiplicative factor in the front,
    \begin{align*}
        \sum_{i = 1}^{t} \sattn{2}{i}{t} \left( \left( e_{x_i} - \eavgattn{1}{t}  \right) (\avgattn{0}{i})^{\top}   \right) &= \sum_{i \in \cM_t^{k}} \sattn{2}{i}{t} \left( \left( e_{x_i} - \eavgattn{1}{t}  \right) (\avgattn{0}{i})^{\top}   \right) + \cO( \exp\{{-}c\} )
    \end{align*}
    Note that $\avgattn{0}{i}= s_{\ele{i-h}} + \cO( i \exp\{{-}c\} )$ and for $i \in \cM_t^{k}$, we have,  $ s_{\ele{i-h}} = s_{\ele{t+1-h}}$. Using this, 
    \begin{align*}
        \sum_{i = 1}^{t} \sattn{2}{i}{t} \left( \left( e_{x_i} - \eavgattn{1}{t}  \right) (\avgattn{0}{i})^{\top}   \right) &= \sum_{i \in \cM_t^{k}} \sattn{2}{i}{t} \left( \left( e_{x_i} - \eavgattn{1}{t}  \right) (s_{\ele{t+1-h}})^{\top}   \right) + \cO( \exp\{{-}c\} ) , \\
        &=
        \left[ \sum_{i \in \cM_t^{k}} \sattn{2}{i}{t} \left( e_{x_i}  \right)  - \left( \sum_{i \in \cM_t^{k}} \sattn{2}{i}{t} \right)   \eavgattn{1}{t} \right]  (s_{\ele{t+1-h}})^{\top}    + \cO( \exp\{{-}c\} )
    \end{align*}
    The term in the square brackets is $\cO( \exp\{{-}c\})$ (for hard attention it is zero).
    Hence the derivative with respect to $\valh{1}{h}$ is $\cO( \exp\{{-}c\} )$.

    \begin{align}\label{eq:proof-gradient-V}
         \jacb{\valh{1}{h}}{\sp_{\tht}}  = \cO( \exp\{{-}c\} ) \left[   \sqrt{c} \sum_{h=1}^{k-1} s^h_{x_{t+1-h}} + \cO( t \sqrt{c} e^{-c} ) \cdot \mathbf{1} \right]. 
    \end{align}
\end{itemize}

\paragraph{Final Gradients} Bringing things together, from Equations~\eqref{eq:proof-gradient-Q},~\eqref{eq:proof-gradient-K},~\eqref{eq:proof-gradient-V}, 
\begin{align*}
\jacb{\query{2}}{p_{\tht}} &= \cO( t \sqrt{c} e^{-c} ) \cdot \mathbf{1} + \frac{\cO( t \sqrt{c} e^{-c} ) \cdot \mathbf{1}}{|\cM_t^k|}, \\
\jacb{\valh{1}{h}}{\sp_{\tht}}  &= \cO( \exp\{{-}c\} ) \left[   \sqrt{c} \sum_{h=1}^{k-1} s^h_{x_{t+1-h}} + \cO( t \sqrt{c} e^{-c} ) \cdot \mathbf{1} \right], \\
 \jacb{\key{2}}{p_{\tht}} &= \sqrt{c}  \sum_{a \in \voc} \hat{p}_k[a] 
\, e_{a} \otimes \mathrm{vec} \left(   \phi(\subseq{t-k+2}{t}) 
\left[s^{0}_{a} - S^{0} \hat{p}_k  \right]^{\top} \right) + \cO( t \sqrt{c} e^{-c} ) \cdot \mathbf{1}.
\end{align*}

Note that the only non-vanishing gradient is the gradient with respect to $\key{2}$. Now, the gradient can be written as 
\begin{align*}
     \jacb{\key{2}}{p_{\tht}} &= \sqrt{c}  \sum_{a \in \voc} \hat{p}_k[a] 
\, e_{a} \otimes \mathrm{vec} \left(   \phi(\subseq{t-k+2}{t}) 
\left[s^{0}_{a} - S^{0} \hat{p}_k  \right]^{\top} \right) + \cO( t \sqrt{c} e^{-c} ) \cdot \mathbf{1}.
\end{align*}

We use $\spr$ to denote the vector $\spr \left( \left. . \right\rvert {\subseq{T-k+2}{T}} \right)$, the gradient using the difference as, 
\begin{align*}
         \jacb{\key{2}}{p_{\tht}} &= \sqrt{c}  \sum_{a \in \voc} \spr[a] 
\, e_{a} \otimes \mathrm{vec} \left(   \phi(\subseq{t-k+2}{t}) 
\left[s^{0}_{a} - S^{0} \spr  \right]^{\top} \right) + \cO( \| \spr - \phatk{k} \|^2) +  \cO( t \sqrt{c} e^{-c} ) \cdot \mathbf{1},  \\
& = \psi (\subseq{t-k+2}{t} ) +  \cO( \| \spr - \phatk{k} \|^2) +  \cO( t \sqrt{c} e^{-c} ) \cdot \mathbf{1},
\end{align*}
where $\psi$ is appropriately defined. 
The first term here only depends on the $(k)$-history and we can use Proposition~\ref{prop:lossland}, to show that it does not contribute to the final gradient. To give the final computation, 
\begin{align*}
    \jacb{\key{2}}{\cL} &=   \expectover{ p_{\tau} \sim \mathcal{P} } \,  \expectover{\seq{T} \sim  p_{\tau}}
\Bigl\langle {\sp}_{\theta}(\seq{T} ){-} \spr \left( \left. . \right\rvert \seq{T} \right), { \jacb{\key{2}}{\log {\sp}_{\theta}(\seq{T} )} }\Bigr\rangle, \\
&= \expectover{ p_{\tau} \sim \mathcal{P} } \,  \expectover{\seq{T} \sim  p_{\tau}} \Bigl\langle \phatk{k} {-} \spr \left( \left. . \right\rvert \seq{T} \right), { \jacb{\key{2}}{\log {\sp}_{\theta}(\seq{T} )} }\Bigr\rangle, \\
&= \expectover{ p_{\tau} \sim \mathcal{P} } \,  \expectover{\seq{T} \sim  p_{\tau}} \Bigl\langle \spr \left( \left. . \right\rvert {\subseq{T-k+2}{T}} \right) {-} \spr \left( \left. . \right\rvert \seq{T} \right), { \jacb{\key{2}}{\log {\sp}_{\theta}(\seq{T} )} }\Bigr\rangle +  \expectover{ p_{\tau} \sim \mathcal{P} } \,  \expectover{\seq{T} \sim  p_{\tau}} \|\phatk{k} - \spr \left( \left. . \right\rvert {\subseq{T-k+2}{T}} \right) \| \| \jacb{\key{2}}{\log {\sp}_{\theta}(\seq{T} )} \|, \\
&=  \expectover{ p_{\tau} \sim \mathcal{P} } \,  \expectover{\seq{T} \sim  p_{\tau}} \Bigl\langle \spr \left( \left. . \right\rvert {\subseq{T-k+2}{T}} \right) {-} \spr \left( \left. . \right\rvert \seq{T} \right), { \psi (\subseq{t-k+2}{t} ) }\Bigr\rangle + \expectover{ p_{\tau} \sim \mathcal{P} } \,  \expectover{\seq{T}}   \, \cO( \| \spr \left( \left. . \right\rvert {\subseq{T-k+2}{T}} \right) - \phatk{k} \|^2) \cdot \mathbf{1} + \cO( t \sqrt{c} e^{-c} ) \cdot \mathbf{1},
\end{align*}
\end{proof}
The first term vanishes due to Proposition~\ref{prop:lossland} and this finishes the proof.


\section{Supporting Lemmas for The Derivatives of Cross-entropy Loss} \label{sec:lemmas-crossentro}
In this section, we provide the proofs of the lemmas presented in Section~\ref{sec:crossentro} of the main text. In the end, we remark about the extension of Proposition~\ref{prop:lossland}.

\LemmaDervCE*
\begin{proof}
Recalling the definition of the population cross-entropy loss from Eq.~\eqref{eq:population}, we have, 
\begin{align*}
        \cL(\theta) &= \expectover{ p_{\tau} \sim \mathcal{P} } \,  \expectover{\seq{T} \sim  p_{\tau}}   \ell\left(\theta, \seq{T}\right), \\
         \ell\left(\theta, \seq{T}\right) &= - \sum_{s = 1}^{\voc} p_{\tau}(\ele{T+1} {=} s | \seq{T} ) \log \left( \sp_{\theta}(\theta, \seq{T} )[s] \right), \\
         \partial_{\theta_i} \ell\left(\theta, \seq{T}\right) &= - \sum_{s = 1}^{\voc} \sp_{\tau}(\ele{T+1} {=} s | \seq{T} ) \frac{\partial_{\theta_i}   \sp_{\theta}(\seq{T}) [s]  }{\sp_{\theta}(\seq{T})[s]}
\end{align*}
Using the fact that $\sp_{\theta}(\seq{T}) \in \simplex{\voc-1}$, 
\begin{align*}
    \sum_{s=1}^{\voc} \sp_{\theta}(\seq{T})[s] = 1.
\end{align*}
Taking the derivative of the above expression, we have, 
\begin{align*}
    \sum_{s=1}^{\voc}           \partial_{\theta_i} 
 \sp_{\theta}(\seq{T})[s] = 0.
\end{align*}
Adding this to the derivative of the loss $\ell$ gives, 
\begin{align*}
   \partial_{\theta_i} \ell\left(\theta, \seq{T}\right) &= - \sum_{s = 1}^{\voc} \sp_{\tau}(\ele{T+1} {=} s | \seq{T} ) \frac{\partial_{\theta_i}   \sp_{\theta}(\seq{T})[s]  }{\sp_{\theta}(\seq{T})[s]}  +  \sum_{s=1}^{\voc}           \partial_{\theta_i} 
 \sp_{\theta}(\seq{T})[s], \\
         &=  \Bigl\langle {\sp}_{\theta}(\seq{T} ){-} \spr \left( \left. . \right\rvert \seq{T} \right), { \partial_{\theta_i}  \log{ \sp_{\theta}(\seq{T}) } }\Bigr\rangle
\end{align*}
\end{proof}

\begin{remark} In general, a parametric model computes  a function $f_{\theta}: [\cS]^{*} \to \R^{\voc}$ after which a normalizing function like soft-max is used to project it onto the simplex $\simplex{\voc-1}$. 
The partial derivative w.r.t to any parameter $\theta_i$ in this case simplifies to
\begin{align*}
        \partial_{\theta_i}  \cL(\theta) &\myeq 
        \expectover{ p_{\tau} \sim \mathcal{P} } \,  \expectover{\seq{T} \sim  p_{\tau}}
        \Bigl\langle {\sp}_{\theta}(\seq{T} ){-} \spr \left( \left. . \right\rvert \seq{T} \right), { \partial_{\theta_i} f_{\theta}(\seq{T})  }\Bigr\rangle.
\end{align*}
\end{remark}

\PropLossLand*
\begin{proof}
From the above lemma, the partial derivative of the population loss is given by,
\begin{align*}
    \partial_{\theta_i}  \cL(\theta) &\myeq 
   \expectover{ p_{\tau} \sim \mathcal{P} } \,  \expectover{\seq{T} \sim  p_{\tau}}
    \Bigl\langle {\sp}_{\theta}(\seq{T} ){-} \spr \left( \left. . \right\rvert \seq{T} \right), { \partial_{\theta_i}  \log{ \sp_{\theta}(\seq{T}) } }\Bigr\rangle,
\end{align*}
Using our assumption on the score from the proposition,  the above expression is rewritten as follows: 
\begin{align*}
    \expectover{\seq{T} \sim  p_{\tau}} \Bigl\langle  \spr \left( \left. . \right\rvert \seq{T} \right), { \partial_{\theta_i}  \log{ \sp_{\theta}(\seq{T}) } }\Bigr\rangle &= \expectover{\seq{T} \sim  p_{\tau}} \Bigl\langle  \spr \left( \left. . \right\rvert \seq{T} \right), g\left(\spr,\subseq{t}{T}\right) \Bigr\rangle. 
\end{align*}
Now the sequence $\seq{T}$ is split into two random variables  $ \left( \seq{t-1} , \, \subseq{t}{T} \right)$.
\begin{align} \label{eq:lem-prop-loss-landscape-mid1}
    \expectover{\seq{T} \sim  p_{\tau}} \Bigl\langle  \spr \left( \left. . \right\rvert \seq{T} \right), { \partial_{\theta_i}  \log{ \sp_{\theta}(\seq{T}) } }\Bigr\rangle &= \expectover{ \left( \seq{t-1} , \, \subseq{t}{T} \right) \sim  p_{\tau}} \Bigl\langle  \spr \left( \left. . \right\rvert \left( \seq{t-1} , \, \subseq{t}{T} \right) \right), g\left(\spr,\subseq{t}{T}\right) \Bigr\rangle ,
\end{align} 
We use the following property on factorization of the expectation, for any two random variables, A, B. Let $\sigma(A),\sigma(B)$ be the support of the respective random variable, we have the following fact, 
\begin{align*}
    \expectover{A,B}  \, Pr\Bigl(  \cdot \, \Big|  (A,B)  \Bigr) \phi(B) 
     &= \sum_{b \in \sigma(B)} \sum_{a \in \sigma(A)}  Pr(A=a,B=b) Pr\Bigl(  \cdot \, \Big|  (A = a,B = b)  \Bigr) \phi(B), \\
     &=  \sum_{b \in \sigma(B)} Pr(B=b) \phi(B) \sum_{a \in \sigma(A)} Pr(A=a|B=b) \, Pr\Bigl(  \cdot \, \Big|  (A = a,B = b)  \Bigr), \\
     &= \sum_{b \in \sigma(B)} Pr(B=b) \phi(B)  Pr\Bigl(  \cdot \, \Big|  B = b  \Bigr), \\
     &=  \sum_{b \in \sigma(B)} Pr(B=b) \phi(B)  Pr\Bigl(  \cdot \, \Big|  B = b  \Bigr) \sum_{a \in \sigma(A)} Pr(A=a|B=b) , \\
     &=  \sum_{b \in \sigma(B)} \sum_{a \in \sigma(A)} Pr(A=a,B=b)   \phi(B)  Pr\Bigl(  \cdot \, \Big|  B = b  \Bigr)    \\
     &= \expectover{A,B}  \, \, Pr\Bigl(  \cdot \, \Big|  B  \Bigr) \phi(B) . 
\end{align*}
Using the above property, we can rewrite the expression~Eq.\eqref{eq:lem-prop-loss-landscape-mid1} using $A = \seq{t-1}$ and $B = \subseq{t}{T}$, the per task loss can be written as, 
\begin{align*}
    \expectover{\seq{T} \sim  p_{\tau}} \Bigl\langle  \spr \left( \left. . \right\rvert \seq{T} \right), { \partial_{\theta_i}  \log{ \sp_{\theta}(\seq{T}) } }\Bigr\rangle =     \expectover{\seq{T} \sim  p_{\tau}} \Bigl\langle  \spr \left( \left. . \right\rvert \subseq{t}{T} \right), {  g\left(\spr,\subseq{t}{T}\right)  }\Bigr\rangle.
\end{align*}
This gives us the desired result. The remaining argument is direct: the residue vanishes when the probability estimate matches the conditional probability. 
\end{proof}

\begin{remark} \label{rem:stationarity_subseq}
The proof of the proposition hinges on dividing the sequence $\seq{T}$ into a prefix $\seq{t-1}$ and a suffix $\subseq{t}{T}$, as illustrated in Eq.~\eqref{eq:lem-prop-loss-landscape-mid1}. This particular split is chosen for clarity in presentation and discussion. However, the proof remains valid for any partition of the sequence into two disjoint subsequences, which need not be contiguous or follow a prefix–suffix structure. Consequently, the result extends to conditioning on arbitrary non-contiguous subsequences that are not necessarily suffixes.
\end{remark}

\subsection{Conditional Probabilities: Definition and Proper Asymptotics} 
\label{subsection:conditionals}
In the main paper, we have used a somewhat informal treatment of conditional probabilities. Here, we provide formal definitions to supplement our discussion. First, we define them for any general sequences of length $T$ for any generic probability distribution on $[\voc]^{*}$. 

\begin{definition}\label{def:conditional-prob-T}
    Given the propability distribution $\probP(\ele{1},\ldots,\ele{T},\ele{T{+}1})$ of a sequence of random variables $\ele{1},\ldots,\ele{T}$ taking their values in $[\voc]$. We define the following conditional probabilities of the next token given a $(T-t)-$history is: 
\begin{align*}
\probP \left(   \ele{T+1} = \genele{i}{T+1}  \,\, \bigg \vert \,\, \subseq{t+1}{T} = \gensubseq{i}{t+1}{T} \right) = \frac{\sum_{\genseq{i}{t}} \probP\left(\seq{T+1} = \genseq{i}{T+1}\right)  }{\sum_{\genseq{i}{t}} \probP\left(\seq{T} = \genseq{i}{T}\right)}.
\end{align*}
where the marginal distribution for sequence length $t$ is given by
\begin{align*}
            \probP(\seq{T} = \genseq{i}{T}) =  \sum_{\genele{i}{T} \in [\voc]} \probP(\seq{T+1} = \genseq{i}{T+1}). 
\end{align*}
\end{definition}

Under the assumption that $p_{\tau}$ is an $n$-gram language model, it remains to establish that the conditional probabilities are defined in a time-homogeneous manner and that the $k$-gram estimators converge to them asymptotically. These aspects are addressed in detail in Section~\ref{sec:higher_order_markov}.

\section{Derivatives of The Simplified Transformer} \label{sec:derivatives_simplified_transformer}

In this section, we focus on the derivatives of the simplified two-layer transformer. Using the derivative of the masked self-attention map~\lemref{lem:drv_self_attention}, the derivatives for the two layers are computed here.

\begin{lemma} \label{lem:derv_two_layer}
Using \lemref{lem:drv_self_attention}, we define the following quantities, which can be interpreted as attention-weighted embeddings for both layers.
    
    \begin{align}
    \avgattn{1}{t} &= \sum_{i=1}^{t} \sattn{2}{i}{t} \emd{1}[i], \\
    \eavgattn{1}{t} &= \sum_{i=1}^{t} \sattn{2}{i}{t} e_{x_i} , \\
     \avgattnh{0}{h}{i} &= \sum_{j=1}^{i} \sattnh{1}{h}{j}{i} \emd{0}[j] . 
    \end{align}
    
    With the above notations the derivatives of the output probabilities after $t$ tokens with respect to the parameters of the model are given by,

    \begin{align}
        \jacb{\key{2}}{p_{\tht}} &= \sum_{i=1}^{t} \sattn{2}{i}{t} \, 
        \left( e_{x_i} \right) \otimes \mathrm{vec} \left( \query{2} \emd{1}[t] ( \emd{1}[i] - \avgattn{1}{t} )^{\top} \right)^{\top}, \\ 
        \jacb{\query{2}}{p_{\tht}} &=
          \sum_{i=1}^{t} \sattn{2}{i}{t} \left( e_{x_i} \right) \otimes \mathrm{vec} \left( \key{2}  ( \emd{1}[t] - \avgattn{1}{t} )  (\emd{1}[t])^{\top} \right)^{\top} ,   \\
    \jacb{\valh{1}{h}}{\sp_{\tht}} 
    &= \sum_{i = 1}^{t} \sattn{2}{i}{t} \left( \left( e_{x_i} - \eavgattn{1}{t}  \right) (\avgattn{0}{i})^{\top}   \right) \otimes \left( \emd{1}[t]^{\top} \query{2}^{\top} \key{2} \cW_{o}^{(h)} +   ( \emd{1}[i] - \avgattn{1}{t} )^{\top} \key{2}^{\top}   \query{2}  \cW_{o}^{(h)}    \right)  \\
    \jacb{\attnh{1}{h}[i,j]}{\sp_{\tht}}   
    &= \sattnh{1}{h}{j}{i}  \sattn{2}{i}{t}  \left( e_{x_i} - \eavgattn{1}{t}  \right)  \otimes \notag \\
    &\;\; \left( \emd{1}[t]^{\top} \query{2}^{\top} \key{2} \cW_{o}^{(h)} \valh{1}{h} \left( \emd{0}[j] - \avgattnh{0}{h}{i} \right) +   ( \emd{1}[i] - \avgattn{1}{t} )^{\top} \key{2}^{\top}   \query{2}  \cW_{o}^{(h)} \valh{1}{h} \left( \emd{0}[j] - \avgattnh{0}{h}{i} \right)   \right)
    \end{align}
    \end{lemma}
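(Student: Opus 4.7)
My plan is to derive each of the four Jacobians by combining the single-layer self-attention derivative formula of Lemma~\ref{lem:drv_self_attention} with the chain rule, unrolled across the two layers of the simplified transformer. Since
\begin{align*}
p_{\tht}(\seq{T}) = U\,\emd{2}[T] = \sum_{i=1}^{t}\sattn{2}{i}{t}\, e_{x_i},
\end{align*}
every parameter enters $p_{\tht}$ only through the second-layer attention weights $\sattn{2}{i}{t}$: directly for $\{\key{2},\query{2}\}$, and indirectly (through the first-layer output $\emd{1}$) for the first-layer parameters. The final assembly is done with the standard $\mathrm{vec}$-Kronecker identity $\mathrm{vec}(AXB)=(B^{\top}\otimes A)\mathrm{vec}(X)$, which is what turns the raw partial derivatives into the displayed tensor-product expressions.

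\emph{Second-layer parameters.} For $\key{2}$ and $\query{2}$ I would apply Lemma~\ref{lem:drv_self_attention} directly to the second-layer softmax, treating $\emd{1}$ as input. The lemma expresses the parameter derivative of each $\sattn{2}{i}{t}$ as a centered outer product: differentiating the exponent $\langle\key{2}\emd{1}[i],\query{2}\emd{1}[t]\rangle$ with respect to $\key{2}$ yields $\query{2}\emd{1}[t]\,\emd{1}[i]^{\top}$ from the numerator, while the softmax normalizer contributes $-\query{2}\emd{1}[t]\,\avgattn{1}{t}^{\top}$, producing the $(\emd{1}[i]-\avgattn{1}{t})$ centering. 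Weighting by $e_{x_i}$, summing over $i$, and vectorizing recovers the stated form of $\jacb{\key{2}}{p_{\tht}}$; the computation for $\query{2}$ is symmetric with the roles of key and query swapped.

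\emph{First-layer parameters.} For $\theta_{(1)}\in\{\valh{1}{h},\attnh{1}{h}[i,j]\}$, the chain rule gives $\jacb{\theta_{(1)}}{p_{\tht}}=\sum_i \jacb{\theta_{(1)}}{\sattn{2}{i}{t}}\,e_{x_i}$, and $\theta_{(1)}$ enters $\sattn{2}{i}{t}$ simultaneously through the key embedding $\emd{1}[i]$ and through the query embedding $\emd{1}[t]$. Applying Lemma~\ref{lem:drv_self_attention} with respect to the \emph{input} of the second-layer softmax produces two symmetric contributions inside the parenthesis of the final formula: a query-side term $\emd{1}[t]^{\top}\query{2}^{\top}\key{2}\cW_{o}^{(h)}$ coming from perturbing $\emd{1}[t]$, and a key-side term $(\emd{1}[i]-\avgattn{1}{t})^{\top}\key{2}^{\top}\query{2}\cW_{o}^{(h)}$ coming from perturbing $\emd{1}[i]$ (with its softmax-denominator centering). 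The upstream Jacobian of $\emd{1}[\cdot]$ with respect to $\theta_{(1)}$ is a second invocation of Lemma~\ref{lem:drv_self_attention}, now applied to the first-layer softmax: for $\valh{1}{h}$ it produces the factor $\avgattn{0}{i}$; for $\attnh{1}{h}[i,j]$ it produces the attention-score prefactor $\sattnh{1}{h}{j}{i}$ multiplied by the centering $(\emd{0}[j]-\avgattnh{0}{h}{i})$, both pushed through the block-selector $\cW_{o}^{(h)}\valh{1}{h}$. Assembling these three Jacobians with the vec-Kronecker identity yields the stated expressions.

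The main obstacle, I expect, is the careful bookkeeping of the two centering corrections that simultaneously appear: the second-layer softmax contributes the $(\emd{1}[i]-\avgattn{1}{t})$ centering, and the first-layer softmax contributes the $(\emd{0}[j]-\avgattnh{0}{h}{i})$ centering, and they must be composed correctly through the block-concatenation map $\cW_{o}^{(h)}=\sum_j s_j^{h}s_j^{\top}$ that routes head $h$ into its own block of the disentangled residual stream. The ``sum of two Kronecker terms'' in the first-layer derivatives is the structural consequence of $\theta_{(1)}$ perturbing both the query side and the key side of the second-layer attention at once; correctly identifying these as the only two non-vanishing chain-rule pathways (the skip connection $\emdh{1}{0}=\emd{0}$ is independent of first-layer parameters) is the only nontrivial step. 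Once these pieces are consistently tracked, the remaining work is purely mechanical matrix calculus.
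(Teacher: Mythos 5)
Your proposal is correct and takes essentially the same route as the paper's proof: apply Lemma~\ref{lem:drv_self_attention} to the second-layer attention (parameter derivatives for $\key{2},\query{2}$ and input derivatives for $\emd{1}[i],\emd{1}[t]$), compute the Jacobians of $\emd{1}$ with respect to $\valh{1}{h}$ and $\attnh{1}{h}[i,j]$, and assemble everything via the chain rule and the Kronecker mixed-product/vec identity. The only cosmetic difference is that the paper obtains the first-layer Jacobians by a direct softmax computation (the first-layer scores are parameterized by $\attnh{1}{h}$ itself rather than by query--key products), whereas you frame this as a second invocation of the lemma; the resulting centered expressions are identical.
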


\begin{proof}
The final output probabilities given by the model are
\begin{align*}
p_{\tht}(\seq{t}) =  U \emd{2}[t] = \sum_{i=1}^{t} \sattn{2}{i}{t} U \emd{0}[i] = \sum_{i=1}^{t} \sattn{2}{i}{t} e_{x_i}.
\end{align*}
where the attention scores write
\begin{align*}
    \sattn{2}{i}{t} &= \frac{\exp{\scal{\key{2} \emd{1}[i]}{\query{2} \emd{1}[t]}}}{ \sum\limits_{j = 1}^{t} \exp{\scal{\key{2} \emd{1}[j]}{\query{2} \emd{1}[t]}}}.
\end{align*}
Using the Lemma~\ref{lem:drv_self_attention}, the derivatives of the output probabilities after $t$ tokens with respect to the parameters in the second layer and the embedding of the first layer are given by,
\begin{align*}
    \jacb{\key{2}}{p_{\tht}} &= \sum_{i=1}^{t} \sattn{2}{i}{t} \, 
    \left( e_{x_i} \right) \otimes \mathrm{vec} \left( \query{2} \emd{1}[t] ( \emd{1}[i] - \avgattn{1}{t} )^{\top} \right)^{\top}, \\ 
    \jacb{\query{2}}{p_{\tht}} &=
      \sum_{i=1}^{t} \sattn{2}{i}{t} \left( e_{x_i} \right) \otimes \mathrm{vec} \left( \key{2}  ( \emd{1}[t] - \avgattn{1}{t} )  (\emd{1}[t])^{\top} \right)^{\top} , \\  
\text{ For $i \neq t$,} \quad 
    \jacb{\emd{1}[i]}{\emd{2}[t]} &= \sattn{2}{i}{t} \left( e_{x_i} - \eavgattn{1}{t} \right) \otimes  \left(\key{2}^{\top} \query{2} \emd{1}[t] \right)^{\top} , \\
    \jacb{\emd{1}[t]}{\emd{2}[t]} &=  \sattn{2}{t}{t} \left( e_{x_t} - \eavgattn{1}{t} \right) \otimes  \left(\key{2}^{\top} \query{2} \emd{1}[t] \right)^{\top} +  \sum_{i = 1}^{t}\sattn{2}{i}{t} \left( e_{x_i} - \eavgattn{1}{t} \right) \otimes  \left( \query{2}^{\top} \key{2} ( \emd{1}[i] - \avgattn{1}{t} ) \right)^{\top}. 
\end{align*}

The output embeddings of the first attention layer,
\begin{align*}
\emd{1}[i] &= \cW_{o}^{(0)}  \emd{0}[0] +  \sum_{h=1}^{n-1} \sum_{j = 1}^{i} \sattnh{1}{h}{j}{i} \, \cW_{o}^{(h)} \valh{1}{h} \emd{0}[j], \\
\text{where} \quad  \sattnh{1}{h}{j}{i} &= \frac{\exp{\attnh{1}{h}{[i,j]}}}{\sum_{l=1}^{i} \exp{\attnh{1}{h}{[i,l]}}}, \quad
\cW_{o}^{(h)}  = \sum_{j=1}^{\voc-1} s_{j}^{h} s_j^{\top}.
\end{align*}

The Jacobian of the first layer embeddings $\emd{1}$ w.r.t to  attention and value matrix of the head $h$,
\begin{align*}
    \jacb{\valh{1}{h}}{\emd{i}[i]} &= (\avgattn{0}{i})^{\top} \otimes \cW_{o}^{(h)}, \quad
        \jacb{\attnh{1}{h}[i,j]}{\emd{i}[i]} = \sattnh{1}{h}{j}{i} \cW_{o}^{(h)} \valh{1}{h} \left( \emd{0}[j] - \avgattnh{0}{h}{i} \right).
\end{align*}
Combining them, we get, 
\begin{align*}
\jacb{\valh{1}{h}}{\sp_{\tht}} &= \sum_{i=1}^{t} \jacb{\emd{1}[i]}{\sp_{\tht}}  \, \jacb{\valh{1}{h}}{\emd{1}[i]} \\
& = \sum_{i = 1}^{t-1} \sattn{2}{i}{t} \left[  \left( e_{x_i} - \eavgattn{1}{t} \right) \otimes  \left(\key{2}^{\top} \query{2} \emd{1}[t]  \right)^{\top} \right] \, (\avgattn{0}{i})^{\top} \otimes \cW_{o}^{(h)} \\
&  \hspace{1.5 cm} + \sum_{i = 1}^{t}\sattn{2}{i}{t} \left( e_{x_i} - \eavgattn{1}{t} \right) \otimes  \left( \query{2}^{\top} \key{2} ( \emd{1}[i] - \avgattn{1}{t} ) \right)^{\top} (\avgattn{0}{i})^{\top} \otimes \cW_{o}^{(h)},   \\
&= \sum_{i = 1}^{t} \sattn{2}{i}{t} \left( \left( e_{x_i} - \eavgattn{1}{t}  \right) (\avgattn{0}{i})^{\top}   \right) \otimes \left( \emd{1}[t]^{\top} \query{2}^{\top} \key{2} \cW_{o}^{(h)} +   ( \emd{1}[i] - \avgattn{1}{t} )^{\top} \key{2}^{\top}   \query{2}  \cW_{o}^{(h)}    \right),
\end{align*}
\begin{align*}
\jacb{\attnh{1}{h}[i,j]}{\sp_{\tht}}  &=    \jacb{\emd{1}[i]}{\sp_{\tht}} \jacb{\attnh{1}{h}[i,j]}{\emd{1}[i]} , \\
&= \sattn{2}{i}{t} \sattnh{1}{h}{j}{i}  \left( e_{x_i} - \eavgattn{1}{t}  \right)  \otimes  \\
& \hspace{1 cm} \left( \emd{1}[t]^{\top} \query{2}^{\top} \key{2} \cW_{o}^{(h)} \valh{1}{h} \left( \emd{0}[j] - \avgattnh{0}{h}{i} \right) +   ( \emd{1}[i] - \avgattn{1}{t} )^{\top} \key{2}^{\top}   \query{2}  \cW_{o}^{(h)} \valh{1}{h} \left( \emd{0}[j] - \avgattnh{0}{h}{i} \right)   \right).
\end{align*}

In the above simplifications, we have used the following property of Kronecker product,
\begin{align*}
    (A \otimes B) (C \otimes D) = (AC) \otimes (BD),
\end{align*}
for matrices $A,B,C,D$ of appropriate dimensions.

\end{proof}

\section{Proofs of Representation with Simplified Transformers} \label{sec:proofs}
\LemSimplifiedTransformerFirstLayer* 
\begin{proof}
     Recall that the attention scores of head $h$ in the first layer for key $i$ and query $j$ are given by
\begin{align*}
\sattnh{1}{h}{j}{i} &= \frac{\exp{\attnh{1}{h}{[i,j]}}}{\sum_{l=1}^{i} \exp{\attnh{1}{h}{[i,l]}}},
\end{align*}
For the \textbf{activated head} $h$, the $\attnh{1}{h}$ is given by Eq.~\eqref{eq:attn1_k_gram}, 
\begin{align*} \attnh{1}{h} = c \sum\limits_{l=h}^{T{-}1} e_{\scriptscriptstyle l}e_{\scriptscriptstyle l{-}h}^{\top} + c \sum_{l=0}^{h} e_{\scriptscriptstyle l} e_{\scriptscriptstyle 0}^{\top}  \end{align*}
Using the above two equations for $i \geqslant h$,  the attention scores can be simplified as,
\begin{align*}
    \attnh{1}{h}[i,j] &= e_i^{\top} \left[ c \sum\limits_{l=h}^{T{-}1} e_{\scriptscriptstyle l}e_{\scriptscriptstyle l{-}h}^{\top} + c \sum_{l=0}^{h} e_{\scriptscriptstyle l} e_{\scriptscriptstyle 0}^{\top} \right] e_j = c ~ \mathbbm{1}\{ j = i - h \}, \\
    \sum_{j=1}^{i} \exp \attnh{1}{h}[i,j] &=  (i-1) + e^{c}, \\
    \text{For } j \neq i - h, \quad \sattnh{1}{h}{j}{i} &= \frac{1}{(i-1) + e^{c}} = \cO(\exp\{-c\}), \\
    \text{and, for } j = i - h, \quad \sattnh{1}{h}{j}{i} &= \frac{e^{c}}{(i-1) + e^{c}} = 1 - \cO(i \exp\{-c\}).
\end{align*}

Coming to the embeddings after the first layer, 
\begin{align*}
    \emd{1}[i] = \cW_{o}^{(0)}  \emd{0}[0] +  \sum_{h=1}^{n-1} \sum_{j = 1}^{i} \sattnh{1}{h}{j}{i} \, \cW_{o}^{(h)} \valh{1}{h} \emd{0}[j],
\end{align*}
where, 
\begin{align*}
    \cW_{o}^{(h)}  &= \sum_{j=1}^{\voc} s_{j}^{h} s_j^{\top}.
\end{align*}
Note that for $h \geq k$ the term $\valh{1}{h} = 0$, and for $h < k$, the term $\valh{1}{h}$ is given by Eq.~\eqref{eq:val1_k_gram}. Using the above two equations and $\emd{0}[i] = s_{x_i} $, we can write the embeddings after the first layer as,
\begin{align*}
    \emd{1}[i] &= \cW_{o}^{(0)}  s_{x_i} +  \sum_{h=1}^{n-1} \sum_{j = 1}^{i} \sattnh{1}{h}{j}{i} \, \cW_{o}^{(h)} \valh{1}{h} s_{x_j}. 
\end{align*}
Note that $\valh{1}{h} s_{x_j} =  s_{x_j}$ for $h < k$ and $\cW_{o}^{(h)} s_{x_j} = s_{x_j}^h$. Furthermore, for $j \in \vocset$,  
\begin{align*}
    \| s_j \|_{\infty} \leq 1, \text{ and } \| \valh{1}{h} s_j \|_{\infty} \leq 1.
\end{align*}
Using this, the above equation can be written as,
\begin{align*}
    \emd{1}[i] &= s_{x_i}^{0} + \sum\limits_{h=1}^{k-1} s_{x_{i-h}}^{h} + \cO(  i  e^{-c} ) \cdot \mathbf{1}.
\end{align*} 
A similar computation for $i < h$ gives the required result. 
\end{proof}


\LemSimplifiedTransformerSecondLayer* 
\begin{proof}
    The product of the query and key in the second layer from Eq.~\ref{eq:attn2_k_gram} is given by,
\begin{align*}
    (\query{2})^{\top} \key{2} = c  \sum_{h=1}^{k-1} \sum_{j=1}^{\voc} s_{j}^{h-1} (s_j^{h})^{\top}. 
\end{align*}
Using this the attention scores in the second layer for any key $t$ and query $i$ are given by,
\begin{align*}
    \sattn{2}{i}{t} &= \frac{\exp{\scal{\key{2} \emd{1}[i]}{\query{2} \emd{1}[t]}}}{ \sum\limits_{j = 1}^{t} \exp{\scal{\key{2} \emd{1}[j]}{\query{2} \emd{1}[t]}}}.
\end{align*}
To compute the inner product, 
\begin{align*}
    \scal{\key{2} \emd{1}[i]}{\query{2} \emd{1}[t]} &=  \scal{\emd{1}[t]} {\query{2}^{\top} \key{2} \emd{1}[i]} 
\end{align*}
We know from Lemma~\ref{lem:simplified_transformer_first_layer_k_gram} that the embeddings after the first layer for $i \geq k$ are given by,  
\begin{align*}
    \emd{1}[i] = \sum_{ h = 0 }^{k-1} s_{\ele{i-h}}^{h} +\cO(  i  e^{-c} ) \cdot \mathbf{1}.
\end{align*}
Now for $t \geq i$, 
\begin{align*}
    (\query{2})^{\top} \key{2} \emd{1}[i] &= c  \sum_{h=1}^{k-1} \sum_{j=1}^{\voc} s_{j}^{h-1} (s_j^{h})^{\top} \left[  \sum_{ h = 0 }^{k-1} s_{\ele{i-h}}^{h} +  \cO(  i  e^{-c} ) \cdot \mathbf{1}  \right], \\
    &= c \sum_{h=1}^{k-1} s_{\ele{i-h}}^{h-1} +  \cO(  i  e^{-c} ) \cdot \mathbf{1}. \\
   \scal{ \emd{1}[t] }{  (\query{2})^{\top} \key{2} \emd{1}[i]} &= c \scal{ \sum_{ h = 0 }^{k-1} s_{\ele{t-h}}^{h} }{ \sum_{h=1}^{k-1} s_{\ele{i-h}}^{h-1} } +  \cO(  t  e^{-c} ), \\
   &= c \scal{ \sum_{ h = 1 }^{k} s_{\ele{t+1-h}}^{h-1} }{ \sum_{h=1}^{k-1} s_{\ele{i-h}}^{h-1}  } +  \cO(  t  e^{-c} ),
\end{align*}
Now we use the fact that the embeddings $s_{i}^{h_1}, s_{j}^{h_2}$ are orthogonal for $i \neq j$ or $h_1 \neq h_2$. Using this, we can write the above expression as,
\begin{align*}
    \scal{ \emd{1}[t] }{  (\query{2})^{\top} \key{2} \emd{1}[i]} 
    &= c  \sum_{ h = 1 }^{k-1} \scal{  s_{\ele{t+1-h}}^{h-1} }{  s_{\ele{i-h}}^{h-1} } +  \cO(  t  e^{-c} ), \\
    &= c \sum_{ h = 1 }^{k-1}  \mathbbm{1}\{ \ele{i-h}  =  \ele{t+1-h} \} +  \cO(  t  e^{-c} ).
\end{align*}
If the $k$-history of $t+1$ and $i$ match, i.e., $i \in \cM_{t}^k$,  then the summation is maximum at $(k-1)c^2$ otherwise it will be $\leq (k-2)c^2$ as there is atleast one mismatch. Using this, we can write the attention scores as,
\begin{align*}
    { \sum\limits_{j = 1}^{t} \exp{\scal{\key{2} \emd{1}[j]}{\query{2} \emd{1}[t]}}} \leq  | \cM_t | \exp\{ (k-1) c \} + (t - |\cM_t|) \exp\{ (k-2) c \}.
\end{align*}

Hence, the attention scores are given by neglecting the terms with double exponentiation (i.e., $\exp\{\exp\{-c\}\}$), 
\begin{align*}
    \text{For } i\in \cM_t^k, \quad \sattn{2}{i}{t} &=  \frac{\exp{(k-1)}c^2}{{ \sum\limits_{j = 1}^{t} \exp{\scal{\key{2} \emd{1}[j]}{\query{2} \emd{1}[t]}}}} \geq \frac{\exp{(k-1)}c^2}{ | \cM_t | \exp\{ (k-1) c \} + (t - |\cM_t|) \exp\{ (k-2) c \}}, \\
    \frac{1}{|\cM_t|} - \sattn{2}{i}{t} &\leq  \frac{(t - |\cM_t|) \exp\{ (k-2) c \}}{| \cM_t | \left[ | \cM_t | \exp\{ (k-1) c \} + (t - |\cM_t|) \exp\{ (k-2) c \} \right] } \leq \frac{t - |\cM_t|}{| \cM_t |^2 } \exp\{-c\} \\
    \text{For } i \not\in \cM_t^k, \quad \sattn{2}{i}{t} &\leq \frac{\exp{(k-2)c}}{ | \cM_t | \exp{(k-1)c} } = \cO( \exp\{{-}c\} ) \quad \text{o.w.}.
\end{align*}

Hence, the attention scores are given by, 
\begin{align*}
    \text{For } i\in \cM_t^k, \quad \sattn{2}{i}{t} &= \frac{1}{|\cM_t^k|} \  - \frac{t \cO( \exp\{{-}c\} )}{|\cM_t^k|^2}, \\
    \text{For } i \not\in \cM_t^k, \quad \sattn{2}{i}{t} &= \cO( \exp\{{-}c\} ) \quad \text{o.w.}.
\end{align*}
This completes the proof of the lemma.
\end{proof}

\section{Possible Extensions of The Results} \label{sec:extensions}
In this section, we discuss the possible extensions of the results presented in the main text. First we discuss how the results can be extended to a general transformer architecture. Then we discuss the possible constructions for stationary points and beyond suffixes.

\paragraph{Other stationary points.}

Before we begin, we note that there are other possible constructions of the stationary points. In the main text, we focus on the set of parameter configurations where for each $h$ in $[1,k-1]$ a single activated head computes the $(\minus h)$-token, while the other heads are turned off. However, there can be multiple heads dedicated to computing the $(\minus h)$-tokens for $h \leq k-1$. Using symmetry, the current approach can be extended to show that the gradients of parameters of these heads vanish, similar to the proof of Theorem~\ref{thm:stationary-points}. 

For example, consider the bigram MLE estimator; in the construction given in Eq.~\eqref{eq:k-gram} a head is activated to compute the $(-1)-$token and the other heads are turned off. Alternatively, there exist parameter configurations where all heads are activated and compute the $(-1)-$token similar to Figure~\ref{fig:experimentattention}. Our proof of Theorem~\ref{thm:stationary-points} works in this case too, with the arguments of symmetry, the gradients of these heads will vanish. This can be shown using the same arguments as in the proof of Theorem~\ref{thm:stationary-points}. We give an explicit construction for this simple bigram case. 

\begin{subequations} \label{eq:bi-gram-mulltiple-heads}
    \begin{align} 
        (\query{2})^{\top}  &= \sqrt{c} \sum_{j=1}^{\voc} \sum_{h=1}^{n-1} s_{j}^{\color{blue} 0} (s_j^{h})^{\top},  \\ 
        \key{2} &= \sqrt{c} \sum_{j=1}^{\voc} \sum_{h=1}^{n-1} s_{j}^{h} (s_j^{h})^{\top}, \label{ap:eq:K_k_gram_multiple} \\ 
        \attnh{1}{h} &= c \sum\limits_{l=1}^{T{-}1} \basis{\scriptscriptstyle l{-}1}{T} \left(\basis{\scriptscriptstyle l}{T}\right)^{\top} ~ \text{for} ~  h \in  [n{-}1] , \\
        \valh{1}{h} &= 
            \sqrt{c} \sum\limits_{j=1}^{\voc} s_js_j^{\top} ~ \text{for} ~ h \in [n{-}1].  
    \end{align}
\end{subequations}
Here, all heads are switched on and the $(-1)-$token is computed by all the heads. The query is used to compare the $x_{t}$ token with all the previous heads (notice $0$ in {\color{blue} blue} in the superscript for all $h$ in $\query{2}$). This can be generalized to any $k$-gram, where the $(-h)$-token for $h < k$ is computed by different heads with varying multiplicities.

\subsection{General Transformer Architecture}\label{subsec:general_transformer}
The results presented in the main text are given for a simplified transformer architecture. In this section, we discuss how the results can be extended to a general transformer architecture.  We provide brief proof sketches for two extensions: adding a value matrix to the second layer and replacing the concatenation of head embeddings and the first-layer skip connection. 


\paragraph{Token and position embeddings.}

First we define some  block embeddings in a dimension $\R^{\din}$ and later lift them into the dimension of the transformers, i.e., $n\din$ and sequences upto length $T$. 
\begin{align*}
    \text{Token embeddings} \to s_{0}, s_{1}, \ldots, s_{\voc-1} \in {\R^{\din}} \\
    \text{Position embeddings} \to p_{0}, p_{1}, \ldots, p_{T-1} \in {\R^{\din}}.  
\end{align*}
The embeddings are mutually orthogonal, i.e., 
\begin{align*}
s_{i} \perp s_{j},  \text{ for all } i \neq j \in [N]. \\
p_{i} \perp p_{j},  \text{ for all  } i \neq j \in [T]. \\
p_{i} \perp s_{j}, \text{ for all } i \in [T], \,\, j \in [N]. 
\end{align*}
For $h \in [k+1] $, we denote the $s_{i}^{h}, p_i^{h} \in \R^{ n \din}$, the lift of the block embeddings which are defined as the following, 
\begin{align*}
    s_i^{h} = x\begin{bmatrix}
        \overbrace{0_{\din} \,\, 0_{\din} \,\,  \cdot \,\,  \cdot   }^{{\text{h blocks}}   } & s_i &  \cdot &  \cdot
    \end{bmatrix} \\
        p_i^{h} = \begin{bmatrix}
        \underbrace{0_{\din} \,\, 0_{\din} \,\,  \cdot \,  \cdot  \, }_{{\text{h blocks}}   } & p_i &  \cdot &  \cdot
    \end{bmatrix}
\end{align*}
The embedding layer maps to $q_0  \in \R^{T \times n\din} $
where each row $i$ is the embedding of $i^{\text{th}}$ element in the sequence along with its positional embedding, i.e., $ q_0[i] =  s_{x_i}^0 + p_i^0 \in \R^{ n \din} $.

\paragraph{The attention layers.} The first layer has $k$ attention heads and also has a skip connection.
For a head $h \in [k]$, let $Q_{1}^{(h)}, K_{1}^{(h)}, V_1^{(h)}$ denote the query, key and value matrices. The forward pass on $q_0$ writes
\begin{align*}
\emd{1} = \emd0 + \sum_{h = 1}^{k} \sf\left( \emd{0} ~ (\query{1}^{(h)})^{\top} \key{1}^{(h)} ~ \emd{0}^{\top} \right) ~ \emd{0} ~ \left(\valh{1}{(h)}\right)^{\top}, 
\end{align*}
Here $\sf(.)$ is a row-wise softmax operator with a causal masking. Note that here we are adding in comparison to the simplified architecture where we are concatenating.

The second layer has just one head and also no skip connection (the skip connection can be included and the value matrix should be scaled appropriately so that it is ignored after normalization). Let $K_2,Q_2,V_2$ be the key, query and value matrices of the second layer. The forward pass for the second layer writes
\begin{align*}
\emd{2} = \sf\left( \emd{1} ~ (\query{2})^{\top} \key{2} ~ \emd{1}^{\top} \right) ~ \emd{1} ~ \left(\val{2}\right)^{\top}. 
\end{align*}

Now that $\emd{2}$ is not row wise normalized and the result does not give an output vector on the simplex, we have to normalize them using a softmax operator, i.e., $\sf(\emd{2})$ to get the final output. In this case, the counting algorithm (MLE) implemented by the attention-only transformer previously discussed cannot compute logits of the probability $\log{p_{\tau}}$, so that softmax normalization over $\log{p_{\tau}}$ gives the true probability distribution $p_{\tau}$.
Hence, an MLP layer is used to compute the logits.

\paragraph{An MLP layer.}
As discussed above, there is an MLP layer that computes the logarithm of its input. Let $M$ be the set of parameters of the MLP layer and $m$ be the function implemented, $m(.) : \R^{\voc} \to \R^{\voc} $ does component wise logarithm $ m(x) = \log(|x| + \epsilon) $ for some small epsilon. 

Writing it together, the forward pass of the transformer writes,
\begin{subequations}
    \begin{align}
    \emd{1} &= \emd0 + \sum_{h = 1}^{k} \sf\left( \emd{0} ~ (\query{1}^{(h)})^{\top} \key{1}^{(h)} ~ \emd{0}^{\top} \right) ~ \emd{0} ~ \left(\valh{1}{(h)}\right)^{\top}, \notag  \\
    \emd{2} &= \sf\left( \emd{1} ~ (\query{2})^{\top} \key{2} ~ \emd{1}^{\top} \right) ~ \emd{1} ~ \left(\val{2}\right)^{\top}, \notag  \\
    \emd{2}^{+} &= m( U \emd{2} ), \notag \\
    \sp_{\tht} (\seq{t}) &=  \sf( \emd{2}^{+} ) \notag
\end{align}
\end{subequations}

This model generalizes the simplified version by adding a value matrix in the second layer, moving beyond simple concatenation for head embeddings and the first-layer skip connection, and explicitly using positional encoding. While the simplified model's representation results extend to this generalized version, proving that the gradients vanish requires additional work, particularly for the value matrices.

For the $t^{\text{th}}$ token, the output after the second layer of the transformer can be written as,  
\begin{align}
\emd{2}[t] &= \sum_{i = 1}^{t} \sattn{2}{i}{t} \, \val{2} \emd{1}[i], \notag 
\end{align}
where the attention scores in the second layer $\sattn{2}{i}{t}$ for key $i$ and query $t$ are given by,
\begin{align}
    \sattn{2}{i}{t} &= \frac{\exp{\scal{\key{2} \emd{1}[i]}{\query{2} \emd{1}[t]}}}{ \sum\limits_{j = 1}^{t} \exp{\scal{\key{2} \emd{1}[j]}{\query{2} \emd{1}[t]}}}. \notag 
\end{align}
The final output probabilities are given by 
\begin{align}
p_{\tht}(\seq{t}) =  U \emd{2}[t] = \sum_{i=1}^{t} \sattn{2}{i}{t} U \emd{0}[i] = \sum_{i=1}^{t} \sattn{2}{i}{t} e_{x_i}. \notag 
\end{align}

Here, we show that the gradient with respect to $\val{2}$ vanishes. The others follow the same pattern as the simplified model and will be not be precisely computed here. The partial derivative with respect to gradient gives us,
\begin{align*}
    \jacb{\val{2}}{\emd{2}[t]} &= \sum_{i = 1}^{t} \sattn{2}{i}{t} \, I_{\voc} \otimes \emd{1}[i], = I_{\voc} \otimes \avgattn{1}{t} \notag  \\
\end{align*}
Define $\tht_*^k$  by 
\begin{align}
    (Q_{1}^{(h)})^{\top} K_1^{(h)} &= \begin{cases}  c_h \sum\limits_{l=h}^{T - 1} p_{l-h}p_{l}^{\top} \text{ where }  \quad \text{for} ~ h \in [k-1]  \\ 0 \quad   \text{o.w.} \end{cases}, \label{eq:saddle_N_Q_K_layer_1} \\
    V_1^{(h)} &= \begin{cases} \sum\limits_{j=1}^{\voc}  s_j^{h} (s_j^{0})^{\top} , \quad \text{for} ~ h \in \cN \\ 0 \quad \text{o.w.} \end{cases}, \label{eq:saddle_N_V_layer_1} \\
    (Q_{2})^{\top} K_2 &= c \sum_{j=1}^{\voc} \sum_{h \in \cN} s_{j}^{h-1} (s_j^{h})^{\top} \label{eq:saddle_N_Q_K_layer_2}, \\
    V_2 &= \sum_{j=1}^{\voc}  e_j (s_j^{0})^{\top} \label{eq:saddle_N_V_layer_2}  .
\end{align}
Intuitively, this is just an expansion on simplified transformer with lifting in the first layer and multiplication with value made explicit. Recalling $ \bar{s}_{t}^{0} = \frac{1}{|\cM_{t}^k|} \sum_{i \in \cM_{t}^k}  s_{\ele{t}}^0$, using this, 
    \begin{align*}
        \avgattn{1}{t} &= \sqrt{c}  \bar{s}_{t}^{0} + \sum_{ h = 1 }^{k-1} s_{\ele{t+1 -h}}^{h}   +  \cO( \exp\{{-}c\} ). \notag 
\end{align*}
At the limit $T \to \infty$, $ \bar{s}_{t}^{0} $ depends only on the context and $(k-1)$-history of the token $t+1$ and the other summation only depends on the $(k-1)$-history of the token $t+1$. Hence, the gradient vanishes as $c \to \infty$. This gives a glimpse to how the computation can be extended to a general transformer architecture.



\subsection{Other Constructions for Stationary points and Beyond Contigous History}~\label{subsec:beyond_suffixes}
Note that we only consider estimators conditioned on the contiguous suffices. Consider again the simple example of bigram, which is conditioning on a single element in the $n$-history. There can be many such estimators, for example, conditioning on the first element in the $n$-history $\sp( x_{n} = . | x_{n-1} ) $, the second element in the $n$-history $\sp( x_{n} = . | x_{n-2} )$, and so on. It is natural to wonder if such an estimators are stationary and we can extend our results to all these possible estimators. 

To generalize this, we define $\cN-$MLE estimators indexed by the a set $\cN \subseteq [n-1]$. For any $\cN$, we define a subsequence $\subseq{\cN}{t} = ( x_{t-h} )_{h \in \cN}$ as $\cN$-history and the $\cN$-MLE estimator is given by,
\begin{align*}
    \frac{ \sum\limits_{l = 0}^{T} \mathbbm{1} \{ \subseq{\cN}{l} = \subseq{\cN}{t+1} \} \, \mathbbm{1}\{x_{l} = i\}  }{ \sum\limits_{l = 0}^{T} \mathbbm{1} \{ \subseq{\cN}{l} = \subseq{\cN}{t+1} \}  }.
\end{align*}
which compares the $\cN$-history of the token $t+1$ with the $\cN$-history of all the tokens in the sequence. 

There is a small technical challenge for a transformer to implement this. To compare the $\cN$-history, it must compare the $(-h)$-token of token $i$ with the $(-h)$-token of the token $(t{+}1)$ (not $t$) for $h \in \cN$. This essentially requires comparing the $(-h)$-token of $i$ with the $(-(h{-}1))$-token of $t$. As a result, an induction head must attend to the $(-(h{-}1))$-token in the first layer. When $\cN$ forms a suffix, this step isn't explicitly necessary since $h{-}1$ will also be in $\cN$. Otherwise, it must be computed explicitly using additional heads or changing the attention matrix specifically depending on the last token. 

When the inputs are of fixed length the later approach could be used. We specify the construction first and then give the intuition on why it would work. 

\begin{subequations} \label{eq:k-gram-N}
    \begin{align} 
        (\query{2})^{\top}  &= \sqrt{c} \sum_{j=1}^{\voc} \sum_{h \in \cN} s_{j}^{h} (s_j^{h})^{\top}, \\ 
        \key{2} &= \sqrt{c} \sum_{j=1}^{\voc}  \sum_{h \in \cN} s_{j}^{h} (s_j^{h})^{\top},  \\ 
        \attnh{1}{h} &= c \sum\limits_{l=h}^{T{-}2} \basis{\scriptscriptstyle l{-}h}{T} \left( \basis{\scriptscriptstyle l}{T} \right)^{\top} + \basis{\scriptscriptstyle T{-}h{+}1}{T} \left(\basis{\scriptscriptstyle T{-}1}{T}\right)^{\top} ~ \text{for} ~  h \in  \cN ,  \\
        \valh{1}{h} &= \begin{cases}
            \sqrt{c} \sum\limits_{j=1}^{\voc} s_js_j^{\top} ~ \text{for} ~ h \in \cN, \\
            0 \quad \text{o.w.}
        \end{cases}. 
    \end{align}
\end{subequations}

The main modification is in the attention matrix of the first layer, i.e., now head $h$ computes $(-h)$ token for $i \neq T$ but $ (-(h{-}1))$ token for $i = T$. 
The last token acts as a special token~\citep{nichani2024transformerslearncausalstructure} and the attention matrix is modified accordingly. The gradients of the heads will vanish similar to the proof of Theorem~\ref{thm:stationary-points}. However, the major drawback is that it only works for fixed length inputs and does not even work for arbitrary sequence lengths. This is not a concern  for the previous constructions corresponding to contiguous suffix.


\section{Derivatives of the self attention map.}  \label{sec:der-self-attention}

We calculate the derivatives of the masked self-attention map with respect to key, value, query and the input embeddings. Let $q \in \R^{T \times d}$ be the input embeddings, $Q,K,V \in \R^{d \times d}$ be the  query, key and value matrices. Let $q_i$ denote the $i^{\text{th}}$ row of $q$. For any $t$, the output embeddings of the $t^{\text{th}}$-token of the transformer layer can be written as  
\begin{align*}
q^{+}_t &= \sum_{i = 1}^{t} p_i V q_i , \\
p_i &= \frac{\exp{\scal{K q_i}{Q q_t}}}{ \sum\limits_{j = 1}^{t} \exp{\scal{K q_j}{Q q_t}}} .
\end{align*}

\begin{lem} \label{lem:drv_self_attention} 
Consider a self attention map defined by 
\begin{align*}
q^{+}_t = \sum_{i = 1}^{t} p_i V q_i , &\quad \text{where} \quad 
p_i = \frac{\exp{\scal{K q_i}{Q q_t}}}{ \sum\limits_{j = 1}^{t} \exp{\scal{K q_j}{Q q_t}}} .
\end{align*}
Define 
$ \bar{q} = \sum_{i=1}^{t} p_i q_i$. The partial derivatives are given by, 
\begin{align*}
\jacb{V}{q_t^{+}}   &= \sum_{i=1}^{t} \,   p_i \, \left( q_i^{
\top} \otimes I_{d} \right) = \bar{q}^{\top} \otimes I_{d}, \\
\jacb{K}{q_t^{+}}  &= \sum_{j=1}^{t} p_j \left( V q_j \right) \otimes \mathrm{vec} \left( Q q_t (q_j - \bar{q})^{\top} \right)^{\top}  , \\
\jacb{Q}{q_t^{+}}  &= \sum_{j=1}^{t} p_j \left( V q_j \right) \otimes \mathrm{vec} \left( K  (q_j - \bar{q}) q_t^{\top} \right)^{\top} ,  \\
\text{For $i \neq t$, }  \jacb{q_i}{q_t^{+}}   &= p_i \, V +  p_i \left( V (q_i - \bar{q}) \right) \otimes  \left(K^{\top} Q q_t\right)^{\top} , \\ 
\jacb{q_t}{q_t^{+}}   &= p_t \, V +  p_t \left( V (q_t - \bar{q})  \right)  \otimes  \left(K^{\top} Q q_t\right)^{\top} +  \sum_{j = 1}^{t} p_j \left( V q_j \right) \otimes \left( Q^{\top} K ( q_j - \bar{q} ) \right)^{\top}. 
\end{align*}
\end{lem}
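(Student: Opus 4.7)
The proof plan is to apply the chain rule systematically, exploiting one unifying identity for the softmax gradient: if $p_i = \exp(s_i)/\sum_j \exp(s_j)$, then $\partial p_i/\partial s_j = p_i(\delta_{ij} - p_j)$. Combined with the definition $\bar q = \sum_i p_i q_i$, this identity produces the centering structure $(q_j - \bar q)$ that appears throughout the statement: whenever the chain rule yields $\sum_i p_i(\delta_{ij} - p_j) f(q_i)$, the $-p_j$ term contracts with the sum to produce $f(q_j) - \sum_i p_i f(q_i)$, which in our linear settings becomes the desired $(q_j - \bar q)$ offset.

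First I would handle $\jacb{V}{q_t^+}$, which is the easiest since the softmax weights $p_i$ have no $V$-dependence. Writing $q_t^+ = V\bar q$ and applying the vectorization identity $\mathrm{vec}(AXB) = (B^\top \otimes A)\mathrm{vec}(X)$, one immediately obtains $\jacb{V}{q_t^+} = \bar q^\top \otimes I_d$. Next, for $\jacb{K}{q_t^+}$ and $\jacb{Q}{q_t^+}$, the only $K,Q$-dependence is through the scores $s_i = q_i^\top K^\top Q q_t$. Using $\partial s_i/\partial K = \mathrm{vec}(Q q_t q_i^\top)$ and $\partial s_i/\partial Q = \mathrm{vec}(K q_i q_t^\top)$, the chain rule gives $\jacb{K}{q_t^+} = \sum_{i,j} p_i(\delta_{ij} - p_j)(V q_i) \otimes \mathrm{vec}(Q q_t q_j^\top)^\top$; collapsing the $\delta_{ij}$ sum and using the $\bar q$-identity described above yields the stated formula. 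The $Q$ case is structurally identical.

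For $\jacb{q_i}{q_t^+}$ with $i \neq t$, two pathways exist: the direct contribution from $p_i V q_i$ (which gives $p_i V$), and the indirect contribution through all scores $s_i$ (but only $s_i$, not $s_j$ for $j \neq i$, since $i \neq t$ means $q_i$ only enters as a key). The indirect term produces $\sum_j p_j(\delta_{ij} - p_i)(V q_j) \otimes (K^\top Q q_t)^\top$, which again collapses via the $\bar q$-identity into $p_i(V(q_i - \bar q)) \otimes (K^\top Q q_t)^\top$. Finally, $\jacb{q_t}{q_t^+}$ requires tracking three pathways: the direct $p_t V$ term, the key-side dependence (treating $q_t$ analogously to $q_i$ with $i = t$, producing the middle term), and the new query-side dependence since $q_t$ enters \emph{every} score $s_j$. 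The query-side contribution gives $\sum_j \partial p_j / \partial(Qq_t) \cdot (V q_j) (Q^\top K)$; after centering, this becomes $\sum_j p_j (V q_j) \otimes (Q^\top K(q_j - \bar q))^\top$.

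The proof contains no deep obstacle; it is a bookkeeping exercise. The main care needed is tracking Kronecker product conventions consistently when flattening $K,Q,V$ via $\mathrm{vec}$, and ensuring that the double sums from softmax derivatives are simplified using $\sum_i p_i q_i = \bar q$ at exactly the right moment to produce the compact $(q_j - \bar q)$ form in every expression.
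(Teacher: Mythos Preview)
Your proposal is correct and follows essentially the same approach as the paper: both proofs apply the chain rule through the softmax derivative identity $\partial p_j/\partial s_k = p_j(\delta_{jk} - p_k)$, compute the score derivatives $\partial s_k/\partial K$, $\partial s_k/\partial Q$, $\partial s_k/\partial q_i$, and collapse the resulting double sums via $\sum_i p_i q_i = \bar q$ to obtain the centered form $(q_j - \bar q)$. The paper makes this slightly more explicit by naming intermediate functions $g$ (the softmax) and $h$ (the score vector) and writing $p = g \circ h$, whereas you work directly with the scores $s_i$, but the underlying computation is identical.
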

\begin{proof}
First taking the derivative with respect to the value $V$ gives us, 
\begin{align*}
\jacb{V}{q_t^{+}} = \sum_{i=1}^{t} \,   p_i \, \left( q_i^{
\top} \otimes I_{d} \right).
\end{align*}
The derivative wrt to $q_i$ gives us
\begin{align*}
\jacb{q_i}{q_t^{+}} = p_i \, V + \sum_{j=1}^{t} \left( V q_j \right) \otimes  \jacb{q_i}{p_j} , \\
\jacb{K}{q_t^{+}} =  \sum_{j=1}^{t} \left( V q_j \right) \otimes \jacb{K}{p_j}  , \\
 \jacb{Q}{q_t^{+}} =  \sum_{j=1}^{t} \left( V q_j \right) \otimes \jacb{Q}{p_j}  . 
\end{align*}
To compute the derivative of $p$ wrt $q,Q,K$, we begin with definition of intermediate functions, 
\begin{align*}
g : \R^{t} \to \R^{t} \quad \text{where} 
\quad g(x) &= \frac{\exp{x_i}}{ \sum\limits_{j=1}^{t} \exp{x_j}}, \\
h :  \R^{d \times d} \times \R^{d \times d} \times \R^{t \times d } \to \R^{t}, \quad \text{where} \quad h(K, Q, q) &= \left( \scal{K q_i}{Q q_t} \right)_{i=1}^{t}.  
\end{align*}
Using  the above definitions, $p$ can be written as 
\begin{align*}
    p = g ( h ( K, Q, q  )). 
\end{align*}
The partial derivative of $g$ writes 
\begin{align*}
\jacb{x_k}{g_j}  &= g_j \mathbbm{1}\left( k = j \right) - g_k g_j, \\
\jacb{q_i}{h_k}  &=  \begin{cases} \, \mathbbm{1}\{ k = i
 \} \left(K^{\top} Q q_t\right)^{\top}, \quad \text{for } i \neq t, \\
(Q^{\top}K q_k)^{\top}  +  \mathbbm{1}\{ k = t
 \} \left(K^{\top} Q q_t\right)^{\top}, \quad \text{for } i = t.
\end{cases} \quad , \\
&= \mathbbm{1}\{ k = i \} \left(K^{\top} Q q_t\right)^{\top}  +   \mathbbm{1}\{ i = t \}  (Q^{\top}K q_k)^{\top}. \\
\jacb{K}{h_k}  &= \mathrm{vec} \left(Q \, q_t q_k^{\top}\right)^{\top}, \\
\jacb{Q}{h_k}  &= \mathrm{vec} \left(K \, q_k q_t^{\top}\right)^{\top}.
\end{align*}
Using the chain rule to take the derivative gives us, 
\begin{align*}
\jacb{q_i}{p_j}  &= \sum_{k = 1}^{t} \jacb{h_k}{g_j} \,  \jacb{q_i}{h_k},   \\
&=  \sum_{k = 1}^{t} \left[ 
p_j \, \mathbbm{1}\left( k = j \right) - p_k p_j \right]  \left[ \mathbbm{1}\{ k = i \} \left(K^{\top} Q q_t\right)^{\top}  +   \mathbbm{1}\{ i = t \}  (Q^{\top}K q_k)^{\top} \right] , \\
&= p_j \left[ \mathbbm{1}\{ j = i \} \left(K^{\top} Q q_t\right)^{\top}  +   \mathbbm{1}\{ i = t \}  (Q^{\top}K q_j)^{\top} \right]   \\
&\hspace{2cm} - p_ip_j  \left(K^{\top} Q q_t\right)^{\top} -  \mathbbm{1}\{ i = t \} p_{j}  \sum_{k=1}^{t}  p_{k}  (Q^{\top}K q_k)^{\top}.  \\
 \jacb{K}{p_j} &= \sum_{k = 1}^{t} \jacb{h_{k}}{p_j} \, \, \jacb{K}{h_{k}}  = \sum_{k = 1}^{t}  \left[ p_j \mathbbm{1}\left( k = j \right) - p_k p_j \right] \mathrm{vec} \left(Q \, q_t q_k^{\top}\right)^{\top}   , \\
&= p_j \mathrm{vec} \left(Q \, q_t q_j^{\top}\right)^{\top} - p_j \sum_{k=1}^{t} p_k \mathrm{vec} \left(Q \, q_t q_k^{\top}\right)^{\top}, \\
\end{align*}
Define $\hat{q} = \sum\limits_{k=1}^{t} p_k q_k $, 
\begin{align*}
\jacb{K} {p_j}  &= p_j \mathrm{vec} \left( Q q_t (q_j - \hat{q})^{\top} \right)^{\top}. 
\end{align*}
\begin{align*}
 \jacb{Q}{p_j} &= \sum_{k = 1}^{t} \jacb{h_{k}}{p_j} \, \, \jacb{Q}{h_{k}}   = \sum_{k = 1}^{t}  \left[ p_j \mathbbm{1}\left( k = j \right) - p_k p_j \right] \mathrm{vec} \left(K \, q_k q_t^{\top}\right)^{\top}  , \\
&= p_j \mathrm{vec} \left(K \, q_j q_t^{\top}\right)^{\top} - p_j \sum_{k=1}^{t} p_k \mathrm{vec} \left(Q \, q_k q_t^{\top}\right)^{\top}, \\
&= p_j \mathrm{vec} \left( K  (q_j - \hat{q}) q_t^{\top} \right)^{\top}. 
\end{align*}
Substituting the following derivates, 
\begin{align*}
\jacb{K} {p_j}  &= p_j \mathrm{vec} \left( Q q_t (q_j - \hat{q})^{\top} \right)^{\top}, \\
\jacb{Q} {p_j}  &= p_j \mathrm{vec} \left( K  (q_j - \hat{q}) q_t^{\top} \right)^{\top}. 
\end{align*}
\begin{align*}
\jacb{K}{q_t^{+}} &=  \sum_{j=1}^{t} \left( V q_j \right) \otimes  \jacb{p_j}{K}  = \sum_{j=1}^{t} p_j \left( V q_j \right) \otimes \mathrm{vec} \left( Q q_t (q_j - \hat{q})^{\top} \right)^{\top}  , \\
\jacb{Q}{q_t^{+}}  &=  \sum_{j=1}^{t} \left( V q_j \right) \otimes \jacb{Q}{p_j}  = \sum_{j=1}^{t} p_j \left( V q_j \right) \otimes \mathrm{vec} \left( K  (q_j - \hat{q}) q_t^{\top} \right)^{\top}  , \\
\end{align*}

Coming to the derivatives wrt $q_i$, we have, 
\begin{align*}
    \jacb{q_i}{p_j}  &= p_j \left[ \mathbbm{1}\{ j = i \} \left(K^{\top} Q q_t\right)^{\top}  +   \mathbbm{1}\{ i = t \}  (Q^{\top}K q_j)^{\top} \right]   \\
&\hspace{2cm} - p_ip_j  \left(K^{\top} Q q_t\right)^{\top} -  \mathbbm{1}\{ i = t \} p_{j}  \sum_{k=1}^{t}  p_{k}  (Q^{\top}K q_k)^{\top}.
\end{align*}
Writing the above expression on case by case basis, we get, 
\begin{align*}
\jacb{q_i}{p_j} &= \begin{cases}
   p_t ( 1 - p_t ) \left(K^{\top} Q q_t\right)^{\top} + p_j \left( Q^{\top} K ( q_j - \hat{q} ) \right)^{\top}, \text{ when } i = j = t, \\
   p_j ( 1 - p_j ) \left(K^{\top} Q q_t\right)^{\top}, \text{ when } i = j \neq t, \\
   - p_ip_j \left(K^{\top} Q q_t\right)^{\top}  , \text{ when }  i \neq j \quad \text{and} \quad i \neq t  , \\
   - p_t p_j \left(K^{\top} Q q_t\right)^{\top}  + p_j \left( Q^{\top} K ( q_j - \hat{q} ) \right)^{\top}, \text{ when } i = t \quad \text{and} \quad  i \neq j, \\
\end{cases}
\end{align*}
First computing the derivative wrt to $q_i$ for $i \neq t$, we get, \
\begin{align*}
\jacb{q_i}{q_t^{+}}  &= p_i \, V + \sum_{j=1}^{t} \left( V q_j \right) \otimes  \jacb{q_i}{p_j}, \\
&= p_i \, V + p_i \left( V q_i \right) \otimes  \left(K^{\top} Q q_t\right)^{\top} - p_i \sum_{j = 1}^{t}  p_j \left( V q_j \right) \otimes  \left(K^{\top} Q q_t\right)^{\top}, \\ 
&= p_i \, V + p_i \left( V q_i \right) \otimes  \left(K^{\top} Q q_t\right)^{\top} - p_i    \left( V \hat{q}\right) \otimes  \left(K^{\top} Q q_t\right)^{\top}, \\
&= p_i \, V +  p_i \left( V (q_i - \hat{q}) \right) \otimes  \left(K^{\top} Q q_t\right)^{\top} . 
\end{align*}
\begin{align*}
    \jacb{q_i}{q_t^{+}}  &= p_t \, V + p_t \left( V q_t \right)  \otimes  \left(K^{\top} Q q_t\right)^{\top}  - p_t \sum_{j = 1}^{t} p_j \left( V q_j \right) \left(K^{\top} Q q_t\right)^{\top} +  \sum_{j = 1}^{t} p_j \left( V q_j \right) \otimes \left( Q^{\top} K ( q_j - \hat{q} ) \right)^{\top}, \\
    &=  p_t \, V +  p_t \left( V (q_t - \hat{q})  \right)  \otimes  \left(K^{\top} Q q_t\right)^{\top} +  \sum_{j = 1}^{t} p_j \left( V q_j \right) \otimes \left( Q^{\top} K ( q_j - \hat{q} ) \right)^{\top}. 
\end{align*}

This proves the lemma. 
\end{proof}

\section{Higher Order Markov Chain}
\label{sec:higher_order_markov}

This section first recalls key properties of higher-order Markov chains and their stationary distributions. We then leverage these properties to establish that the $k$-gram estimator has a limit as the sequence length, $T$, approaches infinity.

\begin{definition}[Higher Order Markov Process]
    A Markov process $p_{\tau}$ of order $k$ generates a sequence of random variables $\ele{1}, \ele{2}, \ldots \in \range{\voc}$ such that the conditional distribution of $\ele{t+1}$ given $\seq{t}$ depends only on $\subseq{t-k+1}{t}$, i.e., for any $t \geq k$, 
    \begin{align*}
        \probP\left( \ele{t+1} \,  \bigg\vert \, \seq{t} \right) =  \probP\left( \ele{t+1} \,  \bigg\vert \, \subseq{t-k+1}{t} \right). 
    \end{align*}
\end{definition}

Now, we define the transition tensor with is given by $p_{\tau}$. 
\begin{definition}[Transition Tensor]
    The transition tensor of a Markov process of order $k$ is a $k$-dimensional tensor $\cP$ such that 
    \begin{align*}
        \cP_{\genseq{i}{k+1}} \coloneqq \probP\left( \ele{k+1} = i_{k+1} \,  \bigg\vert \, \seq{k} = \genseq{i}{k} \right).  
    \end{align*}
\end{definition}

Now, we lift the process into the higher dimension and write it as a simple Markov process of order $1$. We create a Markov chain ${\Xi}$ on the space $\range{S}^k$ of order $1$ Markov chain with transition matrix $\cT$. Index the states by \( \genseq{i}{k} \in \range{S}^k  \). Now the transition probabilities for this are defined as 
\begin{align*}
    \cT \left( \genseq{Y}{k} \bigg \vert \genseq{i}{k} \right) = \begin{cases}
        \neq 0 \quad \text{ if } Y^{k} = ( \gensubseq{i}{2}{k}, \genele{i}{k+1} ), \text{ for some } \genele{i}{k+1} \in \range{S}, \\
        = 0  \quad \text{ otherwise }
    \end{cases} . 
\end{align*}
The transition probabilities are explicitly given by 
\begin{align*}
    \cT \left(\gensubseq{i}{2}{k+1} \bigg \vert \genseq{i}{k}  \right) = \cP_{\genseq{i}{k+1}}. 
\end{align*}

\paragraph{Stationary distribution.} Let $\pi \in \R^{|\voc|^{k}}$ denote the stationary distribution of the chain $\Xi$, i.e., 
\begin{align*}
    \pi^{\top} \cT = \pi^{\top} . 
\end{align*}
In the summation form , for the coordinate indexed by $\gensubseq{i}{2}{k+1}$, we get that, 
\begin{align*}
    \sum_{\genseq{j}{k}} \, \pi_{\genseq{j}{k}} \cdot  \cT\left( \gensubseq{i}{2}{k+1} \, \bigg \vert \, \genseq{j}{k}   \right) = \pi_{\gensubseq{i}{2}{k+1}}.
\end{align*}
Note that $  \cT\left( \gensubseq{i}{2}{k+1} \, \bigg \vert \, \genseq{j}{k}   \right) = 0$ whenever $\gensubseq{j}{2}{k} \neq \gensubseq{i}{2}{k}$. Using this, we get, 
\begin{align*}
    \sum_{\genseq{j}{k}} \, \pi_{\genseq{j}{k}} \cdot  \cT\left( \gensubseq{i}{2}{k+1} \, \bigg \vert \, \genseq{j}{k}   \right) &=   \sum_{\genseq{j}{k}} \, \pi_{\genseq{j}{k}} \cdot  \cT\left( \gensubseq{i}{2}{k+1} \, \bigg \vert \, \genseq{j}{k}   \right) \left(  \mathbbm{1} \biggl\{  \gensubseq{j}{2}{k} \neq \gensubseq{i}{2}{k} \biggr\} + \mathbbm{1} \biggl\{  \gensubseq{j}{2}{k} = \gensubseq{i}{2}{k} \biggr\} \right), \\
    &= \sum_{\genseq{j}{k}} \, \pi_{\genseq{j}{k}} \cdot  \cT\left( \gensubseq{i}{2}{k+1} \, \bigg \vert \, \genseq{j}{k}   \right) \mathbbm{1} \biggl\{  \gensubseq{j}{2}{k} = \gensubseq{i}{2}{k} \biggr\}, \\ 
    &= \sum_{ \genele{j}{1} } \pi_{ \left( \genele{j}{1}, \gensubseq{i}{2}{k} \right) } \cT\left( \gensubseq{i}{2}{k+1} \, \bigg \vert \, \left( \genele{j}{1}, \gensubseq{i}{2}{k} \right) \right), \\
    &=   \sum_{ \genele{i}{1} } \pi_{  \genseq{i}{k}  } \cT\left( \gensubseq{i}{2}{k+1} \, \bigg \vert \, \genseq{i}{k} \right). 
\end{align*}
Using this expansion, we get the following stationarity condition, 
\begin{align}\label{eq:stationary-condition}
    \pi_{\gensubseq{i}{2}{k+1}}  &=   \sum_{ \genele{i}{1} } \pi_{  \genseq{i}{k}  } \cT\left( \gensubseq{i}{2}{k+1} \, \bigg \vert \, \genseq{i}{k} \right). 
\end{align}

\subsection{Generating Sequences with The Markov Chain \texorpdfstring{$\Xi$}{\text Xi} }
In this subsection, we show the subsequences of length $k$ follow the stationary distribution when generated from the Markov Chain $\Xi$. To generate sequences from the Markov Chain $\Xi$, we do the following generation, 
\begin{enumerate}[label = \alph*), leftmargin = 15pt]
    \item Generate a sequence of length $k$ sampling from the stationary distribution $\pi$ (say \( \genseq{i}{k} \))
    \item Sample the next token from the distribution \( \cP_{\genseq{i}{k},(.)} \) say $\genele{i}{k+1}$
    \item After generating a sequence $\genseq{i}{t}$ of length $t > k$, the next token is generated by the sampling from the distribution \( \cP_{\gensubseq{i}{t-k+1}{t},(.)} \) 
\end{enumerate}

\begin{lemma}[Stationarity of $k$-tuples]\label{lem:stationary-k-marginal}
Let \( \seq{t} \) be the random variable representing the sequences generated from  \(\Xi\),  the distribution of subsequence $ \subseq{l-k+1}{l} $ of length $k$ is given by $\pi$. 
\end{lemma}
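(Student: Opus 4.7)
The plan is to prove the lemma by induction on $l$, using the stationarity equation~\eqref{eq:stationary-condition} that was derived just above the lemma statement. The base case $l = k$ is immediate from step (a) of the generation procedure, which samples the initial $k$-tuple directly from $\pi$. The inductive step will show that if the window $\subseq{l-k+1}{l}$ has distribution $\pi$, then so does the shifted window $\subseq{l-k+2}{l+1}$.

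For the inductive step, I would marginalize over the leftmost coordinate of the larger $(k+1)$-tuple and then apply the generation rule. Concretely, for any target tuple $\gensubseq{i}{2}{k+1}$, I would write
\begin{align*}
\Pr\!\left(\subseq{l-k+2}{l+1} = \gensubseq{i}{2}{k+1}\right) &= \sum_{\genele{i}{1}} \Pr\!\left(\subseq{l-k+1}{l+1} = \genseq{i}{k+1}\right) \\
&= \sum_{\genele{i}{1}} \Pr\!\left(\ele{l+1} = \genele{i}{k+1} \,\big\vert\, \subseq{l-k+1}{l} = \genseq{i}{k}\right) \cdot \Pr\!\left(\subseq{l-k+1}{l} = \genseq{i}{k}\right).
\end{align*}
By the generation rule (step c), the conditional factor equals $\cT(\gensubseq{i}{2}{k+1} \mid \genseq{i}{k})$, and by the induction hypothesis the marginal factor equals $\pi_{\genseq{i}{k}}$. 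Substituting and invoking Equation~\eqref{eq:stationary-condition} collapses the sum to $\pi_{\gensubseq{i}{2}{k+1}}$, closing the induction.

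The main subtlety, though minor, is justifying that the conditional distribution of $\ele{l+1}$ given the full history $\seq{l}$ reduces to the conditional given only the last $k$ tokens $\subseq{l-k+1}{l}$. This is precisely the order-$k$ Markov property of the lifted chain $\Xi$ (which by construction is a first-order chain on the state space $\range{\voc}^{k}$), so it holds by definition of the generation procedure rather than requiring any extra argument. I do not anticipate any genuine obstacle: the entire content of the lemma is that the stationarity identity~\eqref{eq:stationary-condition} propagates through the sequence window by window, and the induction packages this cleanly.
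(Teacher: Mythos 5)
Your proposal is correct and follows essentially the same route as the paper's proof: induction on $l$ with the base case from step (a) of the generation, the order-$k$ Markov property to reduce conditioning on the full history to conditioning on the last $k$ tokens, the induction hypothesis to identify the law of the current window with $\pi$, and the stationarity identity~\eqref{eq:stationary-condition} to close the induction. The only difference is bookkeeping—you marginalize directly over the leftmost coordinate of the $(k{+}1)$-tuple, whereas the paper sums over the whole prefix and then regroups—which changes nothing substantive.
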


\begin{proof}
    We will prove this using induction. For the base case, observe that for $l=k$, the value is sampled from $\pi$ by construction.  Use the induction hypothesis, any $k$ length subsequence of  $\seq{l}$ obeys the law given by $\pi$. It remains to show that $\subseq{l{-}k{+}2}{l{+}1}$ are distributed according to $\pi$. 
    \begin{align*}
        \probP\left(\subseq{l{-}k{+}2}{l{+}1} = \gensubseq{i}{l{-}k{+}2}{l{+}1}\right) &=  \sum_{\genseq{i}{l-k+1}}  \probP\left(\seq{l{+}1} = \genseq{i}{l{+}1}\right), \\
        &= \sum_{\genseq{i}{l-k+1}}   \probP\left(\seq{l} = \genseq{i}{l}\right) \probP\left( \ele{l+1} = \genele{i}{l+1} \;  \bigg \vert  \;  \seq{l} = \genseq{i}{l}  \right) 
    \end{align*}
Using the Markov property of the sequence generation, we have, 
\begin{align*}
    \probP\left( \ele{l+1} = \genele{i}{l+1} \;  \bigg \vert  \;  \seq{l} = \genseq{i}{l}  \right)  = \probP\left( \ele{l+1} = \genele{i}{l+1} \;  \bigg \vert  \;  \subseq{l-k+1}{l} = \gensubseq{i}{l-k+1}{l}  \right). 
\end{align*}
Substuting this expression, we have, 
\begin{align*}
    \probP\left(\subseq{l{-}k{+}2}{l{+}1} = \gensubseq{i}{l{-}k{+}2}{l{+}1}\right) &= \sum_{\genseq{i}{l-k+1}}   \probP\left(\seq{l} = \genseq{i}{l}\right) \cP\left( \ele{l+1} = \genele{i}{l+1} \;  \bigg \vert  \;  \subseq{l-k+1}{l} = \gensubseq{i}{l-k+1}{l}  \right), \\
    &= \sum_{\genele{i}{l-k+1}}  \sum_{\genseq{i}{l-k}}   \probP\left(\seq{l} = \genseq{i}{l}\right) \probP\left( \ele{l+1} = \genele{i}{l+1} \;  \bigg \vert  \;  \subseq{l-k+1}{l} = \gensubseq{i}{l-k+1}{l}  \right), \\
    &= \sum_{\genele{i}{l-k+1}} \left[  \sum_{\genseq{i}{l-k}}   \probP\left(\seq{l} = \genseq{i}{l}\right)  \right]  \probP\left( \ele{l+1} = \genele{i}{l+1} \;  \bigg \vert  \;  \subseq{l-k+1}{l} = \gensubseq{i}{l-k+1}{l}  \right). 
\end{align*}
Using the fact that $  \sum_{\genseq{i}{l-k}}   \probP\left(\seq{l} = \genseq{i}{l}\right) = \probP\left( \subseq{l-k+1}{l} = \gensubseq{i}{l-k+1}{l} \right) $. Using the induction hypothesis, we have that $\probP\left( \subseq{l-k+1}{l} = \gensubseq{i}{l-k+1}{l} \right) = \pi_{ \gensubseq{i}{l-k+1}{l} }$.  
\begin{align*}
    \probP\left(\subseq{l{-}k{+}2}{l{+}1} = \gensubseq{i}{l{-}k{+}2}{l{+}1}\right) &= \sum_{\genele{i}{l-k+1}}  \pi_{ \gensubseq{i}{l-k+1}{l} }  \probP\left( \ele{l+1} = \genele{i}{l+1} \;  \bigg \vert  \;  \subseq{l-k+1}{l} = \gensubseq{i}{l-k+1}{l}  \right), \\
    &= \sum_{\genele{i}{l-k+1}}  \pi_{ \gensubseq{i}{l-k+1}{l} }  \cT \left( \gensubseq{i}{l-k+2}{l+1}  \;  \bigg \vert  \;  \gensubseq{i}{l-k+1}{l}  \right).
\end{align*}
Using the Eq.~\eqref{eq:stationary-condition}, 
\begin{align*}
    \probP\left(\subseq{l{-}k{+}2}{l{+}1} = \gensubseq{i}{l{-}k{+}2}{l{+}1}\right) &= \sum_{\genele{i}{l-k+1}}  \pi_{ \gensubseq{i}{l-k+1}{l} }  \cT \left( \gensubseq{i}{l-k+2}{l+1}  \;  \bigg \vert  \;  \gensubseq{i}{l-k+1}{l}  \right), \\
    &= \pi_{ \gensubseq{i}{l-k+2}{l+1} }.
\end{align*}
This proves the hypothesis for length $l+1$. Hence, by induction, the hypothesis holds for any length. 
\end{proof}

\begin{lemma}[Stationarity of sub-$k$-tuples] \label{lem:shift-invariant-marginals}
For any $t \geq 0$ and $l \leq k$,  we have the following shift invariant property for the marginals,
\begin{align*}
    \probP\left( \seq{l} = \genseq{i}{l} \right) = \probP\left( \subseq{t}{t+l-1} = \genseq{i}{l} \right)
\end{align*}
\end{lemma}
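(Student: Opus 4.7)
The plan is to derive the claim directly from Lemma~\ref{lem:stationary-k-marginal} by a single marginalization. The key observation is that, because $l \leq k$, any length-$l$ contiguous window $\subseq{t}{t+l-1}$ appears as the first $l$ coordinates of the length-$k$ contiguous window $\subseq{t}{t+k-1}$, and every such length-$k$ window is distributed as $\pi$: for $t \geq 2$ this is exactly Lemma~\ref{lem:stationary-k-marginal}, while for the boundary case $t=1$ it is the initialization step of the generative procedure, which directly sets $\seq{k} \sim \pi$.

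Concretely, for an arbitrary $\genseq{i}{l} \in \range{\voc}^l$, I would write
\begin{align*}
\probP\!\left(\subseq{t}{t+l-1} = \genseq{i}{l}\right)
&= \sum_{\gensubseq{j}{l+1}{k}} \probP\!\left(\subseq{t}{t+k-1} = \left(\genseq{i}{l}, \gensubseq{j}{l+1}{k}\right)\right) \\
&= \sum_{\gensubseq{j}{l+1}{k}} \pi_{\left(\genseq{i}{l},\, \gensubseq{j}{l+1}{k}\right)},
\end{align*}
using Lemma~\ref{lem:stationary-k-marginal} in the second equality. The right-hand side depends only on $\genseq{i}{l}$ and not on the starting index $t$, so specializing to $t=1$ shows that it equals $\probP(\seq{l} = \genseq{i}{l})$, which is the desired identity.

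I do not anticipate a serious technical obstacle here. Once the stationarity of every length-$k$ window is in place, the statement for $l \leq k$ is a one-step consequence of summing out the last $k-l$ coordinates. The only mildly delicate point is keeping track of the two distinct reasons why a length-$k$ window is distributed as $\pi$ (namely, initialization at $t=1$ versus the inductive argument of Lemma~\ref{lem:stationary-k-marginal} for $t \geq 2$), but both fold into the same identity and require no extra machinery.
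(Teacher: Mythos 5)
Your proposal is correct and follows essentially the same route as the paper: invoke Lemma~\ref{lem:stationary-k-marginal} to get that every contiguous length-$k$ window is distributed as $\pi$, then marginalize out the last $k-l$ coordinates, and observe the result is independent of the starting index $t$. The paper's proof performs exactly this summation over $\gensubseq{i}{l+1}{k}$, so no difference in substance.
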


\begin{proof}
Using Lemma~\ref{lem:stationary-k-marginal}, we have that the distribution of the subsequence $\subseq{t}{t+k-1}$ is given by $\pi$ for any $t \geq 0$, i.e., 
\begin{align*}
    \probP\left( \seq{k} = \genseq{i}{k} \right) = \probP\left( \subseq{t}{t+k-1} = \genseq{i}{k} \right) = \pi_{\genseq{i}{k}}.
\end{align*}
Summing the above expression over all possible values of $\gensubseq{i}{l+1}{k}$, we get, 
\begin{align*}
    \sum_{\gensubseq{i}{l+1}{k}} \probP\left( \seq{k} = \genseq{i}{k} \right) = \sum_{\gensubseq{i}{l+1}{k}} \probP\left( \subseq{t}{t+k-1} = \genseq{i}{k}  \right) = \sum_{\gensubseq{i}{l+1}{k}} \pi_{\genseq{i}{k}}.
\end{align*}
We know that \begin{align*} \sum_{\gensubseq{i}{l+1}{k}} \probP\left( \seq{k} = \genseq{i}{k} \right) &= \probP\left( \seq{l} = \genseq{i}{l} \right) , \\ \sum_{\gensubseq{i}{l+1}{k}} \probP\left( \subseq{t}{t+k-1} = \genseq{i}{k} \right) &= \probP\left( \subseq{t}{t+l-1} = \genseq{i}{l} \right) \end{align*}
This proves the lemma.
\end{proof}

\subsection{Lower-order Conditional Probabilities}
The following lemma show that the lower order conditional probabilities are time homogenous and are given by the stationary distribution $\pi$. 
\begin{lemma}\label{lem:lower-order-conditional}
For the sequences generated by the markov chain $\Xi$, for $l \leq k$, we have, 
    \begin{align*}
        \probP \left(   \ele{T+1} = . \,\, \bigg \vert \,\, \subseq{T-l+1}{T} = \genseq{i}{l} \right) = \probP \left(   \ele{l+1} = . \,\, \bigg \vert \,\, \seq{l} = \genseq{i}{l} \right) = \frac{\sum\limits_{\gensubseq{i}{2}{k{-}l}} \pi_{\gensubseq{i}{2}{k+1}}}{\sum\limits_{\genseq{i}{k{-}l}} \pi_{\genseq{i}{k}}}
    \end{align*} 
\end{lemma}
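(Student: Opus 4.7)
The plan is to prove the two equalities in sequence, using the definition of conditional probability together with the shift invariance of marginals (Lemma~\ref{lem:shift-invariant-marginals}) and the stationarity of $k$-tuples (Lemma~\ref{lem:stationary-k-marginal}). The first equality is a time-homogeneity statement, and the second is a direct closed-form computation of the relevant marginals in terms of $\pi$.

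First, I would expand each conditional via the definition of conditional probability as a ratio of joints:
\begin{align*}
\probP\!\left(\ele{T+1} = i_{l+1} \,\big\vert\, \subseq{T-l+1}{T} = \genseq{i}{l}\right) = \frac{\probP\!\left(\subseq{T-l+1}{T+1} = (\genseq{i}{l}, i_{l+1})\right)}{\probP\!\left(\subseq{T-l+1}{T} = \genseq{i}{l}\right)},
\end{align*}
and analogously for the right-hand conditional with $T$ replaced by $l$. For the denominators, Lemma~\ref{lem:shift-invariant-marginals} applied at subsequence length $l \leq k$ immediately shows the two coincide. For the numerators, which have length $l+1$, the same lemma applies directly whenever $l+1 \leq k$. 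The only subtle case is $l = k$, where the length-$(l+1)$ subsequence exceeds the regime of Lemma~\ref{lem:shift-invariant-marginals}; here I would first invoke the Markov property to factor $\probP(\subseq{T-k+1}{T+1} = \cdot) = \probP(\subseq{T-k+1}{T} = \cdot)\cdot \cP$, and then apply shift invariance only to the length-$k$ piece. This establishes the first equality.

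For the second equality, Lemma~\ref{lem:stationary-k-marginal} says that $\seq{k}$ is distributed as $\pi$, so marginalizing over the coordinates unconstrained by the conditioning gives
\begin{align*}
\probP\!\left(\seq{l} = \genseq{i}{l}\right) = \sum_{j_{l+1}, \dots, j_{k}} \pi_{(\genseq{i}{l},\, j_{l+1},\, \dots,\, j_{k})},
\end{align*}
and analogously for the length-$(l+1)$ marginal in the numerator, up to an extra transition factor $\cP$ in the boundary case $l = k$. Substituting these identities back into the ratio produces the claimed closed form. The main obstacle is not mathematical depth but notational bookkeeping: cleanly treating the boundary $l = k$ (where a single Markov transition must be inserted because Lemma~\ref{lem:stationary-k-marginal} only controls tuples of length exactly $k$) and keeping straight which indices are summed over versus fixed by the conditioning when writing the final expression.
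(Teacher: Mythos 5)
Your proposal is correct and follows essentially the same route as the paper: express the conditional as a ratio of marginals, invoke the shift-invariance of marginals (Lemma~\ref{lem:shift-invariant-marginals}) together with the stationarity of $k$-tuples (Lemma~\ref{lem:stationary-k-marginal}) to write both marginals as sums of $\pi$, and treat $l=k$ separately via the order-$k$ Markov property. Your explicit handling of the boundary case is, if anything, slightly more careful than the paper's, which simply asserts the $l=k$ case from the Markov property.
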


\begin{proof}
    For $l=k$, this holds due the Markov property of order k and 
    \begin{align*}
        \probP \left(   \ele{T+1} = . \,\, \bigg \vert \,\, \subseq{T-k+1}{T} = \genseq{i}{k} \right) = \probP \left(   \ele{k+1} = . \,\, \bigg \vert \,\, \seq{k} = \genseq{i}{k} \right). 
    \end{align*}
    Now, for $l < k$, we have, 
    \begin{align*}
        \probP \left( \ele{T+1} = \genele{i}{l+1} \,\, \bigg \vert \,\, \subseq{T-l+1}{T} = \genseq{i}{l} \right) &= \frac{ \probP \left( \subseq{T-l+1}{T+1} = \genseq{i}{l+1} \right) }{ \probP \left( \subseq{T-l+1}{T} = \genseq{i}{l} \right) },    \end{align*}
    Using the shift-invariant property of the marginals, we have that,
    \begin{align*}
        \probP \left( \subseq{T-l+1}{T+1} = \genseq{i}{l+1} \right) &= \probP \left( \seq{l+1} = \genseq{i}{l+1} \right) = \sum_{\gensubseq{i}{l+2}{k}} \pi_{\genseq{i}{k}}, \\
        \probP \left( \subseq{T-l+1}{T} = \genseq{i}{l} \right) &= \probP \left( \seq{l} = \genseq{i}{l} \right) = \sum_{\gensubseq{i}{l+1}{k}} \pi_{\genseq{i}{k}}.
    \end{align*}
    This proves the lemma. 
\end{proof}

\textbf{Conditional Probability for Any Subset of Sequence.}
Previously, we defined it for contiguous history; now, we can  extend the definition to any subset of sequence as well. 
Consider any set $\cK \subseteq \range{0,\voc-1}$, $\range{\voc}{-}\cK$ is an ordered set defined as $\{ k - i : i \in \cK \}$. We define the sequence $\seq{\range{\voc}{-}\cK}$ as the ordered set $\{ \ele{i} : i \in \range{\voc}{-}\cK \}$. We define the conditional probability of the next token given the sequence $\seq{\range{\voc}{-}\cK}$ as
\begin{align*}
    \cP_{\range{\voc}{-}\cK}\left( \genele{i}{\range{\voc}{-}\cK} , \genele{i}{k+1} \right) &\coloneqq \probP\left( \ele{k+1} = \genele{i}{k+1} \, \vert \, \seq{\range{\voc}{-}\cK} = \genseq{i}{\range{\voc}{-}\cK} \right), \\ \\
    &= \frac{ \probP\left( \ele{k+1}  = \genele{i}{k+1}, \seq{\range{\voc}{-}\cK} = \genseq{i}{\range{\voc}{-}\cK}  \right) }{\probP\left(\seq{\range{\voc}{-}\cK} = \genseq{i}{\range{\voc}{-}\cK} \right)}, \\ \\
    &= \frac{ \sum\limits_{\genseq{i}{\left( \range{\voc}{-}\cK  \right)^{\mathsf{c}} }} \probP\left( \seq{k+1}  = \genseq{i}{k+1} \right) }{\sum\limits_{\genseq{i}{\left( \range{\voc}{-}\cK  \right)^{\mathsf{c}} }} \probP\left( \seq{k} = \genseq{i}{k} \right)}.
\end{align*}

\textbf{Consistency of the sub-{k} counting estimators.}
Here, we use the ergodic properties of the chain $\Xi$ to show that the lower order estimators convergence as $T \to \infty$.

\begin{lemma} \label{lem:counting_consistency}
    Consider the $l{+}1$-gram estimator $\phat_{l}$ for $0 < l \leq k-1$, then  as $T \to \infty$, $\phat_{l}(\seq{T}) \to p_{\tau}(. | \seq{l})$.  
\end{lemma}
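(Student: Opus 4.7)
The plan is to recognize $\phat_l$ as a ratio of empirical averages of indicator functions along the trajectory of the lifted Markov chain $\Xi$ on $[S]^k$, and then apply the ergodic theorem to pass to the limit $T\to\infty$.

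First I would rewrite the $(l+1)$-gram estimator, conditional on the observed suffix $\subseq{T-l+1}{T} = \gensubseq{i}{l}$, as
\begin{align*}
\phat_l(x_{T+1}{=}i\mid\seq{T}) = \frac{\tfrac{1}{T}\sum_{j=l+1}^{T}\mathbbm{1}\{\subseq{j-l}{j-1}=\gensubseq{i}{l},\,x_j=i\}}{\tfrac{1}{T}\sum_{j=l+1}^{T}\mathbbm{1}\{\subseq{j-l}{j-1}=\gensubseq{i}{l}\}}.
\end{align*}
Since $l \leq k-1$, each indicator depends on at most $l+1\leq k$ consecutive tokens, so it is a deterministic function $f(\Xi_{j})$ of a single state of the lifted chain $\Xi$ (where $\Xi_j=\subseq{j-k+1}{j}\in[S]^k$).

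Second, I would invoke the ergodic theorem for $\Xi$. Under mild positivity assumptions on the prior $\mathcal{P}$ (e.g.\ Dirichlet with positive concentration, as used in the experiments), the transition tensor $\cP$ has strictly positive entries almost surely, which makes $\Xi$ irreducible and aperiodic on $[S]^k$, hence admits a unique stationary distribution $\pi$. Moreover, by Lemma~\ref{lem:stationary-k-marginal}, the law of $\Xi_j$ is already $\pi$ for every $j\geq k$, so the chain is started in stationarity. The ergodic theorem then yields, almost surely,
\begin{align*}
\tfrac{1}{T}\sum_{j}\mathbbm{1}\{\subseq{j-l}{j-1}=\gensubseq{i}{l}\}\,&\longrightarrow\,\mathbb{P}_\pi\!\left(\subseq{j-l}{j-1}=\gensubseq{i}{l}\right),\\
\tfrac{1}{T}\sum_{j}\mathbbm{1}\{\subseq{j-l}{j-1}=\gensubseq{i}{l},\,x_j=i\}\,&\longrightarrow\,\mathbb{P}_\pi\!\left(\subseq{j-l}{j}=(\gensubseq{i}{l},i)\right).
\end{align*}

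Third, I would identify these two limits with marginals of $\pi$, invoking Lemma~\ref{lem:shift-invariant-marginals} to recognize them as $\mathbb{P}(\seq{l}=\gensubseq{i}{l})$ and $\mathbb{P}(\seq{l+1}=(\gensubseq{i}{l},i))$ respectively, and then invoking Lemma~\ref{lem:lower-order-conditional} to identify the ratio as exactly $p_\tau(x_{l+1}=i\mid\seq{l}=\gensubseq{i}{l})$. By the continuous mapping theorem applied to the ratio (valid because the denominator converges to a strictly positive limit under positivity of $\pi$), the convergence of $\phat_l$ to the target conditional probability follows.

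The main obstacle is ensuring ergodicity of $\Xi$, since this requires a positivity statement about the randomly drawn transition tensor $\cP$; this is straightforward under standard priors but must be made explicit. A secondary subtlety is that the conditioning event $\subseq{T-l+1}{T}=\gensubseq{i}{l}$ involves the endpoint of the same trajectory used for the averaging, but since this event has positive stationary probability and only affects an $\cO(1)$-size boundary effect in the sums, it does not disturb the ergodic limit.
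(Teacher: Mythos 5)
Your proposal is correct and follows essentially the same route as the paper: both rewrite $\phat_l$ as a ratio of counts of $k$-tuples along the lifted chain $\Xi$, apply the ergodic theorem so that each normalized count converges to the corresponding entry (or marginal) of $\pi$, and then identify the limiting ratio with $p_\tau(\cdot\mid\seq{l})$ via Lemma~\ref{lem:lower-order-conditional}. You are in fact slightly more explicit than the paper about the hypotheses (positivity of the transition tensor ensuring irreducibility, the continuous mapping step for the ratio, and the boundary effect from conditioning on the trajectory's own suffix), which the paper handles implicitly by citing ergodicity.
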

\begin{proof}
    Using the ergodicity of the chain $\Xi$~\citep{penev1991efficient}, the $k{+}1-$gram estimator converges to the stationary distribution as it equivalent to counting the empirical frequency on the chain of higher order. 
    Now we write the $l-$gram estimator in the following way:
    \begin{align*}
    \hat{p}_l( \genele{i}{k+1} | \gensubseq{i}{k-l+1}{k}) = \frac{\sum_{j=1}^{T} \mathbbm{1} \{ \subseq{j-l+1}{j+1} = \gensubseq{i}{k-l+1}{k+1} \} }{\sum_{j=1}^{T} \mathbbm{1}  \{ \subseq{j-l+1}{j} = \gensubseq{i}{k-l+1}{k} \} } 
    \end{align*}
    Now we can write the following expression as 
    \begin{align*}
        \sum_{j=1}^{T} \mathbbm{1} \{ \subseq{j-l+1}{j+1} = \gensubseq{i}{k-l+1}{k+1} \} = \sum_{\gensubseq{i}{2}{l}} \sum_{j=1}^{T}   \mathbbm{1} \{ \subseq{j-k+1}{j+1} = \gensubseq{i}{2}{k+1} \}, \\
                \sum_{j=1}^{T} \mathbbm{1} \{ \subseq{j-k+1}{j} = \gensubseq{i}{k-l+1}{k+1} \} = \sum_{\genseq{i}{l}} \sum_{j=1}^{T}   \mathbbm{1} \{ \subseq{j-k+1}{j} = \genseq{i}{k} \}.
    \end{align*}
    Now, using the ergodicity of the lifted markov chain, we can say that 
    \begin{align*}
        \frac{\sum_{j=1}^{T}   \mathbbm{1} \{ \subseq{j-k+1}{j+1} = \gensubseq{i}{2}{k+1} \}}{T} \,\, \overset{t \to \infty}{\rightarrow} \,\, \pi_{\gensubseq{i}{2}{k+1}}, \\
        \sum_{\gensubseq{i}{2}{l}} \frac{\sum_{j=1}^{T}   \mathbbm{1} \{ \subseq{j-k+1}{j+1} = \gensubseq{i}{2}{k+1} \}}{T} \,\, \overset{t \to \infty}{\rightarrow}   \sum_{\gensubseq{i}{2}{l}} \pi_{\gensubseq{i}{2}{k+1}}
    \end{align*}
    Using the similar argument, the ratio can now be written as 
    \begin{align*}
        \frac{\sum_{j=1}^{T} \mathbbm{1} \{ \subseq{j-l+1}{j+1} = \gensubseq{i}{k-l+1}{k+1} \} }{\sum_{j=1}^{T} \mathbbm{1}  \{ \subseq{j-l+1}{j} = \gensubseq{i}{k-l+1}{k} \} } = \frac{\sum_{\gensubseq{i}{2}{l}} \pi_{\gensubseq{i}{2}{k+1}}} {\sum_{\genseq{i}{l}} \pi_{\genseq{i}{k}}} = p_{\tau}( \ele{l+1} = \genele{i}{k+1} | \seq{l} = \gensubseq{i}{k-l+1}{k}) 
    \end{align*}
    The last equality flows from the lemma~\ref{lem:lower-order-conditional}. 
\end{proof}

\section{More Experiments}

\begin{figure*}[t]
\centering
\begin{minipage}{0.29\textwidth}
\begin{subfigure}{\linewidth}
   \includegraphics[width=\textwidth]{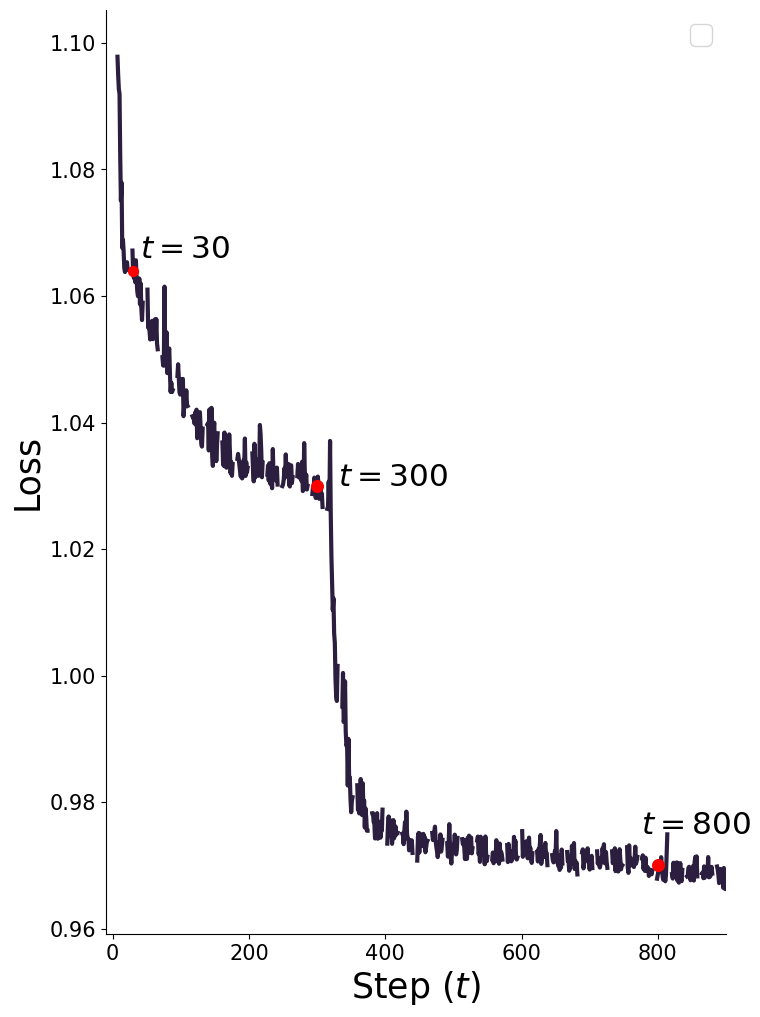}
    \caption{}
\end{subfigure}
\end{minipage}
\hspace{0.02\textwidth}
\begin{minipage}{0.6\textwidth}
\rotatebox[origin=c]{90}{
        \parbox{\textboxsizefigtwodouble}{
        \centering 
        \footnotesize{Attention Scores \\
        \vspace{0.1cm}
        \parbox{\textboxsizefigtwotight}{\centering\hspace{4pt}\footnotesize{$h=2$}}
        \hfill
        \parbox{\textboxsizefigtwotight}{\centering \footnotesize{$h=1$}}}}
        \hspace{-8cm}
    }
    \hfill
\begin{subfigure}{\linewidth}    
        \raggedright
        \includegraphics[height=\heightsizetwo]{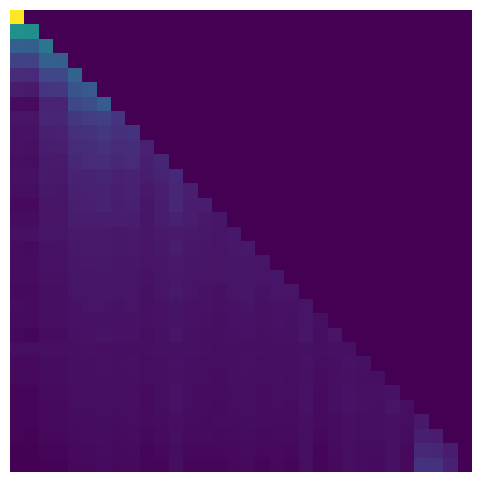}
        \includegraphics[height=\heightsizetwo]{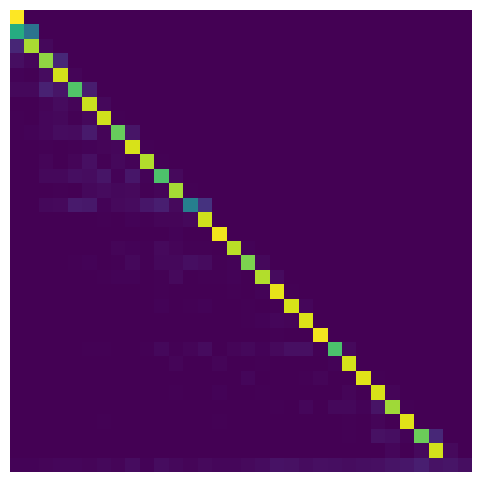}
        \includegraphics[height=\heightsizetwo]{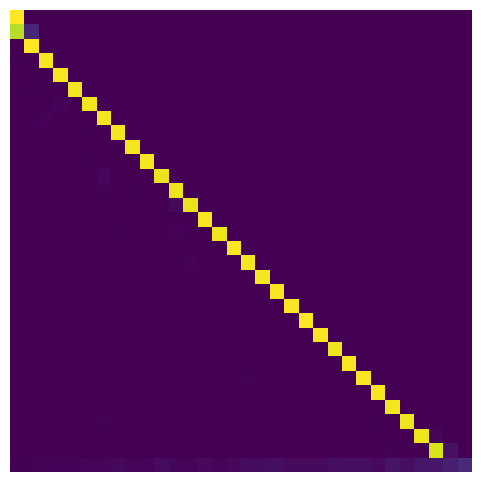}
        \\
        \includegraphics[height=\heightsizetwo]{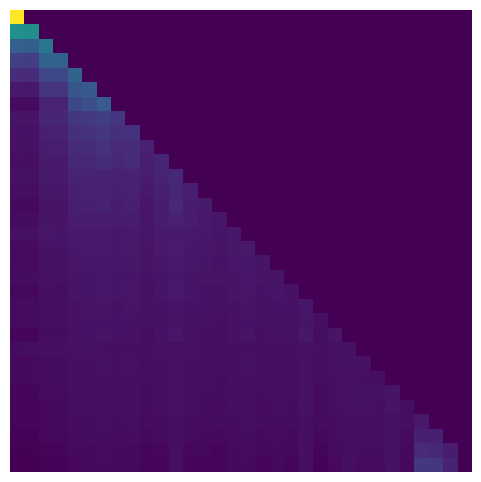}
        \includegraphics[height=\heightsizetwo]{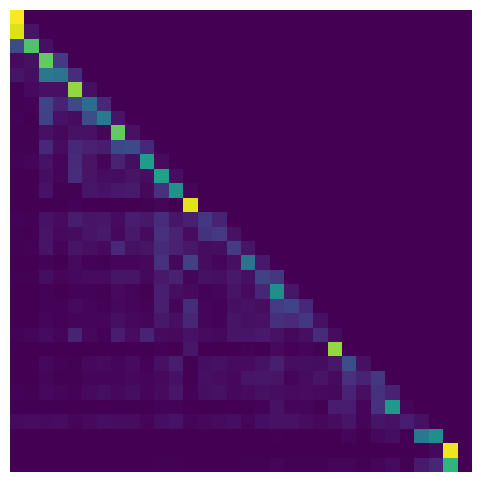}
        \includegraphics[height=\heightsizetwo]{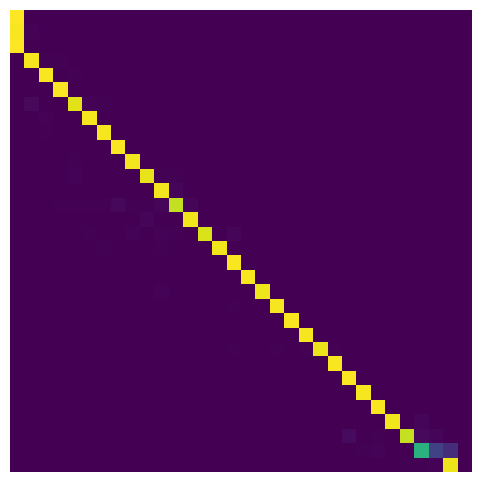}
    \\
        \parbox{\textboxsizefigtwotriple}{
        \centering
        \footnotesize{
        \parbox{\textboxsizefigtwotight}{\centering\hspace{4pt}\footnotesize{$t=115$}}
        \hfill
        \parbox{\textboxsizefigtwotight}{\centering \footnotesize{$t=5000$}}
        \parbox{\textboxsizefigtwotight}{\centering \footnotesize{$t=16000$}} \\
        \vspace{0.1cm}
        \centering
        Training Step (t)}}
        \hspace{-4cm} 
        \caption{}
\end{subfigure}
\end{minipage}
\begin{minipage}{0.29\textwidth}
\begin{subfigure}{\linewidth}
   \includegraphics[width=\textwidth]{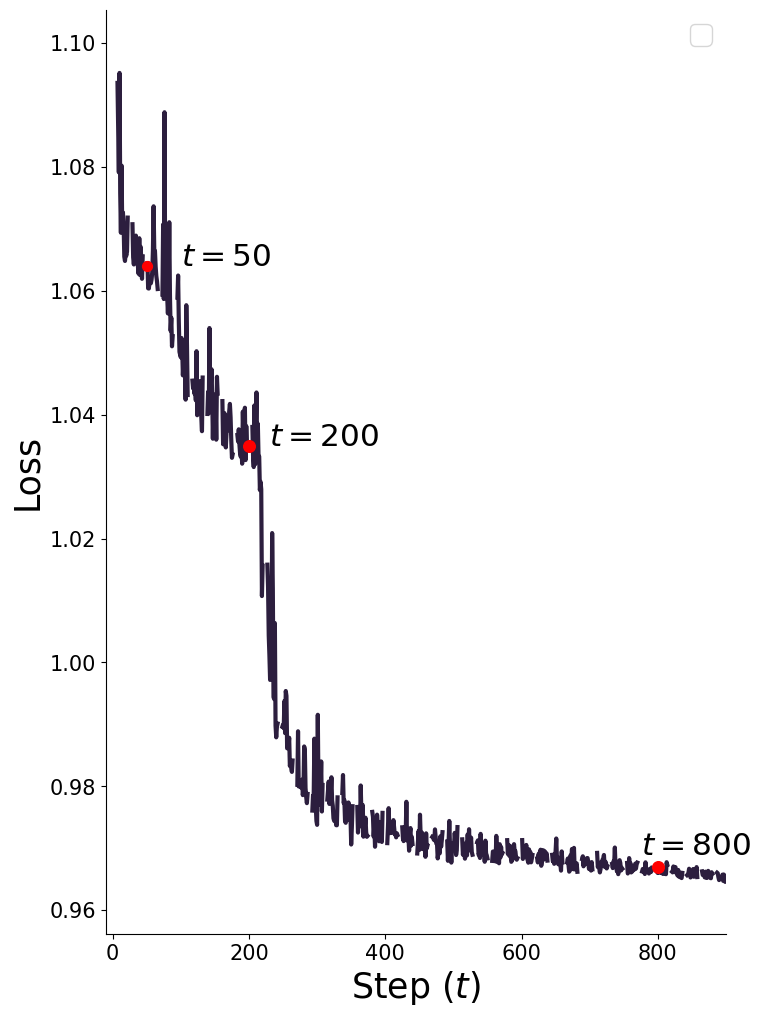}
    \caption{}
\end{subfigure}
\end{minipage}
\hspace{0.02\textwidth}
\begin{minipage}{0.6\textwidth}
\rotatebox[origin=c]{90}{
        \parbox{\textboxsizefigtwodouble}{
        \centering 
        \footnotesize{Attention Scores\\
        \vspace{0.1cm}
        \parbox{\textboxsizefigtwotight}{\centering\hspace{4pt}\footnotesize{$h=2$}}
        \hfill
        \parbox{\textboxsizefigtwotight}{\centering \footnotesize{$h=1$}}}}
        \hspace{-8cm}
    }
    \hfill
\begin{subfigure}{\linewidth}    
        \raggedright
        \includegraphics[height=\heightsizetwo]{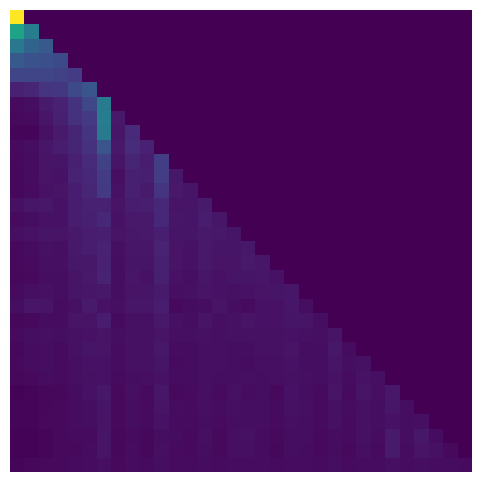}
        \includegraphics[height=\heightsizetwo]{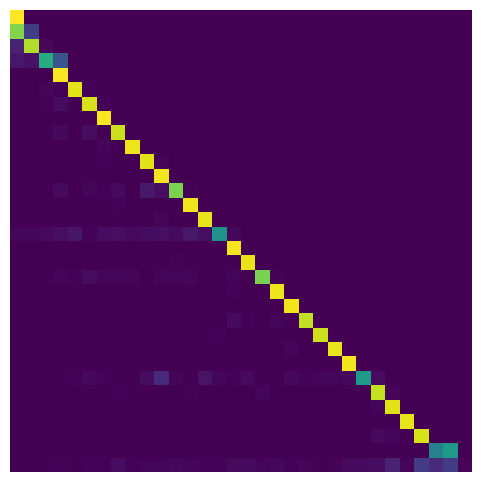}
        \includegraphics[height=\heightsizetwo]{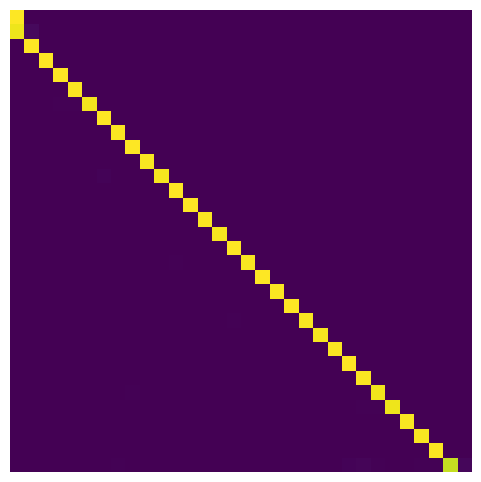}
        \\
        \includegraphics[height=\heightsizetwo]{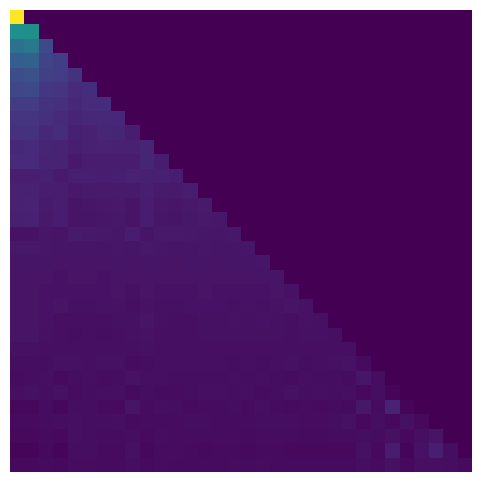}
        \includegraphics[height=\heightsizetwo]{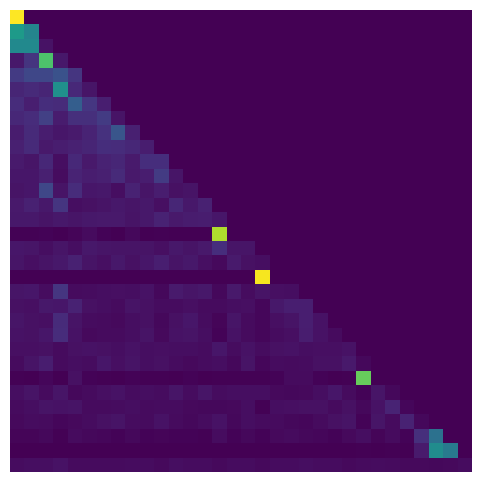}
        \includegraphics[height=\heightsizetwo]{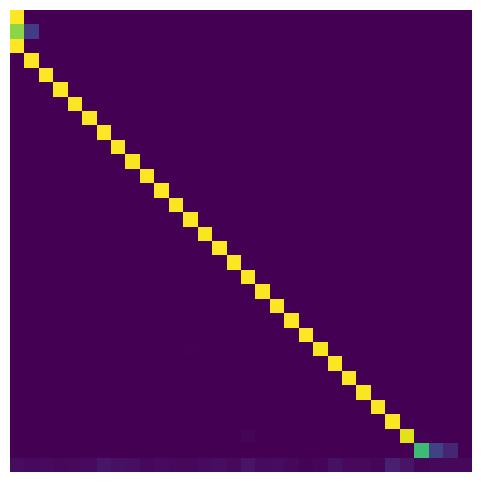}
    \\
        \parbox{\textboxsizefigtwotriple}{
        \centering
        \footnotesize{
        \parbox{\textboxsizefigtwotight}{\centering\hspace{4pt}\footnotesize{$t=115$}}
        \hfill
        \parbox{\textboxsizefigtwotight}{\centering \footnotesize{$t=5000$}}
        \parbox{\textboxsizefigtwotight}{\centering \footnotesize{$t=16000$}} \\
        \vspace{0.1cm}
        \centering
        Training Step (t)}}
        \hspace{-4cm} 
        \caption{}
\end{subfigure}
\end{minipage}   

    \vspace{-.3cm}
    \caption{The evolution of the attention heads in the first layer during training of an attention-only transformer with one-hot (a-b) and learned embeddings(c-d). (a-c) Progression of the test loss during training. The highlighted points are the iterations on the plateaus for which we demonstrate the attention matrices. (b-d) The evolution of attention scores of the heads
    of the transformer during training representing the tokens it is attending. First, both of the attention heads attend to all the previous tokens uniformly, i.e., the induction heads are not formed. At the second plateau, they both attend to the previous token, or one head is not formed yet while the other attends to the previous token. Finally, as the model escapes this plateau, the second attention head learns to attend to $(-2)$-token at the end of training. }
        \vspace{-.5cm}
    \label{fig:experimentattention_transformer}
\end{figure*}
\ifarxiv
    \def \figsizetwo{10cm}
    \def \heightsizetwo{2.8cm}
    \def \textboxsizefigtwodouble{5.6cm}
    \def \textboxsizefigtwotriple{8.4cm}
    \def \textboxsizefigtwofive{14.0cm}
    \def \textboxsizefigtwotight{2.7cm}
\else
    \def \figsizetwo{10cm}
    \def \heightsizetwo{2.8cm}
    \def \textboxsizefigtwodouble{6.2cm}
    \def \textboxsizefigtwotriple{9.3cm}
    \def \textboxsizefigtwofive{15.5cm}
    \def \textboxsizefigtwotight{2.7cm}
\fi

\begin{figure*}[t]
\centering
\begin{minipage}{0.95\textwidth}
\rotatebox[origin=c]{90}{
        \parbox{\textboxsizefigtwodouble}{
        \centering 
        \footnotesize{Attention Scores \,\,\, $\sf\left( {\attnh{1}{h}}\right) $ \\
        \vspace{0.1cm}
        \parbox{\textboxsizefigtwotight}{\centering\hspace{4pt}\footnotesize{$h=2$}}
        \hfill
        \parbox{\textboxsizefigtwotight}{\centering \footnotesize{$h=1$}}}}
        \hspace{-8cm}
    }
    \hfill
\begin{subfigure}{\linewidth}    
        \raggedright
        \includegraphics[height=\heightsizetwo]{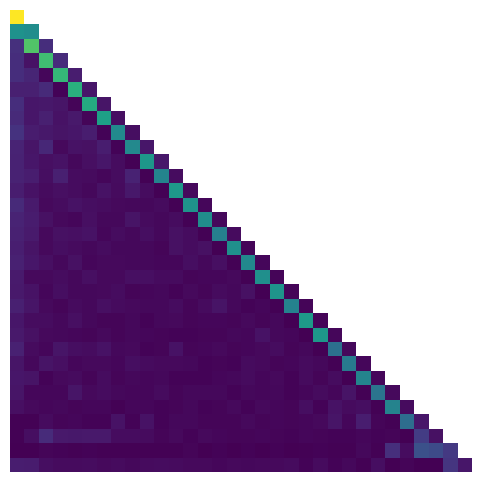}
        \includegraphics[height=\heightsizetwo]{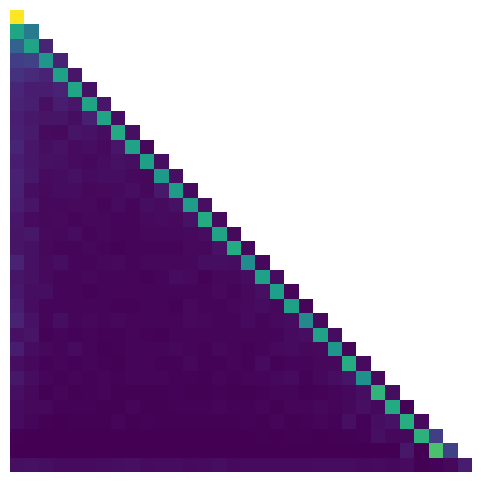}
        \includegraphics[height=\heightsizetwo]{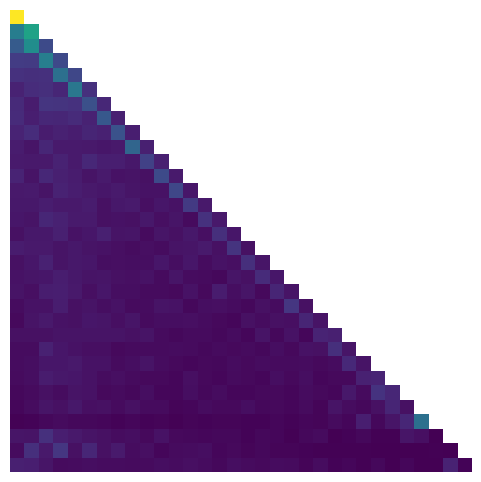}
        \includegraphics[height=\heightsizetwo]{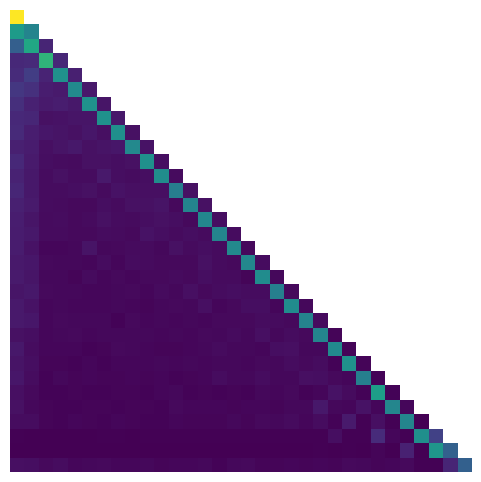}
        \includegraphics[height=\heightsizetwo]{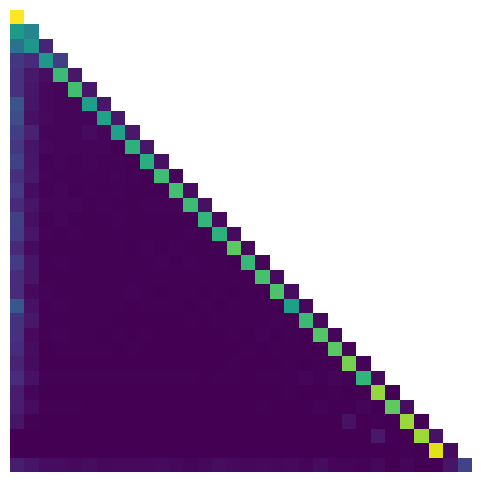}
        \\
        \includegraphics[height=\heightsizetwo]{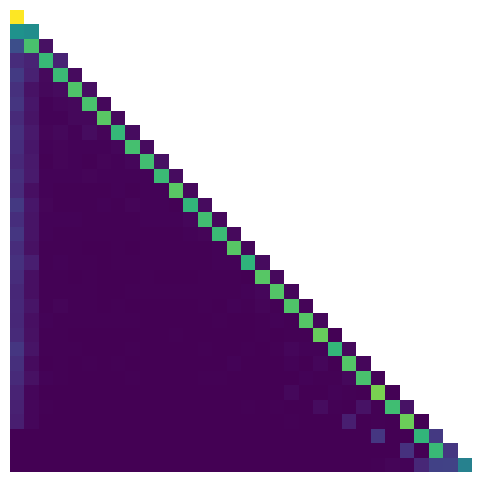}
        \includegraphics[height=\heightsizetwo]{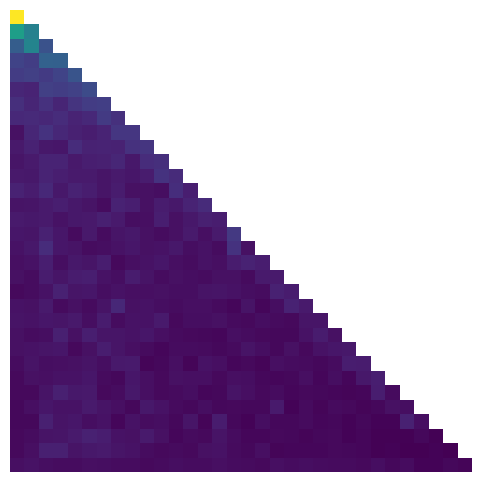}
        \includegraphics[height=\heightsizetwo]{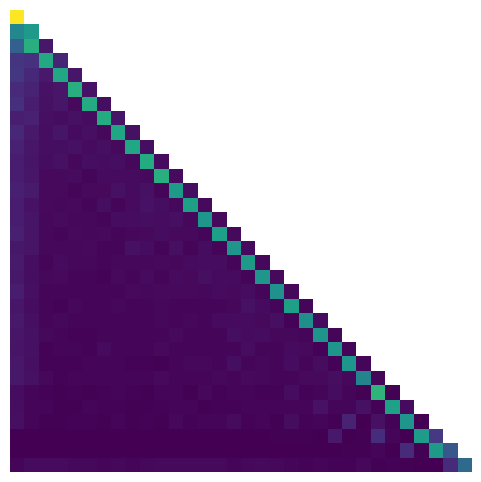}
        \includegraphics[height=\heightsizetwo]{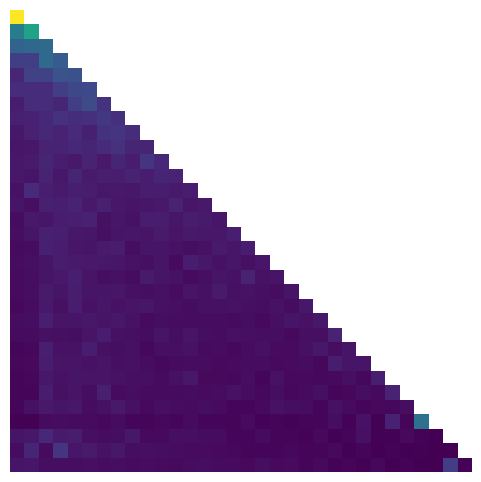}
        \includegraphics[height=\heightsizetwo]{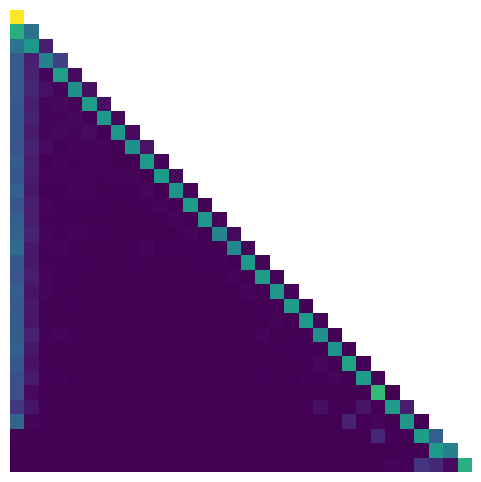}
    \\
        \parbox{\textboxsizefigtwofive}{
        \centering
        \footnotesize{
        \parbox{\textboxsizefigtwotight}{\centering\hspace{4pt}\footnotesize{$s=1$}}
        \hfill
        \parbox{\textboxsizefigtwotight}{\centering \footnotesize{$s=2$}}
        \parbox{\textboxsizefigtwotight}{\centering \footnotesize{$s=3$}}
        \parbox{\textboxsizefigtwotight}{\centering \footnotesize{$s=4$}}
        \parbox{\textboxsizefigtwotight}{\centering \footnotesize{$s=5$}}\\
        \vspace{0.1cm}
        \centering
        Random Seed (\(s\))}}
        \hspace{-4cm} 
        \caption{}
    \end{subfigure}
\end{minipage}
    \vspace{-.3cm}
    \caption{Attention maps of the two heads in the second plateau for different random seeds denoted by \(s\). It shows how the transformers attends to $(-1)$-token first and never attends the $(-2)$-token before attending $(-1)$-token across 5 random seeds.}
        \vspace{-.5cm}
    \label{fig:contiguous}
\end{figure*}

\subsection{Experiments with Attention Only Transformer} \label{subsec:attention-only}
In this section, we repeat the main experiment in the paper with the common non-disentangled transformer architecture, demonstrating the generality of our results. Figure~\ref{fig:experimentattention_transformer}, shows the evolution of attention matrices during training with both one-hot and learned embeddings for the general architecture.

For the transformer experiments, we use vocabulary size $\cS = 3$. The length of the input sequences is $T=32$, and the sequences are sampled from an in-context tri-gram language model, i.e., $n=3$.  The transition matrices for a fixed context are sampled from a Dirichlet prior with $\alpha = 1$.
The embedding dimension is set to $\din = \cS + T$, to be consistent in both setups, and we use one-hot or learned embeddings for both positional and semantic embeddings in different experiments. The transformer is trained with Adam with a weight decay of \(0.0001\) for $4096$ iterations, with a constant learning rate of $0.005$ and a batch size of $128$. The test loss is evaluated over $2^{16}$ test sequences.

\subsection{The Contiguous Solutions are Preferred during Training} \label{subsec:contiguous}

In Figure~\ref{fig:contiguous}, we repeat the procedure in Figure~\ref{fig:experimentattention} for different seeds and plot the attention heads in the second plateau. Through all the experiments transformer more often attends to the (\(-1\))-token first, rather than the (\(-2\))-token. Recall that the underlying mechanism behind the sub-$n$-gram estimators is checking if the history of the token $x_{T+1}$ and $x_j$ matches and adding $x_j$ if they do. We conjecture that, since the token $x_T$ is always provided through the skip connection, it is easier to learn to match $x_T$ with $x_{j-1}$ than, for example, $x_{T-1}$ and $x_{j-2}$.

\subsection{Plot of norms of the gradients along the trajectory}

In Figures~\ref{fig:gradient_norm_simple},~\ref{fig:gradient_norm_transformer}, we plot the norm of the gradient during training along the similar lines as \citep{odonnat2025clusteringheadvisualcase}. The plots demonstrate how the norm of gradients stays low during the plateau stages and spikes during the jump between the plateaus.
\begin{figure}[H] 
    \centering
    \includegraphics[width=0.7\textwidth]{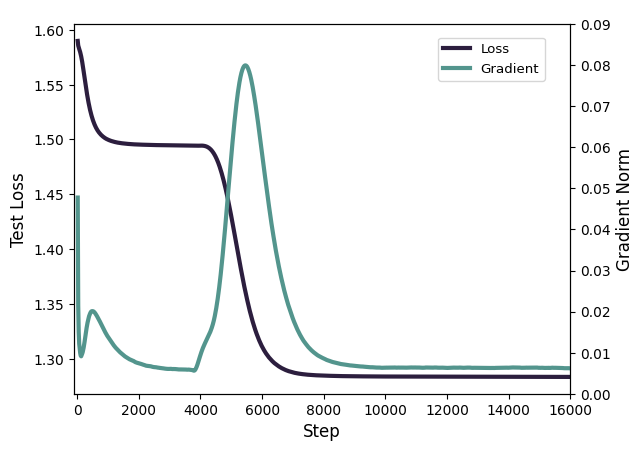} 
    \vspace{-1mm}
    \caption{\textbf{Norms of gradients for simplified transformer.} A two layer simplified transformer for $S=5$ and a sequence length of $L=128$ from a $3$-gram language model. The plots show that the norm of gradients stays low during the plateau stages and spikes during the jump between the plateaus.}
    \label{fig:gradient_norm_simple}
\end{figure}
\vspace{-2mm}
\begin{figure}[H]
    \centering
    \includegraphics[width=0.7\textwidth]{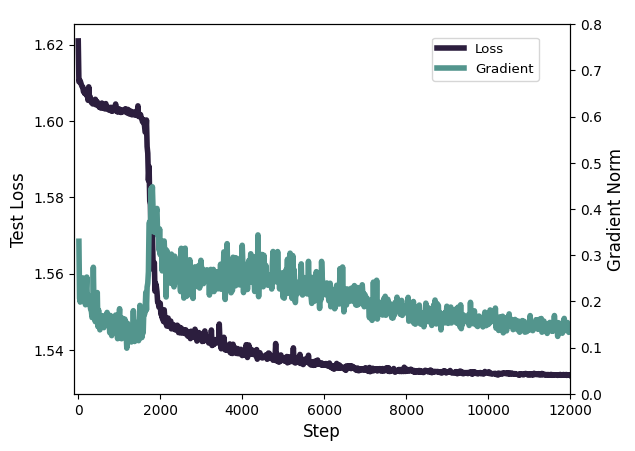} 
    \vspace{-2mm}
    \caption{\textbf{Norms of gradients for transformers with MLP layers.}  A two layer simplified transformer for $S=5$ and a sequence length of $L=64$ from a $3$-gram language model. The plots show that the norm of gradients stays low during the plateau stages and spikes during the jump between the plateaus. However, there is only a single intermediate plateau, unlike attention-only transformers, as a single head attends to both the $(-1,-2)$-tokens, and it emerges after the plateau. The plateau corresponds to the unigram.}
    \label{fig:gradient_norm_transformer}
\end{figure}



\end{document}
